\pgfplotsset{compat=1.17}
\definecolor{nmgray}{RGB}{229,229,229}
\crefname{figure}{Fig.}{Figs.}
\definecolor{color1}{HTML}{105e8a}
\definecolor{color2}{HTML}{e99926}
\definecolor{color3}{HTML}{b82a0c}
\definecolor{color4}{HTML}{3e8a10}
\definecolor{color5}{HTML}{80037e}
\newif\iffinal
    \newcommand{\authnote}[2]{}
    \newcommand{\blue}[1]{}
    \newcommand{\thang}[1]{}
    \newcommand{\chicheng}[1]{}
    \newcommand{\hao}[1]{}
    \newcommand{\red}[1]{}
    \newcommand{\edit}[2]{}
    \newcommand{\ming}[1]{}
    \newcommand{\authnote}[2]{{#1: #2}}
    \newcommand{\blue}[1]{{\color{blue}{#1}}}
    \newcommand{\thang}[1]{{\color{blue}\authnote{Thang}{#1}}}
    \newcommand{\chicheng}[1]{{\color{teal}\authnote{Chicheng}{#1}}}
    \newcommand{\hao}[1]{{\color{orange}\authnote{Hao}{#1}}}
    \newcommand{\red}[1]{{\color{red}{#1}}}
    \newcommand{\edit}[2]{{\color{color4}{\dashuline{#1}}}{\color{red}{#2}}}
    \newcommand{\ming}[1]{{\color{green}\authnote{Ming}{#1}}}
\newcommand{\prgreedy}{\ensuremath{\textsc{PR-Greedy}}\xspace}
\newcommand{\pretc}{\ensuremath{\textsc{PR-ETC}}\xspace}
\newcommand{\cumulativeregret}{\ensuremath{\textit{regret}}\xspace}
\newcommand{\sse}{\ensuremath{\text{SSE}}\xspace}
\renewcommand{\emph}[1]{\textit{#1}}
\begin{document}

\title{Physics-Informed Parametric Bandits for Beam Alignment in mmWave Communications}

\author{ 
        Hao Qin*,
        Thang Duong*,
        Ming F. Li\IEEEmembership{Fellow,~IEEE,},
        Chicheng Zhang
        }

\markboth{Journal of \LaTeX\ Class Files,~Vol.~14, No.~8, August~2015}%
{Shell \MakeLowercase{\textit{et al.}}: Bare Demo of IEEEtran.cls for IEEE Journals}

\maketitle

\begin{abstract}
In millimeter wave (mmWave) communications, beam alignment and tracking are crucial to combat the significant path loss. 
As scanning the entire directional space is inefficient, designing an efficient and robust method to identify the optimal beam directions is essential. 
Since traditional bandit algorithms require a long time horizon to converge under large beam spaces, many existing works propose efficient bandit algorithms for beam alignment by relying on unimodality or multimodality assumptions on the reward function's structure.
However, such assumptions often do not hold (or cannot be strictly satisfied) in practice, which causes such algorithms to converge to choosing suboptimal beams.

In this work, we propose two physics-informed bandit algorithms \pretc and \prgreedy that exploit the sparse multipath property of mmWave channels  -  a generic but realistic assumption - which is connected to the Phase Retrieval Bandit problem. Our algorithms treat the parameters of each path as black boxes and maintain optimal estimates of them based on sampled historical rewards. \pretc starts with a random exploration phase and then commits to the optimal beam under the estimated reward function. \prgreedy performs such estimation in an online manner and chooses the best beam under current estimates. Our algorithms can also be easily adapted to beam tracking in the mobile setting. 
Through experiments using both the synthetic DeepMIMO dataset and the real-world DeepSense6G dataset, we demonstrate that both algorithms outperform existing approaches in a wide range of scenarios across diverse channel environments, showing their generalizability and robustness. 
\end{abstract}

\IEEEpeerreviewmaketitle

\section{Introduction}

Millimeter-wave (mmWave) communication is a key technology for 5G and FutureG wireless networks, enabling a wide range of applications that demand high bandwidth and low latency. Immediate applications in 5G include \citep{al2021overview}: Enhanced Mobile Broadband (eMBB), providing last-mile connectivity via Fixed Wireless Access (FWA), and wireless backhaul/fronthaul. It will also enable future use cases including Augmented and Virtual Reality (AR/VR), Vehicular Communications (V2X), and holographic telepresence. The main challenge is that mmWave signals, which operate at high frequencies (typically 30-300 GHz), suffer from significantly higher path loss compared to the lower-frequency signals. Thus, to counteract such signal degradation, mmWave systems rely on beamforming with high-gain antenna arrays, whose effectiveness hinges on the accurate alignment of the transmitter's and receiver's beams.   

Beam alignment in mmWave communications has remained a challenging problem. This is mainly because the beam patterns are so narrow that even a slight misalignment of the beam can result in a significant loss in signal strength, leading to a link failure. For example, \citet{nitsche2015steering} reported that with a 7-degree beam width, an 18-degree misalignment can reduce the link signal strength by 17 dB. 
Aligning the beam accurately with the direction that provides the highest gain is crucial to fully harness the potential of next-generation communication hardware. Another challenge comes from environmental/channel dynamics due to mobility or blockage.   When the user or base station is moving, even slightly, the optimal beam direction changes rapidly. Temporary blockages, such as a passing car or a person walking by, can disrupt the line-of-sight (LoS) path. The system must constantly track such dynamics and realign the beams in real-time to maintain the connection.  
 
Many beam alignment algorithms have been proposed in the past. For example, offline training-based methods  (such as beam scanning) have been proposed, which are adopted by existing standards including 802.11ad  \citep{nitsche2014ieee} and 5G NR \citep{parkvall2018nr}.  However, such methods are inefficient as they incur high overhead (linear to the number of beams, $K$) due to the large beam space, leading to reduced throughput. To fulfill real-time communication requirements, there is a critical need to develop an algorithm to select the best beam in a sample-efficient and online manner. In this paper, we investigate online beam alignment in a short-horizon setting.

Existing work has cast the online beam alignment problem as a Multi-Armed Bandit (MAB) problem \cite[e.g.,][]{hashemi2018efficient,wu2019fast}. 
The goal is for the learning agent to adaptively choose the arms (i.e., beams) in an online manner that yield the highest expected reward (i.e., received signal strength) through repeated interactions and feedback.
Many algorithms for the MAB problem have been proposed, such as Upper Confidence Bound (UCB)~\citep{lai1987adaptive,auer2002finite} and Explore-Then-Commit (ETC)~\citep{langford2007epoch,lattimore2020bandit}. However, they require a long time to converge. 
To improve sample efficiency, \citet{yu2011unimodal,cutkosky2023blackbox} studies the setting where the expected reward is a unimodal function of the arm (i.e., it has only one peak, corresponding to the LoS/dominant path direction), a framework adopted by several prior works in beam alignment~\citep{hashemi2018efficient,ghosh2024ub3}. Theoretically, unimodal bandit problems achieve regret guarantees independent of the number of arms. 
Subsequent works~\citep{saber2024bandits} extend this to the setting where the reward is a multimodal function of the arm, however, they typically have restrictive assumptions about a known (or a bound) number of peaks. 

Nevertheless, existing MAB approaches and their extensions are insufficient for mmWave beam alignment problems, due to the following:

(1) The real-world mmWave channel induces reward functions that are often not unimodal.  Even with only a Line-of-Sight (LoS) path, because of the antenna sidelobes, the reward function contains many local peaks -- see \Cref{fig:reward_func}. In addition, the mmWave signal propagation is generally modeled using a geometric-based statistical model which includes multiple reflection paths (or clusters) \citep{rappaport2017overview,gustafson2013mm}, which is more complex than the LoS channel model that unimodal bandits rely on. 

In reality, the channel parameters, such as the number of paths, reflection coefficients, and the path-loss exponents, all depend on the environment, which cannot be exactly known in advance.   
    Previous methods that assume unimodality or multimodality \citep{yu2011unimodal,hashemi2018efficient,cutkosky2023blackbox} fail to generalize under such realistic channel settings.

(2) Real-world applications of beam alignment (such as V2X) oftentimes require low latency for real-time applications, and require selecting good beams in a short time horizon, such as under 100 ms, which implies a few tens of time steps (or subframes in 5G NR)~\citep{hassanieh2018fast,mazaheri2019millimeter}. 

\begin{figure}[t]
    \centering
    \includegraphics[width=0.9\linewidth]{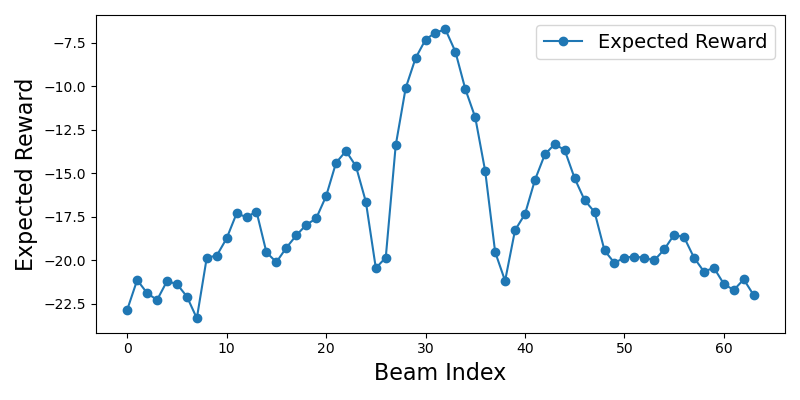}
    \caption{The expected reward function of a bandit instance from the DeepSense 6G dataset (scenario 17). Such reward function is not unimodal, and thus algorithms that rely on unimodality assumptions may converge to choosing a suboptimal beam.} 
    \label{fig:reward_func}
\end{figure}
  
In this paper, inspired by works on parametric bandits~\citep{filippi2010parametric}, we propose physics-informed parametric bandit algorithms, \pretc and \prgreedy, for beam selection by exploiting the underlying reward function structure similar to the Phase Retrieval (PR) Bandit problem. 
We found that the electromagnetic wave propagation model commonly used in far-field  mmWave channel models~\citep{samimi2014characterization}
can be viewed as a phase-retrieval model and solved using a similar approach for parametric bandits in \citep{filippi2010parametric}, while taking into consideration the properties of phase-array antennas.
Empirically, we demonstrate that leveraging the unimodal or multimodal property is not as efficient and robust as leveraging the underlying beam pattern and wave propagation structure.

The contributions of this paper are summarized as follows:
\begin{itemize}
    \item   We propose two physics-informed parametric-bandit algorithms for mmWave beam alignment:   \prgreedy and  \pretc, that leverage the sparse multipath channel model in mmWave and knowledge of the codebook/beam patterns.  While \prgreedy achieves a lower empirical regret, \pretc is a more computationally efficient approximation of \prgreedy that slightly trades off performance for lower computational latency.

  \item We theoretically analyze the regret of \pretc, and show that under reasonable assumptions on the properties of the reward function, it can be upper bounded by $O(k^{1/3} T^{2/3})$ (where $k$ is the number of propagation paths, $T$ is the time horizon), which does not depend on the number of beams. We also provide a preliminary analysis on the regret guarantee of PR-GREEDY.

    \item  We evaluate our algorithms using two large-scale synthetic and real-world datasets: DeepMIMO, and DeepSense6G. Results demonstrate the adaptability and robustness of our algorithms across 4,952 bandit instances from the DeepMIMO simulated environment and 12 bandit instances from DeepSense6G, using a default hyperparameter, which significantly outperform several baselines in terms of sample efficiency. 
    Results under mobile scenarios in DeepSense6G show that our algorithms can quickly adapt to mobility and blockage using a periodical reset strategy. 
    We also show that \prgreedy and \pretc is robust under model misspecification, without imposing exact assumptions on the underlying structure of the channel/reward function (such as the number of paths, the number of peaks, etc.). 
\end{itemize}

\section{Related work}

For offline beam training, although previous works proposed improved offline algorithms with sub-linear complexity (e.g., hierarchical beam search \citep{hassanieh2018fast} or compressive sensing \citep{hekaton} by exploiting the sparse nature of the mmWave channel, they require advanced hardware (such as multi-beam antenna arrays, or require hybrid beamforming), and are not very robust nor sample-efficient under noisy channels. In contrast, our algorithms assume analog beamforming and only need basic hardware - a phase shift antenna array.

For online beam alignment in mmWave communications, many prior works leverage the unique structure of the reward function to improve the online learning regret, such unimodality~\citep{hashemi2018efficient,combes2014unimodal,ghosh2024ub3}, multimodality~\citep{saber2024bandits}, and weak-Lipschitzness of the reward function~\citep{wu2019fast}, respectively,
while~\citet{zhao2022hierarchical,zhang2021mmwave} utilizes hierarchical organization of codebooks.   However, the real-world mmWave channel's RSS is often not strictly unimodal or multi-modal, and these algorithms are not robust under model mis-specification, as we show later in our results.   

Unimodal or multi-modal bandits have their roots in machine learning literature. 
\citet{yu2011unimodal,cutkosky2023blackbox} focus on bandit with a unimodal structure of the reward function. \citet{yu2011unimodal} proposes the Line Search Elimination Algorithm (LSE) to efficiently select the action with the highest expected reward by leveraging the unimodal property in general, not specifically designed for beam alignment.
The noiseless and noisy 1-d convex bandit algorithms in \citep{agarwal2011stochastic} can also be used to exploit the unimodal structure. \citet{cutkosky2023blackbox} proposes a more sample-efficient approach, but it maintains expected reward estimates with a large collection of arms, which limits its application in real-time deployment. Recent work of \citet{saber2024bandits} extends this line of work to handle reward function with multimodal rewards; thus far, the analysis has been focusing on the long-horizon asymptotic regime. 

For the nonstationary reward distribution setting, \citet{liu2017changedetectionbasedframeworkpiecewisestationary,krunz2023online} tackle this using a nonstationary multi-armed bandit formulation by focusing on action selection using the more recent data. In light of the observation that some contextual information may be available in beam selection (e.g. moving direction of the vehicle UE is at), \citet{sim2018online} solve this using a contextual bandit formulation. \citet{jeong2020online} formulate joint beam selection and beam pattern optimization as a two-level deep reinforcement learning problem, showing its effectiveness in beamtracking problem. \citet{zhang2021mmwave} solves the beam pattern design problem using deep reinforcement learning and show the learned codebooks can better serve mobile users. 
Overall, we view existing techniques to extend to the mobile setting as orthogonal to the contribution of our work and can be combined with our methods.

\section{Problem formulation}
\label{sec:prelims}

\subsection{System Model}

We study the beam alignment or tracking problem, where a base station (BS)   communicates with a user equipment (UE) in the mmWave band (e.g., 28GHz in 5G NR  FR2, or  60 GHz in 802.11ad). 
The UE can be stationary or mobile. The BS is equipped with an  antenna array of $N$ elements and a single RF chain (the UE can also be equipped with an array, but usually has a smaller number of antenna elements).
The BS performs transmit or receive beam steering (corresponding to downlink or uplink transmissions) by choosing a steering vector
$\fbf \in \CC^{N}$
from a known codebook $\Fcal = \cbr{\fbf_a: a \in [K]}$, to maximize the communication quality (or received signal strength, RSS). 
\footnote{We assume that the UE adopts an omni-directional or quasi-omni beam pattern. Our method can be easily generalized to joint beam selection on both ends. }

We consider a far-field channel, which applies when the distance between the transmitter and receiver is significantly larger than the Fraunhofer distance \citep{rappaport2017overview} (e.g., in outdoor settings, the UE is far away from the BS). For mmWave communication, a geometric channel model is often used \citep{samimi2014characterization,rappaport2017overview}, which contains a limited number of propagation paths or scatters.  Such paths can be the Line-of-Sight (LoS) path or other paths induced by reflectors. 
See \Cref{fig:system-model} for an illustration. 

\begin{figure}[t]
    \centering
    \includegraphics[width=1\linewidth]{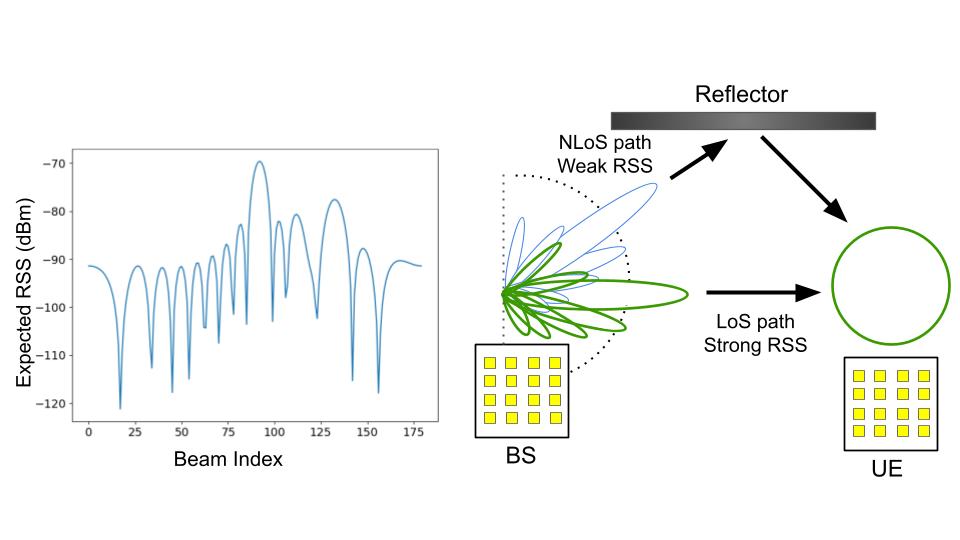}
    \caption{An overview of our system model.
    }
    \label{fig:system-model}
\end{figure}

Physically, each steering vector corresponds to a different pattern of phase shifts on the $N$ antenna elements. Given a steering vector $\fbf_a$, its \emph{beam pattern} is a function that maps every possible angle of UE $\theta$ to received signal amplitude through line of sight at unit distance: 
\[
h_a(\theta) = \fbf_a^\top v(\theta),
\]
where $v(\theta) \in \CC^N$ 
represents the array response vector with AoA/AoD equal to $\theta$.

We model the channel between the BS and UE, when the BS chooses the steering vector $\fbf_a$, as a combination of channels induced by $k$ paths: 
\[
h_a(\bm \theta, \bm \beta)
= 
\sum_{i=1}^k \beta_i h_a(\theta_i^*).
\]
Here, $\bm \theta^*:= (\theta_1^*, \ldots, \theta_k^*)$ denotes the AoA or AoD of each path, each chosen from a discrete set $\Theta \subset [0, 2\pi]$; 
$\bm \beta := (\beta_1^*, \ldots, \beta_k^*)$ denotes the complex gain of each path (including the path loss and possible reflection gain), with each $\beta_i^*$ lying in a grid $B \subset \CC$. 
In practice, a small value of $k$ models the channel well, i.e., $k\leq5$.

The received signal strength (in dBm) with antenna configuration $\fbf_a$ is expressed as \Cref{eqn:noisy-reward}:   
\begin{align} 
    r
    =&
    30 + 10 \log_{10} \rbr{\abs{ h_a(\bm \theta^*, \bm \beta^*)}^2} +\eta, 
        \label{eqn:noisy-reward}
\end{align}
where $\eta$ is a zero-mean Gaussian noise.

We define the expected signal strength as a function of antenna configuration $\fbf_a$ and channel with parameters $(\bm \theta^*, \bm \beta^*)$ as: 
\begin{align} 
    R(\fbf_a, \bm \theta^*, \bm \beta^*)
    \coloneqq&
    30 + 10 \log_{10} \rbr{\abs{h_a(\bm \theta^*, \bm \beta^*)}^2}
        \label{eqn:noiseless-reward}
\end{align}

Initially, the base station does not have any knowledge about the channel parameters $(\bm \theta^*, \bm \beta^*)$. To choose the best steering vector, we aim at learning these  
$2k$ parameters. This makes our method applicable to different path loss exponents and carrier frequencies. In addition, as we will see, our method treats each path as a black box and doesn't need to know the details of each path; so it works as long as the beam pattern for each antenna configuration is known. 

\textit{Example: uniform linear array}. 
Suppose the base station's antenna is a uniform linear array (ULA) with $N = 2\bar{N} + 1$, with antenna elements indexed as $n \in \cbr{-\bar{N}, \ldots, 0, \ldots, \bar{N}}$. 
In this case, the array response vector
$v(\theta)$ has the following explicit form~\citep[e.g.,][]{liu2023near}: 
\[
v(\theta)
= 
\del{e^{-j\frac{2\pi}{\lambda} n d \cos\theta} }_{n=-\tilde{N}}^{\tilde{N}},
\]
where $\lambda$ denotes the wavelength of the carrier, $j$ is the imaginary unit, $n \in \cbr{-\bar{N}, \ldots, 0, \ldots, \bar{N}}$ is the index of the antenna element (with $0$ representing the central element). For ULAs, a popular choice of the steering vectors $\fbf_a$ is of the Discrete Fourier Transform form: 
$
    \fbf_a \coloneqq \del{e^{j \frac{2\pi}{\lambda} (nd \cos{\pi\frac{a}{K}})}}_{n=-\bar{N}}^{\bar{N}}
$; as we can see, $\fbf_a$ with $a = K \theta$ maximizes $|h_a(\theta)| = |\fbf_a^\top v(\theta)|$, representing the beam that has the highest RSS when the AoD / AoA is equal to $\theta$. 

\subsection{Beam Alignment as a Parametric Bandit Problem}

\textit{Learning protocol}: 
For each time step $t = 1,\ldots, T$, the BS chooses beam index $a_t \in [K]$ and  transmits a probe/data frame tagged with this index,  and 
receives signal strength feedback  
$r_t = R(\fbf_{a_t}, \bm \theta_t^*, \bm \beta_t^*) + \eta_t$, where $\eta_t \sim N(0, \sigma^2)$, and $\bm \theta_t^*, \bm \beta_t^*$ is the channel parameter at time step $t$.
In the \emph{stationary reward distribution setting}, we have $(\bm \theta_1^*, \bm \beta_1^*) = \ldots = (\bm \theta_T^*, \bm \beta_T^*) = (\bm \theta^*, \bm \beta^*)$. In contrast, the \emph{changing reward distribution setting} allows the $(\bm \theta_t^*, \bm \beta_t^*)$'s to be time-varying, which captures settings that the transmitter or the receiver is moving or experiences blockage by moving objects. 

In the stationary reward distribution setting, our goal is to minimize the \cumulativeregret, which is the cumulative gap of signal strength between the perfectly aligned and the steering vector selected by the algorithm up to time step $T$, defined as: 
\begin{align*}
    \Regret_T
    \coloneqq& \sum_{t=1}^T \max_{a \in [K]} R(\fbf_a, \bm \theta^*, \bm \beta^*) - R(\fbf_{a_t}, \bm \theta^*, \bm \beta^*) 
\end{align*}
Similarly, in the changing reward distribution setting our goal is to minimize the \emph{dynamic regret}, defined as: 
\begin{align*}
    \mathrm{DynamicRegret}_T
    \coloneqq& \sum_{t=1}^T \max_{a \in [K]} R(\fbf_a, \bm \theta_t^*, \bm \beta_t^*) - R(\fbf_{a_t}, \bm \theta_t^*, \bm \beta_t^*) 
\end{align*}

\section{Algorithms}

\subsection{Algorithms for the Stationary Reward Distribution Setting}

For the static setting, we introduce two versions of physics-informed parametric bandit algorithms: Phase Retrieval Explore-then-Commit (\pretc, \Cref{alg:glm-etc}) and Phase Retrieval Greedy (\prgreedy, \Cref{alg:glm-greedy}), which correspond to Explore-then-commit (ETC) and Greedy strategies.
\pretc follows an ETC~\citep{lattimore2020bandit} style via selecting beams uniformly at random for $M$ time steps and collecting reward feedback to estimate $\bm \beta^*, \bm \theta^*$.
By choosing $M$ appropriately, it balances exploration and exploitation. 
\prgreedy focuses on exploitation by choosing beams greedily according to the current RSS estimate 
and updating its parameters at every time step.

Note that our algorithms are compatible with mmWave communication standards. For example, in our \pretc algorithm, the probing frames can be implemented by Synchronization Signal Block (SSB) in 5G NR which does not contain actual data and are very short (\textasciitilde 1 ms each). For \prgreedy, the beam index can be included in the header of a data subframe. RSS/reward feedback $r_t$ (defined in \Cref{eqn:noisy-reward}) can be carried in an ACK packet in that time step.

Both algorithms use MLE to produce estimates $(\hat{\bm \beta},  \hat{\bm \theta})$ on the channel parameters, which we provide details below.

\subsubsection{Channel Estimation using Maximum Likelihood Estimate (MLE)}

Suppose that we have a dataset of $m$ action-reward pairs $S_{m} \coloneqq \cbr{a_t, r_t}_{t=1}^{m}$.
We aim to find the estimates $\hat{\bm \beta}_m, \hat{\bm \theta}_m$ by maximizing the likelihood function $\Lcal$, which, according to Eq.~\eqref{eqn:noisy-reward}, is: 
\begin{align*}
    \Lcal\del{\bm \beta, \bm \theta \mid S_{m}} \coloneqq \prod_{t=1}^{m} \frac{1}{\sqrt{2 \pi \sigma^2}} \expto{-\frac{(r_t - R(\fbf_{a_t}, \bm \beta, \bm \theta)^2}{2\sigma^2}}.
\end{align*}

Note that maximizing the likelihood function $\Lcal\del{\bm \beta, \bm \theta \mid S_{m}}$ over the parameter space is equivalent to minimizing the square loss:
\begin{align}
    & \argmax_{\bm \theta, \bm \beta} \Lcal\del{\bm \beta, \bm \theta \mid S_{m}} 
    =
    \argmin_{\bm \theta, \bm \beta} \sum_{t=1}^{m} \del{r_t - R(\fbf_{a_t}, \bm \beta, \bm \theta)}^2
        \label{eqn:optimization-target}
\end{align}

\begin{algorithm}[t]
\begin{algorithmic} 
\STATE \textbf{Input:} Number of paths $k$, number of beams $K$, codebook matrix $F$, 
exploration length $M$

\FOR{$t=1,2,\cdots,M$}
    \STATE Choose beam $a_t \sim U([K])$, where $U$ denotes the uniform distribution
    \STATE Receive reward \( r_t \).
\ENDFOR

\STATE Estimate parameters $\hat{\bm \beta}_M, \hat{\bm \theta}_M$ by solving \Cref{eqn:optimization-target}.

\FOR{$t=M+1,\ldots,T$}

\STATE Choose beam $a_t := \argmax_{a \in [K]} R(\fbf_a, \hat{\bm \beta}_M, \hat{\bm \theta}_M)$
\ENDFOR
\end{algorithmic}
\caption{
\pretc
} 
\label{alg:glm-etc}
\end{algorithm}

\begin{algorithm}[t]
\begin{algorithmic} 
\STATE \textbf{Input:} Number of paths $k$, number of beams $K$, codebook matrix $F$, initial parameter $\hat{\bm \beta}_0$ and $\hat{\bm \theta}_0$,
\FOR{$t=1,2,\cdots,T$}
    \STATE Choose beam $a_t := \argmax_{a \in [K]} R(\fbf_a, \hat{\bm \beta}_{t-1}, \hat{\bm \theta}_{t-1} )$
    \STATE Receive reward \( r_t \).
    \STATE Estimate parameters $\hat{\bm \beta}_t, \hat{\bm \theta}_t$ by solving \Cref{eqn:optimization-target}.
\ENDFOR
\end{algorithmic}
\caption{
\prgreedy
} 
\label{alg:glm-greedy}
\end{algorithm}

Note that \pretc only solves this optimization problem once, which is at the last time step $M$ of the exploration phase, while \prgreedy solves the optimization problem $T$ times.
Thus, the computational cost of \pretc can be much smaller than \prgreedy.

\subsubsection{Regret guarantee for \pretc} 

We establish regret guarantees of \pretc under three key assumptions. Below we use the convention that $\log$ function applies logarithm to a vector in an entrywise manner.

\begin{assum}[Bounded reward function]
For all $\bm \theta \in B^k$ and $\bm \beta \in \Theta^k$, and steering vector $\fbf_a$,  $\abs{R(\fbf_a, \bm \theta^*, \bm \beta^*)} \leq R_{\max}$.
\label{assum:bounded}
\end{assum}

\cref{assum:bounded} states that all reward functions under consideration are bounded; in practice, $R_{\max}$ can be chosen as a constant, e.g. 70 (dBm). 

\begin{assum} \label{assum:quadratic-lower-bound}
There exist constants $C_1$ and $C_2$ such that the following inequality holds: $\forall
        \bm \beta \in B^k, 
        \bm \theta \in \Theta^k,
        \theta_1 < \ldots, \theta_k$
    \begin{align}
        & C_1 \| \bm \theta^* - \bm \theta \|^2 + C_2 \| \log \bm \beta^* - \log \bm \beta \|^2 \nonumber \\
        \leq & 
        \EE_{a \sim \text{Unif}(\Acal)} \del{R(\fbf_a, \bm \theta^*, \bm \beta^*) - R(\fbf_a, \bm \theta, \bm \beta)}^2, 
        \label{eqn:RHS-assumption1}
    \end{align}
\end{assum}

\Cref{assum:quadratic-lower-bound} informally states that the landscape of the mean square error of predicted expected reward of all beams
as a function of $(\bm \theta, \log \bm \beta)$ is strongly convex. Indeed, we verify this using the DeepMIMO dataset with base station $1$ and user location index $(0, 298)$ with the number of paths $k=1$ by setting $C_1 = 6.2$, $C_2 = 331.1$; see \Cref{fig:loss-surface}.\footnote{We choose $k=1$ since the relevant functions can be visualized in 3D.}

\begin{figure}[t]
    \centering
    \includegraphics[width=0.75\linewidth]{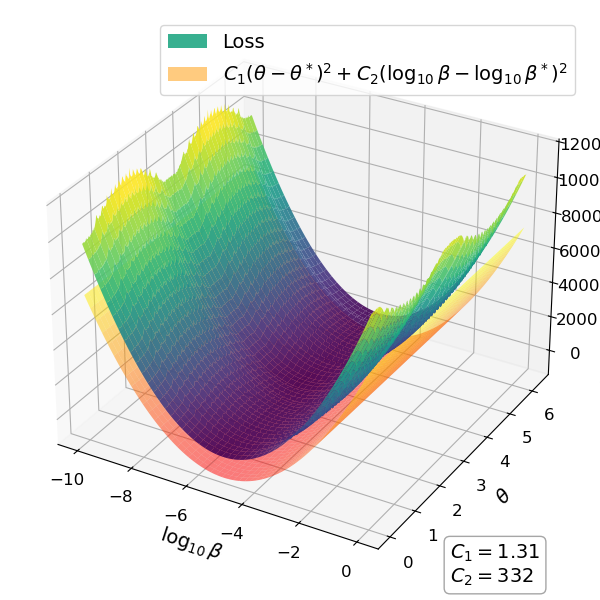}
    \caption{
    The right-hand side (in green) vs the left-hand side (in orange) of Eq.~\eqref{eqn:RHS-assumption1}, verified on the DeepMIMO scenario 4 Base Station 1 User Equipment $(0, 843)$. 
    } 
    \label{fig:loss-surface}
\end{figure}

\begin{figure}[t]
    \centering
    \includegraphics[width=0.75\linewidth]{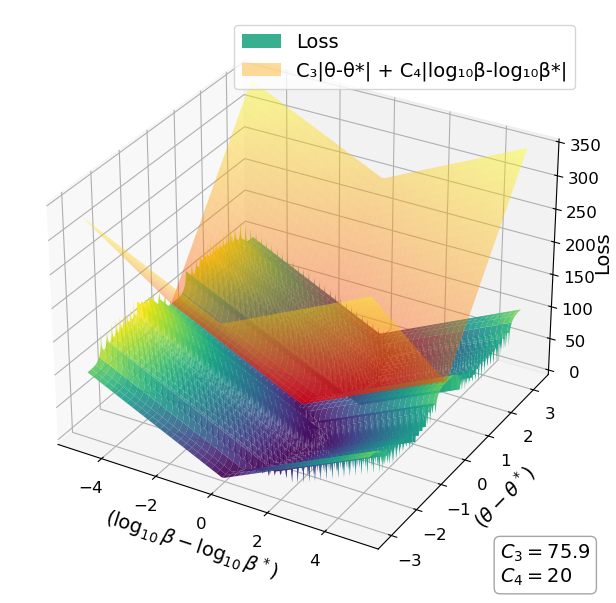}
    \caption{
    The left-hand side (in green) vs the right-hand side (in orange) of Eq.~\eqref{eqn:rhs-assumption2},
    verified on the DeepMIMO scenario 4 Base Station 1 User Equipment $(0, 843)$, with a fixed beam $a = 0$.
    }
    \label{fig:assumption-2-loss-surface}
\end{figure}

\begin{assum}[Local Lipschitz of the reward function]
\label{assum:lip}
There exist constants $C_3$ and $C_4$ such that the following inequality holds: $\forall \bm \beta \in B^k, \bm \theta \in \Theta^k, \forall a \in [K]$, 
    \begin{align}
        & \abs{R(\fbf_a, \bm \theta^*, \bm \beta^*) - R(\fbf_a, \bm \theta, \bm \beta)} 
        \nonumber
        \\
        \leq &
        C_3 \| \bm \theta^* - \bm  \theta \| + C_4 \| \log \bm \beta^* - \log \bm \beta \|,
        \label{eqn:rhs-assumption2}
    \end{align}
\end{assum}

Assumption~\ref{assum:lip} states that the reward of any beam changes smoothly as the channel parameters $\bm \beta, \bm \theta$ deviate from the ground truth parameter $\bm \beta^*, \bm \theta^*$.
In \Cref{fig:assumption-2-loss-surface}, we verify this using the DeepMIMO dataset with base station $1$ and user location index $(0, 298)$, and beam index $a = 0$ with $k=1$ by setting $C_3 = 21.05$, $C_4 = 37.98$.

Under the above three assumptions, we prove the following regret guarantee for \pretc: 
\begin{theorem}
\label{thm:reg-etc}
Suppose Assumptions~\ref{assum:bounded},~\ref{assum:quadratic-lower-bound}, and~\ref{assum:lip} hold. 

The regret of \pretc satisfies:
\[
\Regret_T \leq O\del{ M R_{\max} + T \sqrt{ \frac{k\sigma^2 R_{\max}^2 (\log |B| + \log|\Theta| )}{M} } }. 
\]
Furthermore, with $M = T^{2/3} \del{k\sigma^2 (\log |B| + \log|\Theta|)}^{1/3}$, 
\[
\Regret_T \leq O\del{ R_{\max} T^{2/3} \del{k\sigma^2 (\log |B| + \log|\Theta|)}^{1/3}  }. 
\]
where $O(\cdot)$ hides constants that depends on $C_i, i =1,\ldots,4$.
\end{theorem}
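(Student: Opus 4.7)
The plan is to follow the standard Explore-Then-Commit decomposition, isolating the exploration cost and then controlling the exploitation cost via the MLE estimation error, using Assumption~\ref{assum:quadratic-lower-bound} to convert excess predictive loss into parameter error and Assumption~\ref{assum:lip} to convert parameter error into per-step regret.

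First, I would split $\Regret_T = \Regret^{\mathrm{expl}} + \Regret^{\mathrm{exploit}}$, where the exploration phase contributes at most $2 M R_{\max}$ by Assumption~\ref{assum:bounded}. For the exploitation phase, let $a^\star := \argmax_a R(\fbf_a, \bm\theta^*, \bm\beta^*)$. Since $a_t$ is greedy with respect to $(\hat{\bm\beta}_M, \hat{\bm\theta}_M)$, the standard ``add-and-subtract'' argument gives
\begin{align*}
R(\fbf_{a^\star}, \bm\theta^*, \bm\beta^*) - R(\fbf_{a_t}, \bm\theta^*, \bm\beta^*)
\leq 2\max_{a\in[K]} \bigl|R(\fbf_a, \bm\theta^*, \bm\beta^*) - R(\fbf_a, \hat{\bm\theta}_M, \hat{\bm\beta}_M)\bigr|.
\end{align*}
Assumption~\ref{assum:lip} then bounds this by $2\bigl(C_3 \|\bm\theta^* - \hat{\bm\theta}_M\| + C_4 \|\log\bm\beta^* - \log\hat{\bm\beta}_M\|\bigr)$, so the exploitation regret is at most $(T-M)$ times this quantity, and it remains to control the parameter estimation error.

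Second, I would bound the estimation error using a two-step MLE analysis. Write $R_t^\star := R(\fbf_{a_t}, \bm\theta^*, \bm\beta^*)$ and $\hat R_t := R(\fbf_{a_t}, \hat{\bm\theta}_M, \hat{\bm\beta}_M)$. By the defining optimality of the least-squares estimator (equivalent to MLE by \Cref{eqn:optimization-target}),
\begin{align*}
\sum_{t=1}^M (r_t - \hat R_t)^2 \;\leq\; \sum_{t=1}^M (r_t - R_t^\star)^2 = \sum_{t=1}^M \eta_t^2,
\end{align*}
so after expanding $r_t = R_t^\star + \eta_t$ one gets $\sum_t (\hat R_t - R_t^\star)^2 \leq 2\sum_t \eta_t(\hat R_t - R_t^\star)$. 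For any \emph{fixed} parameter $(\bm\theta,\bm\beta)$ in the finite grid $\Theta^k \times B^k$, $\sum_t \eta_t(R(\fbf_{a_t},\bm\theta,\bm\beta) - R_t^\star)$ is a Gaussian with variance $\sigma^2 \sum_t (R(\fbf_{a_t},\bm\theta,\bm\beta) - R_t^\star)^2$; a sub-Gaussian tail bound combined with a union bound over the at most $|\Theta|^k |B|^k$ candidate parameters yields (after Cauchy--Schwarz and rearranging)
\begin{align*}
\tfrac{1}{M}\sum_{t=1}^M (\hat R_t - R_t^\star)^2 \;\lesssim\; \frac{\sigma^2\, k\,(\log|\Theta| + \log|B|)}{M}
\end{align*}
with high probability. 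A second uniform-convergence (Hoeffding) step -- again via a union bound over the same grid, using the boundedness from Assumption~\ref{assum:bounded} -- transfers this empirical squared loss to its expectation under $a \sim \uniform([K])$, at the cost of an additional additive term of the same order. Applying Assumption~\ref{assum:quadratic-lower-bound} then converts this population squared loss into an upper bound on $C_1\|\bm\theta^* - \hat{\bm\theta}_M\|^2 + C_2\|\log\bm\beta^* - \log\hat{\bm\beta}_M\|^2$.

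Finally, I would plug the resulting $O\!\bigl(\sqrt{\sigma^2 k(\log|\Theta|+\log|B|)/M}\bigr)$ parameter-error bound through Assumption~\ref{assum:lip} into the per-step exploitation regret, obtaining the claimed
\[
\Regret_T \;\leq\; O\!\Bigl( M R_{\max} + T\sqrt{\tfrac{k\sigma^2 R_{\max}^2(\log|B|+\log|\Theta|)}{M}}\Bigr),
\]
and then optimize $M$ by setting the two terms equal, giving $M \asymp T^{2/3}(k\sigma^2(\log|B|+\log|\Theta|))^{1/3}$ and the $T^{2/3}$ rate. The failure probabilities of the two concentration events can be chosen as $1/T$ so that the low-probability contribution is absorbed into the constants.

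\textbf{Main obstacle.} The step I expect to be most delicate is the MLE analysis: carefully setting up the ``basic inequality'' so that the noise-versus-excess-loss cross term can be controlled uniformly over the discrete parameter grid, and then cleanly transferring the resulting empirical squared-loss bound to the expected squared loss under the uniform design so that Assumption~\ref{assum:quadratic-lower-bound} can be invoked. The rest of the argument (regret decomposition, greedy-arm optimality, Lipschitz conversion, tuning of $M$) follows a well-trodden ETC template.
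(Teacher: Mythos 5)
Your overall architecture matches the paper's proof almost exactly: the ETC decomposition with a $2MR_{\max}$ exploration cost, the basic least-squares inequality plus a sub-Gaussian tail bound union-bounded over the $|\Theta|^k|B|^k$ grid to get an in-sample squared-error bound of order $k\sigma^2(\log|\Theta|+\log|B|)$, a transfer to the population squared loss under $a\sim\uniform([K])$, and finally Assumption~\ref{assum:quadratic-lower-bound} followed by Assumption~\ref{assum:lip} to convert that into per-step regret (the paper routes this through its Lemmas on data-norm concentration and instantaneous regret, but the content is the same).

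The one step that would fail as written is the empirical-to-population transfer. You invoke plain Hoeffding for variables bounded in $[0,4R_{\max}^2]$ and claim the resulting additive term is ``of the same order.'' It is not: Hoeffding gives a deviation of order $R_{\max}^2\sqrt{\log(1/\delta)/M}$, i.e.\ decaying as $M^{-1/2}$, whereas the in-sample term decays as $M^{-1}$. Since the per-step exploitation regret is the \emph{square root} of the population squared loss, an $M^{-1/2}$ additive term becomes $M^{-1/4}$ in the regret, and balancing $MR_{\max}$ against $TR_{\max}M^{-1/4}$ yields $T^{4/5}$, not the claimed $T^{2/3}$. The paper avoids this by using a multiplicative (Bernstein/Chernoff-type) lower-tail bound for nonnegative bounded variables, which gives $\mu \leq 2\bar{X}_M + O\bigl(R_{\max}^2\log(1/\delta)/M\bigr)$ --- the constant-factor inflation of the empirical mean is harmless, and the additive term now matches the $1/M$ rate of the in-sample bound. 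With that substitution (and your union bound over the grid, which only affects the log factor), the rest of your argument goes through and recovers the stated bound.
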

\begin{remark}
Recall that $|B|$ and $|\Theta|$ are the spaces where channel parameters $\beta_i$ and $\theta_i$ lie; Therefore, $\log|B|$ and $\log|\Theta|$ are roughly the number of bits needed to encode $\beta_i$ and $\theta_i$, which can be regarded as constants.~\footnote{In our experiments, both $\log|B|$ and $\log|\Theta|$ are no more than 7.}
\end{remark}
\begin{remark}
Our algorithms only have a few hyperparameters, making them suitable for deployment with minimal hyperparameter tuning: 
\prgreedy only requires $k$; while \pretc only requires $k$ and $M$.
They do not require the knowledge of $C_i, i = 1,\ldots,4$, for example. 
\end{remark}

We now provide a preliminary analysis on the regret guarantee of \prgreedy. Following recent  analysis of  greedy algorithm for structured bandits~\cite{slivkins2025greedy}, we make the following assumption on our reward function class:  
\begin{assum}[$\gamma$-self-identifiability]
There exists some $\gamma > 0$, such that the class of reward functions $\cbr{ R(\fbf_a, \bm \theta, \bm \beta): \bm \theta \in \Theta^k, \bm \beta \in B^k }$ is $\gamma$-self-identifiable w.r.t. ground truth parameter $(\bm \theta^*, \bm \beta^*)$; i.e., if for any suboptimal beam $a \in [K]$, and any $(\bm \theta, \bm \beta) \in \Theta^k \times B^k$ such that $a = \argmax_{b \in [K]} R(\fbf_b, \bm \theta, \bm \beta)$, 
$|R(\fbf_a, \bm \theta, \bm \beta) - R(\fbf_a, \bm \theta^*, \bm \beta^*)| \geq \gamma$.
\label{assum:self-identifiability}
\end{assum}

The $\gamma$-self-identifiability assumption requires that any incorrect estimates of the channel parameter $(\bm \theta, \bm \beta)$ can be invalidated by taking its greedy beam and observing their rewards. Without the self-identifiability assumption, \prgreedy may get stuck at taking a suboptimal action and maintaining incorrect channel parameter estimates, resulting in linear regret. Empirically, we verified that in the setting of $B = \sbr{e^{-3+0.25i}, i \in {0, 1, \ldots, 10}}, \Theta = \cbr{2i, i \in \cbr{0, 1, \ldots, 180}}$, $R(\fbf_a, \bm \theta, \bm \beta)$ satisfied \Cref{assum:self-identifiability} with $\gamma=7\times10^{-3}$ and in the setting of $B = \sbr{e^{-3+0.5i}, i \in {0, 1, \ldots, 4}}, \Theta = \cbr{2i, i \in \cbr{0, 1, \ldots, 180}}$, $\gamma$ becomes $0.071$ for the DeepMIMO dataset. 

\begin{theorem} \label{thm:reg-pr-greedy}
Suppose our class of reward functions is $\gamma$-self-identifiable with respect to ground truth parameter $(\bm \theta^*, \bm \beta^*)$. Then, when \prgreedy is run, with probability $1-\delta$, each suboptimal beam will be chosen at most 
$\tilde{O}\rbr{ \frac{k}{\gamma^2} (\ln|B| + \ln|\Theta|) }$ times.
\end{theorem}

The proof of the above theorem can be found in \Cref{app:proof-reg-pr-greedy}.
While this improves over the original analysis of~\cite{slivkins2025greedy} by replacing $K$ with $k$ via utilizing the structure of our reward function class, we believe that this bound may still be quite pessimistic since the total number of times suboptimal beams are selected can still be $\Omega(K)$. 
We conjecture that it is possible to bound the \emph{total number of suboptimal beams taken} by \prgreedy by some quantity independent of $K$, by exploiting the more fine-grained structure of our reward function class.

As we can see from Theorem~\ref{thm:reg-etc}, 
when the channel has at most $k$ paths,
\pretc has a regret of $\tilde{O}\del{ T^{2/3} k^{1/3} }$, where $\tilde{O}$ hides the logarithmic terms.
This is significantly better than the standard Explore-then-Commit (ETC) algorithm~\citep{lattimore2020bandit}, which only achieves $\tilde{O}(T^{2/3} K^{1/3})$ since we avoid the dependence on the size of the action space $K$, which is often much larger than $k$ in our application.
In comparison, UCB~\citep{auer2002finite} achieves a regret of $\tilde{O}(\sqrt{KT})$, which can be even vacuous since the number of beams $K$ may be as large as $T$ in our application.
LSE~\citep{yu2011unimodal} assumes that the reward function is unimodal, BISECTION~\citep{agarwal2011stochastic} assumes that the reward function is convex and Lipschitz continuous, which are restrictive compared to our assumptions since their algorithm is sensitive to the choice of Lipschitz constant such as LSE~\citep{yu2011unimodal} needs to know the Lipschitz constant of the reward function to achieve sublinear regret; specifically, underspecifying the Lipschitz constant may result in convergence to a suboptimal beam. In contrast, our algorithm only has an exploration length parameter $M$, which our algorithm is robust to.

\textit{Extension to the Changing Reward Distribution Setting}: We extend \pretc and \prgreedy to handle the mobile setting by running an instance of it from scratch every $\tau$ time steps - see Algorithm~\ref{alg:restart}. Intuitively, within any interval of $\tau$ time steps, the environment is much more stationary than the overall nonstationary problem with $T$ time steps, and thus we expect the algorithms in the static setting to perform reasonably well. Such restart strategies has recently been analyzed in linear bandit~\citep{zhao2020simple} and beam tracking applications~\citep{deng2022interference}. 
We leave the investigation of a combination of our physics-informed algorithm with adaptive restart~\cite[e.g.,][]{liu2018change} as an interesting open question.

\begin{algorithm}[ht]
\begin{algorithmic} 
\STATE \textbf{Input:} Number of paths $k$, number of beams $K$, codebook matrix $F$, restart time threshold $\tau$, a base bandit algorithm $\Acal$
\FOR{$n = 1,2, \cdots, \lceil \frac{N}{\tau} \rceil$}
    \STATE Run an new instance of the base algorithm $\Acal$ for $\tau$ time steps
\ENDFOR
\end{algorithmic}
\caption{
Periodic-$\Acal$
} 
\label{alg:restart}
\end{algorithm}

\section{Performance Evaluation}

In this section, we evaluate \pretc and \prgreedy using two datasets in the mmWave band: a synthetic dataset based on ray-tracing simulation, DeepMIMO~\citep{alkhateeb2019deepmimo}, and a real-world measurement dataset, DeepSense6G~\citep{DeepSense}. These datasets include a variety of realistic channel environments with complex reward functions, and both static and mobile scenarios. 

\subsection{Evaluation Metrics and Baselines} \label{sec:evaluation-metric}

Since we evaluate across many tasks, to make the regrets of all tasks comparable, we scale the regret in all tasks to a common range.
The normalized regret of an algorithm is defined as the ratio of its regret and that of the naive algorithm that chooses beams uniformly at random at every time step, denoted as N-Regret$_T$:

\begin{align}
    & \text{N-}\Regret_T :=  \nonumber \\
    & 
    \frac{\Regret_T}{T \cdot (\max_{a \in [K]} R^*(\fbf_a, \bm \theta^*, \bm \beta^*) - \EE_{a \sim U([K])}[R(\fbf_a, \bm \theta^*, \bm \beta^*)]) }
        \label{eqn:normalized-static-regret}
\end{align}

The normalized regret for any reasonable algorithm is most likely in $[0, 1]$, whereas the algorithm that always chooses the optimal beam has zero normalized regret. 

We compare our \pretc and \prgreedy algorithms with several baselines: 
\begin{itemize}
\item Upper Confidence Bound (UCB)~\citep{auer2002finite}: a basic version of the algorithm, where the confidence bound is given by $\sqrt{2\ln(T)/N_{t, a}}$ with $N_{t, a}$ representing the number of times beam $a$ has been taken up to time step $t$. 

\item Line Search Elimination (LSE)~\citep{yu2011unimodal}: an algorithm that optimizes the reward function by exploiting its unimodal property; this has also recently been evaluated in beam alignment~\citep{zhao2022hierarchical}.

\item BISECTION~\citep{agarwal2011stochastic}: a noisy bisection-based algorithm that also exploits the unimodality of the reward function.

\item IMED-MB~\citep{saber2024bandits}: a multimodal bandit algorithm that exploits the existence of a small number of peaks in the reward function. Here, we set the number of assumed peaks to be ten.
\end{itemize}

Throughout our experiments, for \pretc, we set $M$, the number of random exploration rounds, to be 20.

\subsection{Evaluation on DeepMIMO simulation environment}

We first evaluate our algorithm and the baselines on DeepMIMO~\citep{alkhateeb2019deepmimo}, a generic ray-tracing-based simulated dataset for benchmarking beam alignment algorithms.
With DeepMIMO, we can vary the User Equipment (UE) location, allowing us to evaluate the algorithms across a wide range of locations.

We utilize a stationary scenario 4 from an urban area in Phoenix, which includes 3 base stations and 6,794 users. In total, we have 20,382 BS-UE pairs (bandit instances), including 4,952 pairs with a valid connection. 
For each BS-UE pair with a valid connection, we evaluate all algorithms' performance by simulating bandit learning and record their cumulative regret. 
We set the base station to have a ULA array with 16 antennas elements; we set each user to have one omnidirectional antenna.
The frequency of the mmWave in this experiment is 28GHz, thus, the wavelength is $\lambda = c/f = 0.011$ meters. 
The spacing of the antenna is $d = 0.5 \lambda = 0.005$. 
The size of the beam codebook is $K = 180$. We follow~\citep{samimi2014characterization} and set the standard deviation in the reward noise to be $\sigma=3.6$ (dBm).

We define the candidate set of angles of arrivals to be $\Theta = \cbr{2i, i \in{0, 1, \cdots, 180}}$ and the candidate set of channel gains to be $B = \cbr{e^{-10 + 0.5i}, i \in \cbr{0, 1, \cdots, 20}}$; our \pretc and \prgreedy algorithm takes $k = 1$ as input.

\subsubsection{Distribution of cumulative regret over different BS-UE pairs}
\Cref{fig:ave-regret-50,fig:ave-regret-200} show the distribution of 
N-Regret$_T$
over 4,952 environments for time horizons $T = 50$ and $T=200$.
For each environment, the N-Regret$_T$ is computed by averaging over 10 runs.

We find that \prgreedy consistently outperforms other algorithms. At time step 50, the expected mean regret of \prgreedy is 0.27, while \pretc has a mean regret of 0.59 (\Cref{fig:ave-regret-50}). 
Increasing the time horizon from 50 to 200 significantly improves the performance of both \prgreedy and \pretc (\Cref{fig:ave-regret-200}), although \prgreedy requires more time to solve the optimization problem. 
This is despite the fact that 2,851 BS-UE pairs have more than two paths, and 2,152 pairs have more than 3 paths. This demonstrates the robustness of our algorithm, even with $k=1$.

For the other unimodal bandit algorithms, we empirically observe that they do not perform as well as our physics-informed algorithms. 
LSE’s performance improves as the horizon $T$ increases from 50 to 200, approaching that of \pretc, but it still underperforms compared to \prgreedy.
BISECTION, designed for unimodal settings, struggles in both $T=50$ and $T=200$, as it is not well-suited for the multimodal nature of the problem.
Although IMED-MB is designed for multimodal settings, its regrets over all tasks exhibit a bimodal distribution in the histogram plots.
The experiment shows that IMED-MB struggles to select the best beam with only a very small number of observations.
We speculate that this is because they can identify the best beam in some environments, but in others, their overall performance is similar to a random policy.
UCB seems to have a clearly inferior performance due to its need in choosing every beam once to begin with; with the total number of beams $K = 180$ and a time horizon of $T = 50$, it performs almost identical to a random policy.

\begin{figure}[t]
    \centering
    \includegraphics[width=1.0\linewidth]{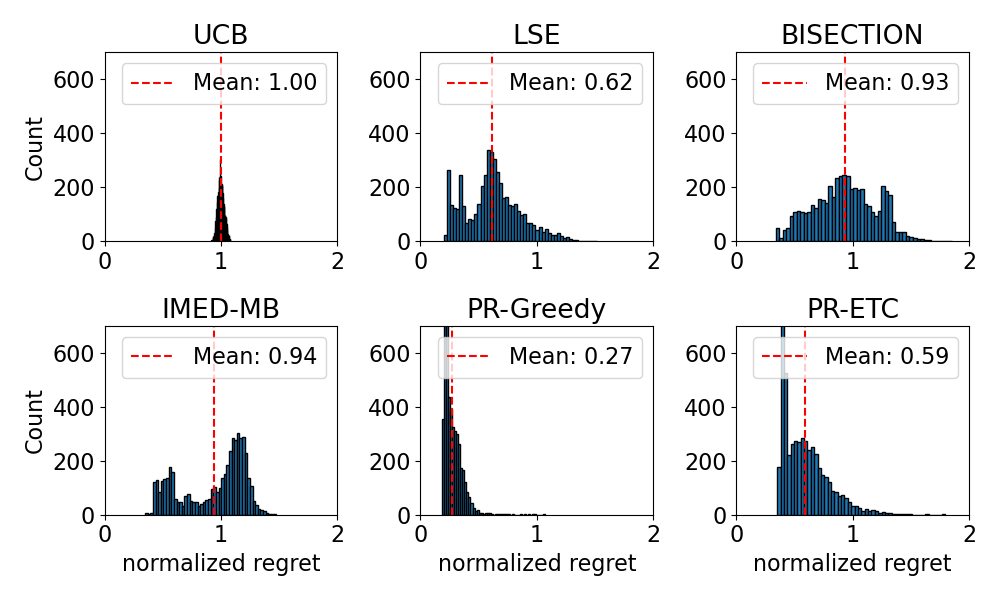}
    \caption{
    Distribution of normalized regret over all BS-UE pairs at the first $50$ steps.}
    \label{fig:ave-regret-50}
\end{figure}
\begin{figure}[t]
    \centering
    \includegraphics[width=1.0\linewidth]{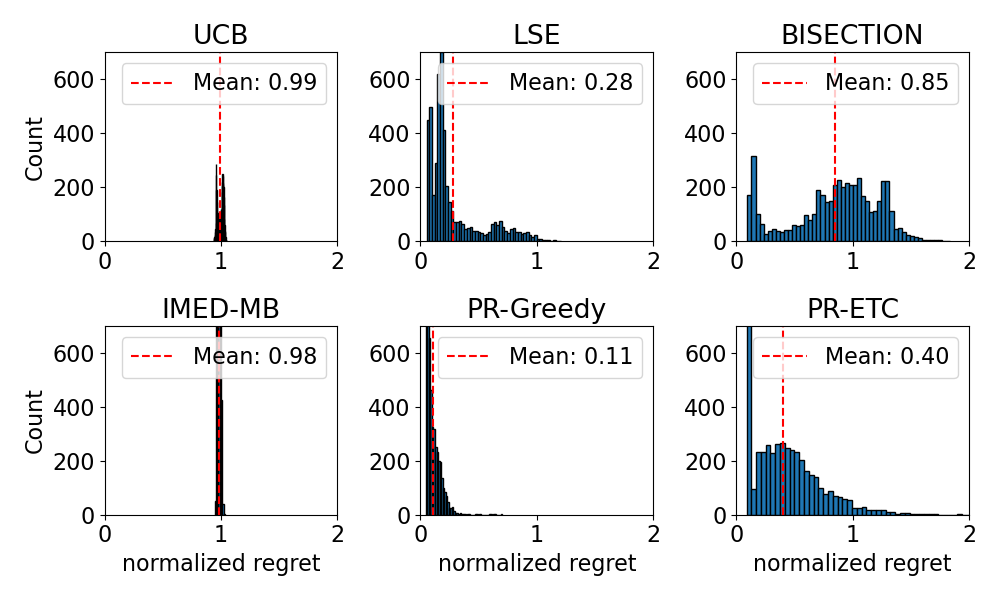}
    \caption{
    Distribution of normalized regret over all BS-UE pairs at the first $200$ steps.}
    \label{fig:ave-regret-200}
\end{figure}

\subsubsection{Case Study in one BS-UE location} \label{sec:rebuilt-reward-function}

To further understand the behavior of the algorithms and baselines, we select one bandit instance from all user locations (Base Station 1 with UE 45) as an example. From \Cref{fig:regret-vs-time}, we see that both \prgreedy and \pretc achieve lower average regret, which continues to decrease as time progresses and more data is collected, and \prgreedy performs better than \pretc.
BISECTION and LSE learn some patterns, but their performance remains inferior compared to our methods. 
UCB and IMED-MB perform the worst with no meaningful learning progress.

\begin{figure}[t]
    \centering 
    \includegraphics[width=0.9\linewidth]{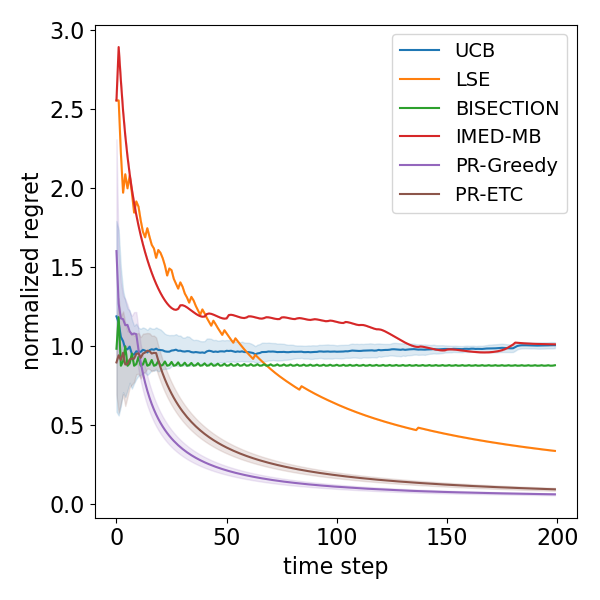}
    \caption{
    Normalized regret vs time step for different algorithms in DeepMIMO scenario 4 base station 1 UE 45. The curves are based on the average over 10 repetitions, with their shades indicating one standard error.
    Because of the finite-horizon setting with $T=200$ steps and $K=180$ arms, UCB and IMED-MB have a regret close to choosing beams uniformly at random. 
    }
    \label{fig:regret-vs-time}
\end{figure}

We also plot the ground truth reward function in \Cref{fig:rebuilt-reward} and the estimated reward function using the $(a_t, r_t)$ pairs collected by \prgreedy and \pretc.
From the plot, we can see that the beams chosen by \prgreedy are centered around the peak of the reward function rather than uniformly distributed on the action space.
\begin{figure}[t]
    \centering
    \includegraphics[width=0.9\linewidth]{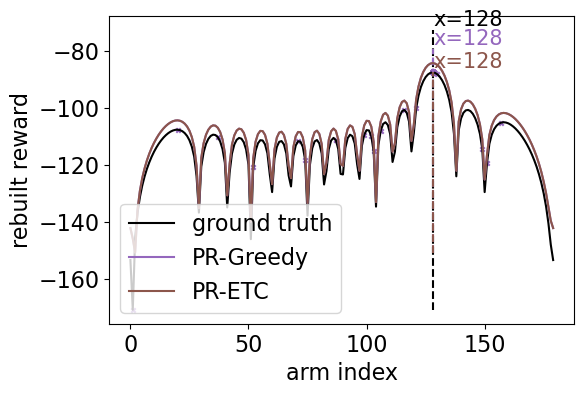}
    \caption{
    Reward function estimated by \prgreedy and \pretc at time step 200. Each cross represents the arm selected by \prgreedy and \pretc.
    For each color, the vertical lines represent the best beam that maximizes the respective reward function. 
    }
    \label{fig:rebuilt-reward}
\end{figure}

\subsubsection{Spatial distribution of normalized regret} \label{sec:regret-map-glm}

We also plot the spatial heatmap of the N-Regret$_T$ for Base Station 1, which contains 2,704 BS-UE pairs (\Cref{fig:bs-1-glm-greedy}).
We observe that the performance of \prgreedy aligns well with the histogram in \Cref{fig:ave-regret-200}, indicating generally strong performance.

\subsubsection{Time cost}
We compare the total computation time required by each algorithm for $T=200$, averaged over 10 repeated trials from all DeepMIMO scenarios.
Notably, \prgreedy takes 376.73 ms per step in one BS-UE pair, \pretc takes 7.91 ms, other baseline models complete within 0.5 ms per step, except IMED-MB which takes around 5 ms.
The majority of the time is spent by our algorithms to solve the MLE (\Cref{eqn:optimization-target}) by enumerating all $(\bm \theta, \bm \beta)$ pairs at every time step. 
This is still reasonable in practice, as the beam update interval has a median of 160-310 ms in real-world commercial deployments, according to \citep{feng2025vivisecting}.
The time cost of \prgreedy and \pretc can be improved with more efficient optimization methods or better hardware. Recent work also proposes to solve the beam alignnment problem using kernel bandits~\cite{deng2022interference}, which does not take advantage of the underlying channel's property.
\cite{alemdar2025remarkable} proposed employing reconfigurable intelligent surfaces instead of multiple wireless access points. Their algorithm uses a kernelized bandit model for beam selection, with a bandit-over-bandit approach over adaptive restarts to handle non-stationary scenarios. Our physics-informed online learning algorithm is related to bandit phase retrieval~\citep{kotlowski2019bandit,lattimore2021bandit} as the reward feedback is only related to the signal strength but not the phase shift. Our \prgreedy and \pretc algorithms use Maximum Likelihood Estimation (MLE) to estimate the parameters of the environment, similar to prior works in parametric bandits~\cite{filippi2010parametric}.
\begin{figure}[t]
    \centering
    \includegraphics[width=0.9\linewidth]{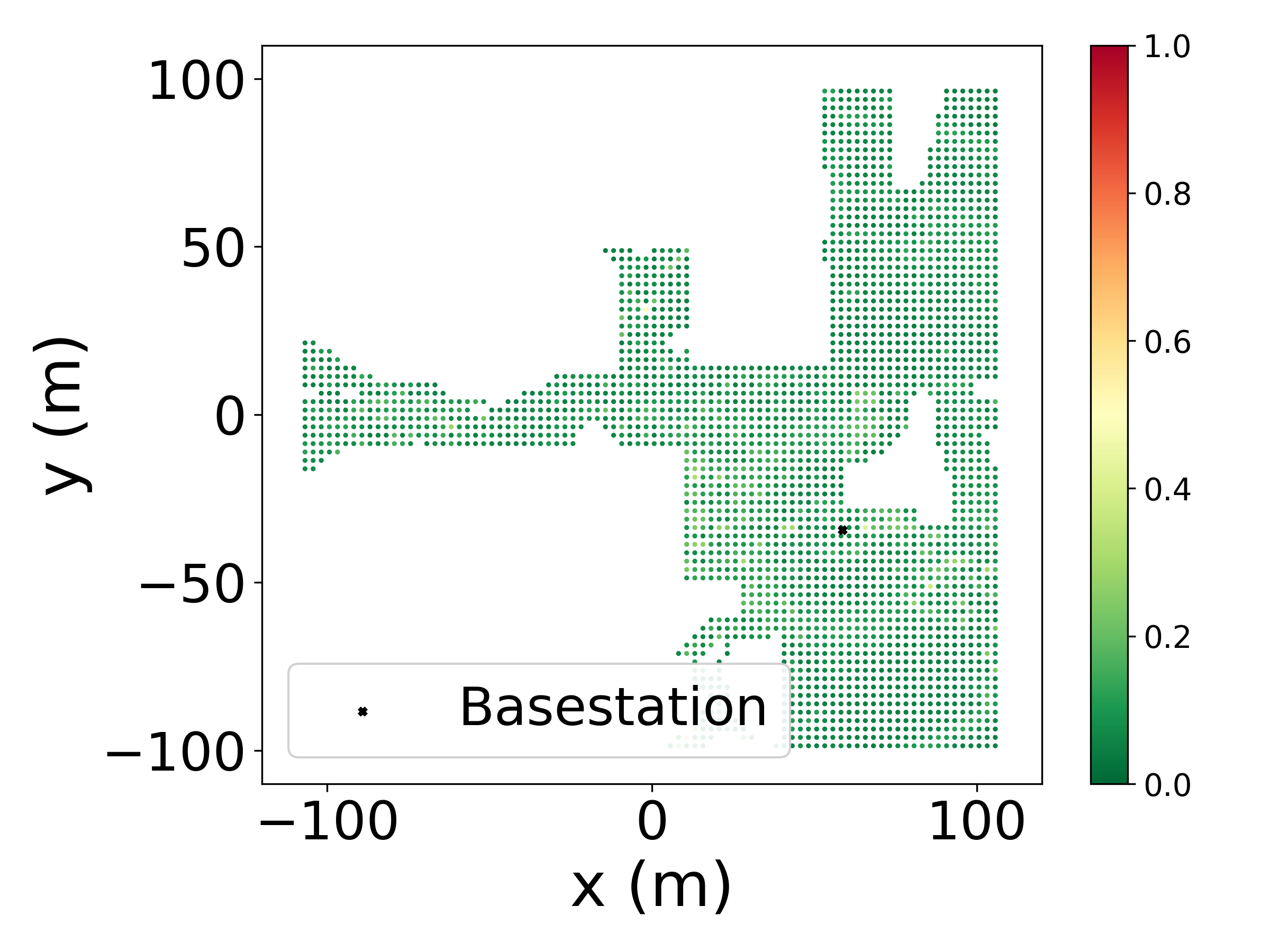}
    \caption{Spatial heat map of N-Regret$_T$ for \prgreedy across the coverage area of Base Station 1
    } \label{fig:bs-1-glm-greedy}
\end{figure}

\subsection{Experiments on DeepSense6G dataset}

We next evaluate our algorithms and baselines in the publicly-available DeepSense6G dataset~\citep{DeepSense} collected in Phoenix. 

\subsubsection{Stationary Reward Distribution Setting}

Out of the 44 scenarios provided by DeenSense6G, we focus on scenarios 17-22 and 24-29 that are outdoor scenarios in two-way city streets with fixed Rx and Tx locations.  
In these scenarios, the base station uses a 16-element ULA antenna, with a carrier frequency 60GHz.
The base station chooses one of $K=64$ beams from the codebook to communicate with the user. 
The codebook of beamforming vectors is not included in the metadata\footnote{The codebook is collected using a non-standard method by DeepSense6G, whose phase shifts on each codeword is unknown to us.}.
Nevertheless, the beam pattern of all beamforming vectors was provided in the metadata, which suffices for us to run our algorithms as we discussed in Section~\ref{sec:prelims}. For our channel estimation, the set of candidate AoA's is $\Theta = \cbr{2i, i \in [0, \cdots, 110]}$, and 
the set of candidate  $\beta$'s is $B = \cbr{ e^{-3 + 0.2i}, i \in \cbr{0, \cdots, 13} }$. 

We randomly sample 10 distinct length-200 intervals, each using a different random seed, from the union of all available bean RSS measurement sequences in each scenario to construct 10 independent trials.
For each trial, we run all algorithms and record their cumulative regret.
The raw RSS values from the dataset are directly used to simulate reward feedback.
Since the RSS values are recorded every 0.1 seconds, we set each decision step to correspond to 0.1 seconds as well.
Each simulation is conducted with a time horizon of $T = 200$.
For the \pretc and \prgreedy algorithms, we specify the number of assumed paths as $k = 1$ and $k = 2$, respectively.

We show the histogram of the normalized regret across these 12 scenarios in \Cref{fig:deepsense-ave-regret-50,fig:deepsense-ave-regret-200}. Similar to the trend in the \Cref{fig:ave-regret-50}, we also see that \prgreedy and \pretc has regret performance much better than all baselines.
As a case study verification, we also plot the per-step regret curves of all algorithms in scenario 17 in \Cref{fig:lc-scenario-24}. 
Overall, our experiments confirm the utility of physics-informed models in designing a better algorithm for beam selection. 

\begin{figure}[t]
    \centering
    \includegraphics[width=1.0\linewidth]{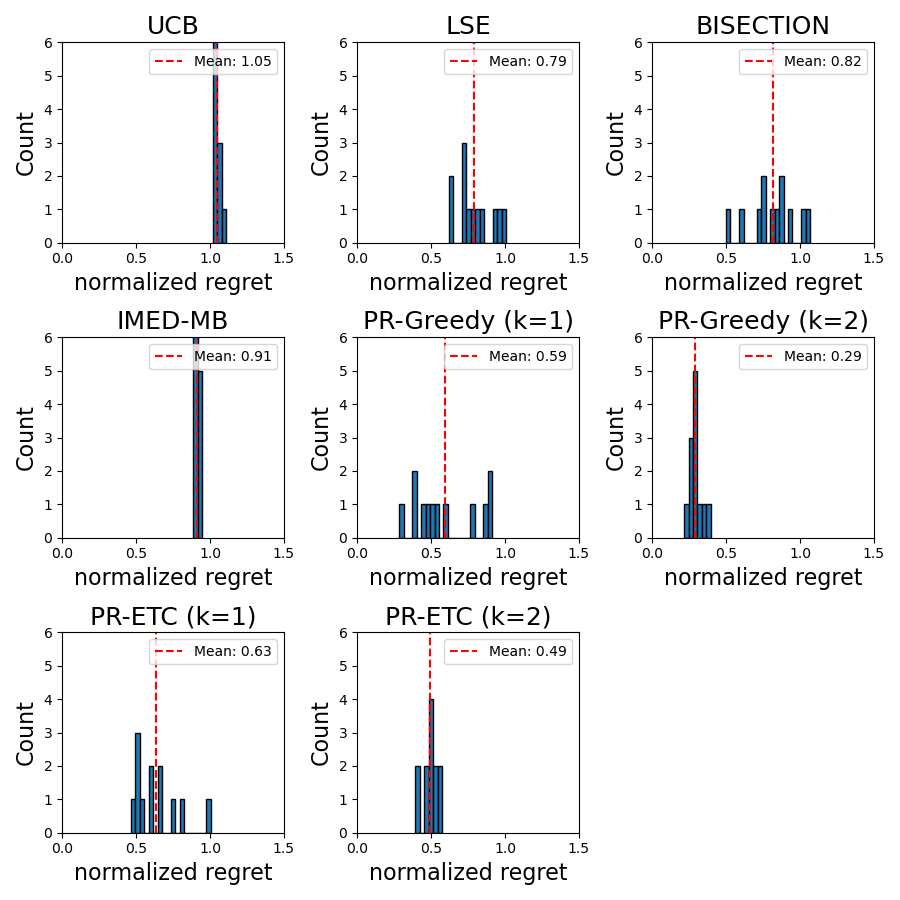}
    \caption{
    Distribution of N-Regret$_T$ over all DeepSense 6G static scenarios at the first $50$-th step}
    \label{fig:deepsense-ave-regret-50}
\end{figure}

\begin{figure}[t]
    \centering
    \includegraphics[width=1.0\linewidth]{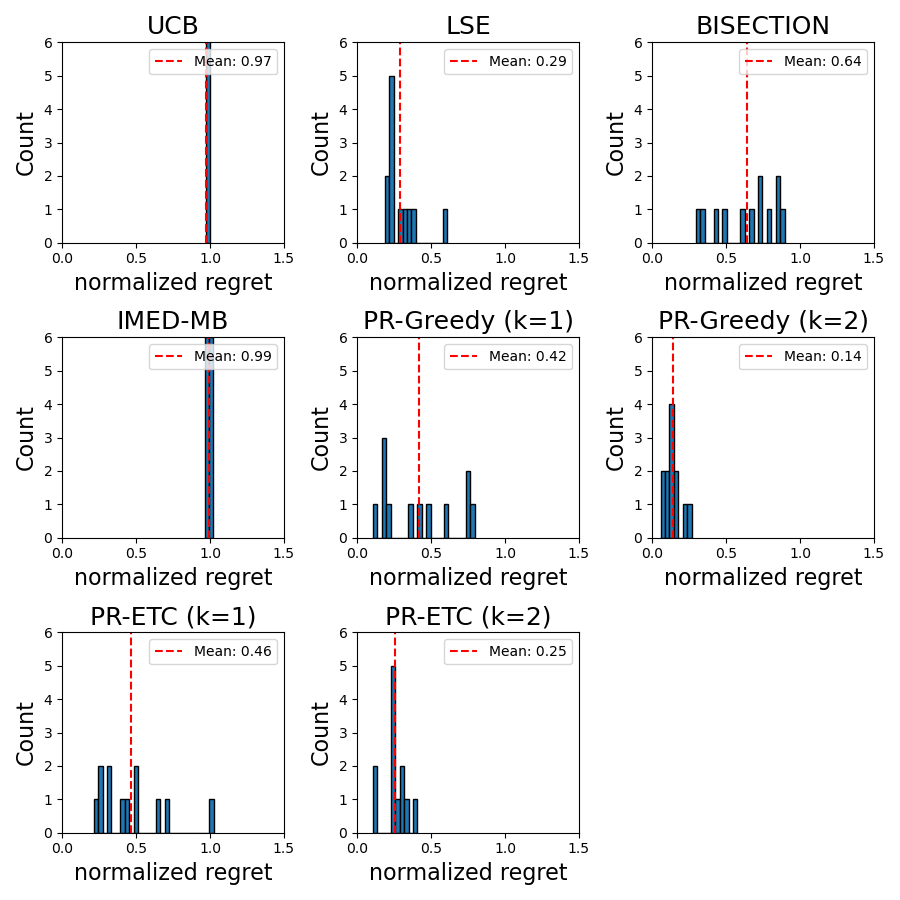}
    \caption{
    Distribution of N-Regret$_T$ over all DeepSense 6G static scenarios at the first $200$-th step.
    }
    \label{fig:deepsense-ave-regret-200}
\end{figure}

\begin{figure}[t]
    \centering 
    \includegraphics[width=0.9\linewidth]{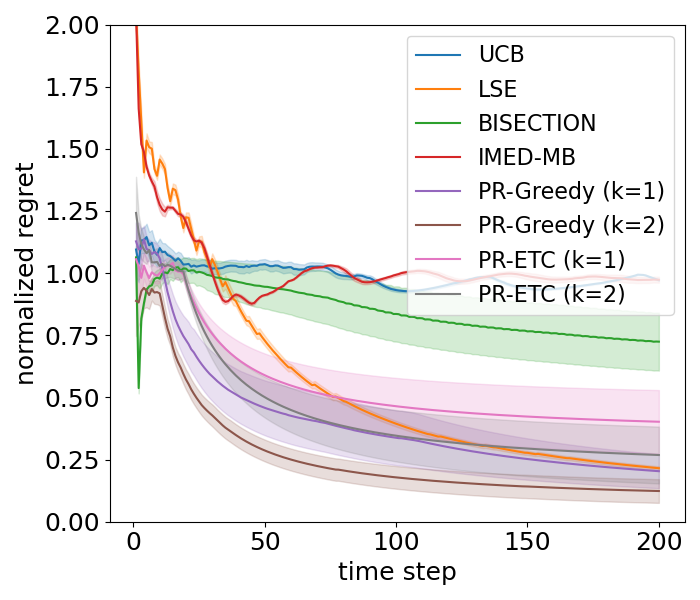}
    \caption{Average regret vs time step for different algorithms in Scenario 24 of DeepSence 6G, with $T = 200$ and $K = 64$ beams.
    All learning curves are based on the average over 10 repetitions with their shades indicating one standard error.}
    \label{fig:lc-scenario-24}
\end{figure}

\begin{figure}[t] 
    \centering
    \includegraphics[width=1.0\linewidth]{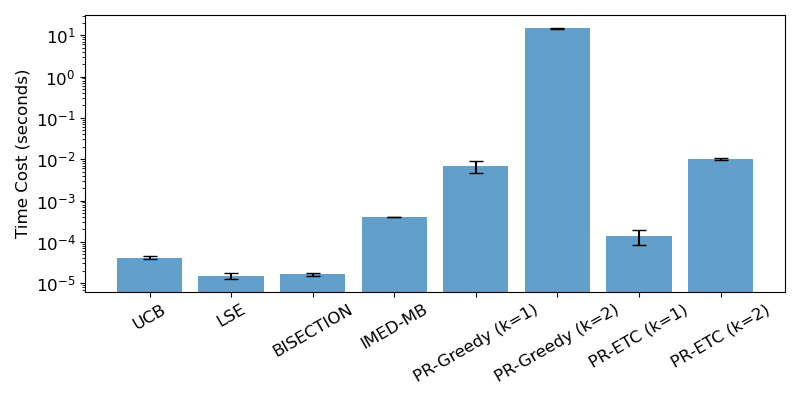}
    \caption{Per step time cost of each algorithm, averaged over all DeepSense 6G stationary scenarios by running $T = 200$ steps with 10 repeats.
    }
    \label{fig:time-cost-200}
\end{figure}
We also report the average time cost of all algorithms for $T=200$, across the 12 scenarios in \Cref{fig:time-cost-200}. As shown, the time cost of \pretc is significantly lower than that of \prgreedy.
Furthermore, increasing the number of paths $k$ from 1 to 2 leads to higher computation time due to the use of grid search, where the number of search grids increases exponentially with $k$.

\subsubsection{Changing Reward Distribution Setting}

We evaluate the effectiveness of Periodic-$\Acal$~(Algorithm~\ref{alg:restart}) in Scenario 9 of the DeepSense6G dataset. 
This is a vehicle-to-infrastructure case where the transmitter is located at the roadside and the receiver is in the moving vehicle.
The dataset records the signal strength for all 64 beams every 100 ms.
Each consecutive sequence contains about $40$ records, so each sequence is about $4$ seconds. 
We use this data to create a nonstationary bandit problem with a time horizon $T = 2,000$, with each time step representing an interval of 2 ms.
Since we only have RSS data for a coarser resolution of 100 ms, we linearly interpolate the reward function at every time step according to the reward function of its closest two 100ms time interval ticks available in the data.
We evaluate the restarted variants of \prgreedy and \prgreedy with $\tau = 50$; for baselines, we consider the restarted variants of UCB, LSE, BISECTION, IMED-MB with the same setting of $\tau$. 

We present the normalized dynamic regret 
\begin{align}
& \text{N-}\mathrm{DynamicRegret}_T \nonumber
    \coloneqq \\
    & \frac{\mathrm{DynamicRegret}_T}{\sum_{t=1}^T \max_{a \in [K]} R(\fbf_a, \bm \theta_t^*, \bm \beta_t^*) - \EE_{a \sim U([K])}R(\fbf_{a}, \bm \theta_t^*, \bm \beta_t^*) }
    \label{eqn:dynamic-normalized-regret}
\end{align}
as a function of time step in \Cref{fig:normalized-regret-mobile}; we see that Periodic-\prgreedy has the best performance and Periodic-\pretc is the second. Both beat all other periodic bandit algorithms by a significant margin.  

\begin{figure}[t]
    \centering
    \includegraphics[width=1.0\linewidth]{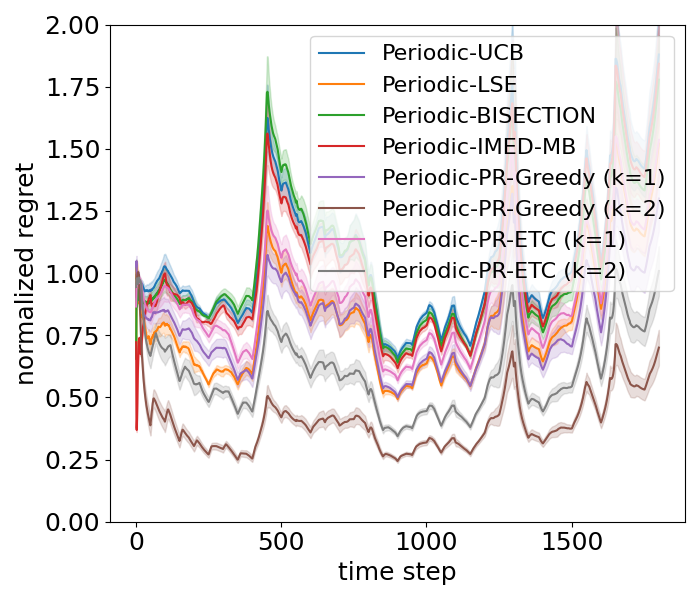}
    \caption{Learning curve averaged from 5 sequences in the dynamic scenario 9 with 1 standard error}
    \label{fig:normalized-regret-mobile}
\end{figure}

\section{Conclusion}

We develop physics-informed parametric bandit algorithms, \pretc and \prgreedy, to address the beam alignment problem in millimeter-wave communications. Unlike other works that rely on the unimodality or multimodality assumptions on the reward function, our approach is grounded in the fundamental model of far-field mmWave propagation with a small number of dominant paths. Our algorithms demonstrate robust performance on synthetic and real datasets under a variety of channel conditions, compared to existing model-free methods. 
We highlight several promising avenues for future work: (1) improve the time cost of our algorithms, possibly by utilizing compressed sensing~\citep{xie2015hekaton,wei2017facilitating}; (2) extend our approach to near-field communications~\citep{liu2023near}; (3) design more adaptive algorithms for the nonstationary reward distribution setting.

{\small
\bibliographystyle{IEEEtranN}
\bibliography{learning}

@article{parkvall2018nr,
  title={NR: The new 5G radio access technology},
  author={Parkvall, Stefan and Dahlman, Erik and Furuskar, Anders and Frenne, Mattias},
  journal={IEEE Comm. Stand. Mag.},
  volume={1},
  number={4},
  pages={24--30},
  year={2018},
  publisher={IEEE}
}

@article{nitsche2014ieee,
  title={IEEE 802.11 ad: directional 60 GHz communication for multi-Gigabit-per-second Wi-Fi},
  author={Nitsche, Thomas and Cordeiro, Carlos and Flores, Adriana B and Knightly, Edward W and Perahia, Eldad and Widmer, Joerg C},
  journal={IEEE Comm. Mag.},
  year={2014},
  publisher={IEEE}
}

@inproceedings{al2021overview,
  title={An overview of mmWave communications for 5G},
  author={Al-Shammari, Basim KJ and Hburi, Ismail and Idan, Hayder R and Khazaal, Hasan F},
  booktitle={ICICT},
  pages={133--139},
  year={2021},
  organization={IEEE}
}

@inproceedings{zhao2022hierarchical,
  title={Hierarchical Unimodal Bandits},
  author={Zhao, Tianchi and Zhang, Chicheng and Li, Ming},
  booktitle={ECML},
  pages={269--283},
  year={2022},
  organization={Springer}
}

@inproceedings{liu2018change,
  title={A change-detection based framework for piecewise-stationary multi-armed bandit problem},
  author={Liu, Fang and Lee, Joohyun and Shroff, Ness},
  booktitle={Proceedings of the AAAI Conference on Artificial Intelligence},
  volume={32},
  number={1},
  year={2018}
}

@misc{liu2017changedetectionbasedframeworkpiecewisestationary,
      title={A Change-Detection based Framework for Piecewise-stationary Multi-Armed Bandit Problem}, 
      author={Fang Liu and Joohyun Lee and Ness Shroff},
      year={2017},
      eprint={1711.03539},
      archivePrefix={arXiv},
}

@inproceedings{yu2011unimodal,
  title={Unimodal Bandits.},
  author={Yu, Jia Yuan and Mannor, Shie},
  booktitle={ICML},
  pages={41--48},
  year={2011}
}

@inproceedings{jeong2020online,
  title={Online learning for joint beam tracking and pattern optimization in massive MIMO systems},
  author={Jeong, Jongjin and Lim, Sung Hoon and Song, Yujae and Jeon, Sang-Woon},
  booktitle={INFOCOM},
  pages={764--773},
  year={2020},
  organization={IEEE}
}

@inproceedings{zhao2020simple,
  title={A simple approach for non-stationary linear bandits},
  author={Zhao, Peng and Zhang, Lijun and Jiang, Yuan and Zhou, Zhi-Hua},
  booktitle={AISTATS},
  year={2020},
}

@article{sim2018online,
  title={An online context-aware machine learning algorithm for 5G mmWave vehicular communications},
  author={Sim, Gek Hong and Klos, Sabrina and Asadi, Arash and Klein, Anja and Hollick, Matthias},
  journal={IEEE/ACM Trans. on Networking},
  year={2018},
  publisher={IEEE}
}

@article{krunz2023online,
  title={Online reinforcement learning for beam tracking and rate adaptation in millimeter-wave systems},
  author={Krunz, Marwan and Aykin, Irmak and Sarkar, Sopan and Akgun, Berk},
  journal={IEEE Transactions on Mobile Computing},
  volume={23},
  number={2},
  pages={1830--1845},
  year={2023},
  publisher={IEEE}
}

@article{wu2019fast,
  title={Fast mmwave beam alignment via correlated bandit learning},
  author={Wu, Wen and Cheng, Nan and Zhang, Ning and Yang, Peng and Zhuang, Weihua and Shen, Xuemin},
  journal={IEEE Transactions on Wireless Communications},
  volume={18},
  number={12},
  pages={5894--5908},
  year={2019},
  publisher={IEEE}
}

@inproceedings{zhang2021mmwave,
  title={MmWave codebook selection in rapidly-varying channels via multinomial Thompson sampling},
  author={Zhang, Yi and Basu, Soumya and Shakkottai, Sanjay and Heath Jr, Robert W},
  booktitle={Proceedings of the Twenty-second International Symposium on Theory, Algorithmic Foundations, and Protocol Design for Mobile Networks and Mobile Computing},
  pages={151--160},
  year={2021}
}

@inproceedings{deng2022interference,
  title={Interference constrained beam alignment for time-varying channels via kernelized bandits},
  author={Deng, Yuntian and Zhou, Xingyu and Ghosh, Arnob and Gupta, Abhishek and Shroff, Ness B},
  booktitle={WiOpt},
  pages={25--32},
  year={2022},
  organization={IEEE}
}

@article{filippi2010parametric,
  title={Parametric bandits: The generalized linear case},
  author={Filippi, Sarah and Cappe, Olivier and Garivier, Aur{\'e}lien and Szepesv{\'a}ri, Csaba},
  journal={Advances in neural information processing systems},
  volume={23},
  year={2010}
}

@article{liu2023near,
  title={Near-field communications: A tutorial review},
  author={Liu, Yuanwei and Wang, Zhaolin and Xu, Jiaqi and Ouyang, Chongjun and Mu, Xidong and Schober, Robert},
  journal={IEEE Open Journal of the Communications Society},
  volume={4},
  pages={1999--2049},
  year={2023},
  publisher={IEEE}
}

@article{alkhateeb2019deepmimo,
  title={DeepMIMO: A generic deep learning dataset for millimeter wave and massive MIMO applications},
  author={Alkhateeb, Ahmed},
  journal={arXiv preprint arXiv:1902.06435},
  year={2019}
}

@article{samimi2014characterization,
  title={Characterization of the 28 GHz millimeter-wave dense urban channel for future 5G mobile cellular},
  author={Samimi, Mathew K and Rappaport, Theodore S},
  journal={NYU Wireless TR},
  volume={1},
  pages={1--322},
  year={2014}
}

@article{lattimore2021bandit,
  title={Bandit phase retrieval},
  author={Lattimore, Tor and Hao, Botao},
  journal={Advances in Neural Information Processing Systems},
  volume={34},
  pages={18801--18811},
  year={2021}
}

@inproceedings{kotlowski2019bandit,
  title={Bandit principal component analysis},
  author={Kot{\l}owski, Wojciech and Neu, Gergely},
  booktitle={COLT},
  year={2019},
  organization={PMLR}
}

@article{saber2024bandits,
    title={Bandits with Multimodal Structure},
    author={Saber, Hassan and Maillard, Odalric-Ambrym},
    journal={RLJ},
    volume={5},
    pages={2400--2439},
    year={2024}
}

@inproceedings{cutkosky2023blackbox,
  title={Blackbox optimization of unimodal functions},
  author={Cutkosky, Ashok and Das, Abhimanyu and Kong, Weihao and Lee, Chansoo and Sen, Rajat},
  booktitle={UAI},
  year={2023},
}

@book{lattimore2020bandit,
  title={Bandit algorithms},
  author={Lattimore, Tor and Szepesv{\'a}ri, Csaba},
  year={2020},
  publisher={Cambridge University Press}
}

@article{lai1987adaptive,
  title={Adaptive treatment allocation and the multi-armed bandit problem},
  author={Lai, Tze Leung},
  journal={Annals of Stat.},
  year={1987},
  publisher={JSTOR}
}

@inproceedings{alemdar2025remarkable,
  title={REMARKABLE: RIS-Enabled Mobile Beamforming through Kernalized Bandit Learning},
  author={Alemdar, Kubra and Ghosh, Arnob and Chaudhary, Vini and Shroff, Ness and Chowdhury, Kaushik},
  booktitle={Proceedings of the Twenty-sixth International Symposium on Theory, Algorithmic Foundations, and Protocol Design for Mobile Networks and Mobile Computing},
  pages={101--110},
  year={2025}
}

@article{feng2025vivisecting,
  title={Vivisecting Beam Management in Operational 5G mmWave Networks},
  author={Feng, Yufei and Dinh, Phuc and Ghoshal, Moinak and Baena, Eduardo and Bouchebbah, Hamza and Koutsonikolas, Dimitrios},
  journal={Proceedings of the ACM on Networking},
  volume={3},
  number={CoNEXT2},
  pages={1--26},
  year={2025},
  publisher={ACM New York, NY, USA}
}

@inproceedings{hashemi2018efficient,
  title={Efficient beam alignment in millimeter wave systems using contextual bandits},
  author={Hashemi, Morteza and Sabharwal, Ashutosh and Koksal, C Emre and Shroff, Ness B},
  booktitle={INFOCOM},
  pages={2393--2401},
  year={2018},
  organization={IEEE}
}

@article{slivkins2025greedy,
  title={Greedy Algorithm for Structured Bandits: A Sharp Characterization of Asymptotic Success/Failure},
  author={Slivkins, Aleksandrs and Xu, Yunzong and Zuo, Shiliang},
  journal={arXiv preprint arXiv:2503.04010},
  year={2025}
}

@article{langford2007epoch,
  title={The epoch-greedy algorithm for multi-armed bandits with side information},
  author={Langford, John and Zhang, Tong},
  journal={NIPS},
  year={2007}
}

@article{auer2002finite,
  title={Finite-time analysis of the multiarmed bandit problem},
  author={Auer, Peter and Cesa-Bianchi, Nicolo and Fischer, Paul},
  journal={ML},
  volume={47},
  pages={235--256},
  year={2002},
  publisher={Springer}
}

@inproceedings{wei2017facilitating,
  title={Facilitating robust 60 $\{$GHz$\}$ network deployment by sensing ambient reflectors},
  author={Wei, Teng and Zhou, Anfu and Zhang, Xinyu},
  booktitle={NSDI},
  pages={213--226},
  year={2017}
}

@inproceedings{xie2015hekaton,
  title={Hekaton: Efficient and practical large-scale MIMO},
  author={Xie, Xiufeng and Chai, Eugene and Zhang, Xinyu and Sundaresan, Karthikeyan and Khojastepour, Amir and Rangarajan, Sampath},
  booktitle={Proceedings of the 21st Annual International Conference on Mobile Computing and Networking},
  pages={304--316},
  year={2015}
}

@article{rappaport2017overview,
  title={Overview of millimeter wave communications for fifth-generation (5G) wireless networks—With a focus on propagation models},
  author={Rappaport, Theodore S and Xing, Yunchou and MacCartney, George R and Molisch, Andreas F and Mellios, Evangelos and Zhang, Jianhua},
  journal={IEEE Transactions on antennas and propagation},
  volume={65},
  number={12},
  pages={6213--6230},
  year={2017},
  publisher={IEEE}
}

@article{gustafson2013mm,
  title={On mm-wave multipath clustering and channel modeling},
  author={Gustafson, Carl and Haneda, Katsuyuki and Wyne, Shurjeel and Tufvesson, Fredrik},
  journal={IEEE transactions on antennas and propagation},
  volume={62},
  number={3},
  pages={1445--1455},
  year={2013},
  publisher={IEEE}
}

@inproceedings{hekaton,
author = {Xie, Xiufeng and Chai, Eugene and Zhang, Xinyu and Sundaresan, Karthikeyan and Khojastepour, Amir and Rangarajan, Sampath},
title = {Hekaton: Efficient and Practical Large-Scale MIMO},
year = {2015},
abstract = {Large-scale multiuser MIMO (MU-MIMO) systems have the potential for multi-fold scaling of network capacity. The research community has recognized this theoretical potential and developed architectures [1,2] with large numbers of RF chains. Unfortunately, building the hardware with a large number of RF chains is challenging in practice. CSI data transport and computational overhead of MU-MIMO beamforming can also become prohibitive under large network scale. Furthermore, it is difficult to physically append extra RF chains on existing communication equipments to support such large-scale MU-MIMO architectures.In this paper, we present Hekaton, a novel large-scale MU-MIMO framework that combines legacy MU-MIMO beamforming with phased-array antennas. The core of Hekaton is a two-level beamforming architecture. First, the phased-array antennas steer spatial beams toward each downlink user to reduce channel correlation and suppress the cross-talk interference in the RF domain (for beamforming gain), then we adopt legacy digital beamforming to eliminate the interference between downlink data streams (for spatial multiplexing gain). In this way, Hekaton realizes a good fraction of potential large-scale MU-MIMO gains even under the limited RF chain number on existing communication equipments.We evaluate the performance of Hekaton through over-the-air testbed built over the WARPv3 platform and trace-driven emulation. In the evaluations, Hekaton can improve single-cell throughput by up to 2.5X over conventional MU-MIMO with a single antenna per RF chain, while using the same transmit power.},
booktitle = {Proceedings of the 21st Annual International Conference on Mobile Computing and Networking},
pages = {304–316},
numpages = {13},
keywords = {two-level beamforming, scalability, phased-array antenna, large-scale mu-mimo},
location = {Paris, France},
series = {MobiCom '15}
}

@inproceedings{nitsche2015steering,
  title={Steering with eyes closed: mm-wave beam steering without in-band measurement},
  author={Nitsche, Thomas and Flores, Adriana B and Knightly, Edward W and Widmer, Joerg},
  booktitle={INFOCOM},
  pages={2416--2424},
  year={2015},
  organization={IEEE}
}

@inproceedings{combes2014unimodal,
  title={Unimodal bandits: Regret lower bounds and optimal algorithms},
  author={Combes, Richard and Proutiere, Alexandre},
  booktitle={International Conference on Machine Learning},
  pages={521--529},
  year={2014},
  organization={PMLR}
}

@inproceedings{hassanieh2018fast,
  title={Fast millimeter wave beam alignment},
  author={Hassanieh, Haitham and Abari, Omid and Rodriguez, Michael and Abdelghany, Mohammed and Katabi, Dina and Indyk, Piotr},
  booktitle={Proceedings of the 2018 Conference of the ACM Special Interest Group on Data Communication},
  pages={432--445},
  year={2018}
}

@incollection{mazaheri2019millimeter,
  title={A millimeter wave network for billions of things},
  author={Mazaheri, Mohammad H and Ameli, Soroush and Abedi, Ali and Abari, Omid},
  booktitle={Proceedings of the ACM Special Interest Group on Data Communication},
  pages={174--186},
  year={2019}
}

@article{ghosh2024ub3,
  title={UB3: Fixed budget best beam identification in mmWave massive MISO via pure exploration unimodal bandits},
  author={Ghosh, Debamita and Hanawal, Manjesh K and Zlatanov, Nikola},
  journal={IEEE TWC},
  year={2024},
  publisher={IEEE}
}

@article{agarwal2011stochastic,
  title={Stochastic convex optimization with bandit feedback},
  author={Agarwal, Alekh and Foster, Dean P and Hsu, Daniel J and Kakade, Sham M and Rakhlin, Alexander},
  journal={NIPS},
  year={2011}
}

@article{DeepSense,
author={Alkhateeb, Ahmed and Charan, Gouranga and Osman, Tawfik and Hredzak, Andrew and Morais, Joao and Demirhan, Umut and Srinivas, Nikhil},
title={DeepSense 6G: A Large-Scale Real-World Multi-Modal Sensing and Communication Dataset},
journal={IEEE Communications Magazine},
year={2023},
publisher={IEEE}}
}

\newpage
\onecolumn
\appendix

\section{Appendix}

\subsection{Regret Analysis}

In this section, we prove \Cref{thm:reg-etc} for \pretc. To this end, we first bound the estimation error of the reward function $R$ on the dataset $S_M$ collected in the first $M$ steps in \Cref{lemma:bounding-estimation-error-by-data-norm}, then we show that the instantaneous regret of \pretc is bounded in \Cref{lemma:bounding-simple-regret}, and finally we show that the cumulative regret of \pretc is bounded in \Cref{thm:reg-etc}.

\medskip

\subsubsection{Auxiliary tools from MLE theory}
Let $\sse_M(\bm \theta, \bm \beta) \coloneq \sum_{t=1}^M \del{R(\fbf_t, \bm \theta, \bm \beta) - r_t}^2$, which is the sum of the square errors of the reward predictor $R(\cdot, \bm \theta, \bm \beta) \in \Rcal$ measured on the dataset collected by the first $M$ steps of \pretc.
Recall the definition of a dataset of $m$ action-reward pairs $S_m \coloneq \cbr{a_t, m_t}_{t=1}^m$ from Section IV.1, the data norm of a function $g$ based on the dataset $S_M$ is defined as

\[
    \opnorm{g}{S_M} \coloneq \sqrt{\sum_{t=1}^M g(\fbf_{a_t})^2}.
\]
where $g$ is a function that takes into steering vector $\fbf \in \CC^N$ as input.

Next, we bound the sum of squared errors of a reward function $R(\cdot, \bm \theta, \bm \beta)$ relative to the ground truth $R(\cdot, \bm \theta^*, \bm \beta^*)$ measured based on the dataset collected from the first $M$ step. Formally, we have \Cref{lemma:bounding-estimation-error-by-data-norm} to bound this estimation error.

\begin{lemma}[Concentration of Data Norm] \label{lemma:bounding-estimation-error-by-data-norm}
    Suppose for every time step $t$, $r_t = R(\fbf_{a_t}, \bm \theta^*, \bm \beta^*) + \eta_t$, where $\eta_t \sim N(0, \sigma^2)$'s are independent, where $\sigma$ is a known positive constant. There exists an event $E$, $P(E) \geq 1 - \delta$, and on $E$, the following two equations hold for all $(\bm \theta, \bm \beta) \in B^k \times \Theta^k$:
    \begin{align}
    \sse_{M} (\bm \theta, \bm \beta) - \sse_{M} (\bm \theta^*, \bm \beta^*)
    \geq&
    \frac{1}{2} \opnorm{R(\cdot, \bm \theta, \bm \beta) - R(\cdot, \bm \theta^*, \bm \beta^*)}{S_{M}}^2 - 4k\sigma^2\log\del{\frac{\abs{B} \abs{\Theta}}{\delta}}
        \label{eqn:upper-bound-of-error-on-data-norm}
    \\
    \sse_{M} (\bm \theta, \bm \beta) - \sse_{M} (\bm \theta^*, \bm \beta^*)
    \leq&
    \frac{3}{2} \opnorm{R(\cdot, \bm \theta, \bm \beta) - R(\cdot, \bm \theta^*, \bm \beta^*)}{S_{M}}^2 + 4k\sigma^2\log\del{\frac{\abs{B} \abs{\Theta}}{\delta}}
        \label{eqn:lower-bound-of-error-on-data-norm}
\end{align}
\end{lemma}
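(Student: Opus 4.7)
The plan is to reduce the claim to a Gaussian tail bound applied uniformly over the finite grid $B^k \times \Theta^k$. First, introduce the residual function
\[
g_{\bm\theta, \bm\beta}(\fbf) := R(\fbf, \bm\theta, \bm\beta) - R(\fbf, \bm\theta^*, \bm\beta^*),
\]
and substitute $r_t = R(\fbf_{a_t}, \bm\theta^*, \bm\beta^*) + \eta_t$ into the definition of $\sse_M$. Expanding and cancelling the $\eta_t^2$ terms yields the purely algebraic identity
\[
\sse_M(\bm\theta, \bm\beta) - \sse_M(\bm\theta^*, \bm\beta^*) = \opnorm{g_{\bm\theta,\bm\beta}}{S_M}^2 - 2 \sum_{t=1}^M g_{\bm\theta,\bm\beta}(\fbf_{a_t})\, \eta_t,
\]
so both target inequalities reduce to a two-sided control of the cross term $\sum_t g_{\bm\theta,\bm\beta}(\fbf_{a_t}) \eta_t$ that holds uniformly in $(\bm\theta, \bm\beta)$.

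Second, carry out the probabilistic step. In the exploration phase of \pretc, the beam indices $\{a_t\}_{t=1}^M$ are drawn i.i.d.\ uniformly and are therefore independent of the noise vector $(\eta_1, \ldots, \eta_M)$. Conditioning on $(\fbf_{a_1}, \ldots, \fbf_{a_M})$, the cross term is a sum of independent centered Gaussians and hence distributed as $N(0, \sigma^2 \opnorm{g_{\bm\theta,\bm\beta}}{S_M}^2)$. Applying the standard two-sided Gaussian tail bound with failure probability $\delta' = \delta / (|B|^k|\Theta|^k)$ and union-bounding over the finite set $B^k \times \Theta^k$ produces an event $E$ with $P(E) \geq 1 - \delta$ on which, simultaneously for every $(\bm\theta, \bm\beta)$,
\[
\left| 2 \sum_{t=1}^M g_{\bm\theta,\bm\beta}(\fbf_{a_t}) \eta_t \right| \leq 2 \sigma\, \opnorm{g_{\bm\theta,\bm\beta}}{S_M}\, \sqrt{2k \log(|B||\Theta|) + 2\log(2/\delta)}.
\]

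Third, convert this multiplicative bound into the additive form required by the lemma using Young's inequality $2xy \leq \tfrac{1}{2} x^2 + 2 y^2$, with $x = \opnorm{g_{\bm\theta,\bm\beta}}{S_M}$ and $y = \sigma\sqrt{2k\log(|B||\Theta|) + 2\log(2/\delta)}$. This gives
\[
\left| 2 \sum_{t=1}^M g_{\bm\theta,\bm\beta}(\fbf_{a_t}) \eta_t \right| \leq \tfrac{1}{2} \opnorm{g_{\bm\theta,\bm\beta}}{S_M}^2 + 4\sigma^2 \bigl(k \log(|B||\Theta|) + \log(2/\delta)\bigr),
\]
and substituting back into the algebraic identity yields both directions: the $-\tfrac{1}{2}\opnorm{\cdot}{S_M}^2$ swing produces the lower bound \eqref{eqn:upper-bound-of-error-on-data-norm}, while the $+\tfrac{1}{2}\opnorm{\cdot}{S_M}^2$ swing, added to the leading $\opnorm{\cdot}{S_M}^2$ from the identity, produces the upper bound \eqref{eqn:lower-bound-of-error-on-data-norm} with coefficient $\tfrac{3}{2}$. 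The residual term $k\log(|B||\Theta|) + \log(2/\delta)$ is then absorbed into $k\log(|B||\Theta|/\delta)$ using $k \geq 1$, yielding the stated constant $4k\sigma^2\log(|B||\Theta|/\delta)$.

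No step is a genuine obstacle: the algebra is routine, Gaussian concentration is textbook, and Young's inequality is the natural device for splitting the product $\opnorm{g}{S_M} \cdot \sqrt{\log(\cdot)}$ into an $\opnorm{g}{S_M}^2$ term (to be absorbed or added) plus a purely data-independent deviation term. The only subtlety worth flagging is the independence of beams and noise, which is exactly what permits the Gaussian (rather than sub-Gaussian martingale) tail to be used; this holds in the \pretc exploration phase by construction, but would require a self-normalized concentration inequality were one to attempt an analogous statement for the adaptively chosen beams of \prgreedy.
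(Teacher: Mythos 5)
Your proposal is correct and follows essentially the same route as the paper: the same algebraic identity isolating the cross term $2\sum_t g_{\bm\theta,\bm\beta}(\fbf_{a_t})\eta_t$, the same union bound over the grid $B^k\times\Theta^k$, and the same trade-off between the squared data norm and the deviation term --- the paper just packages your ``Gaussian tail plus Young's inequality'' step as a sub-Gaussian MGF bound with Markov's inequality at the fixed choice $\lambda = 1/(4\sigma^2)$, which is the identical computation. The only (immaterial) difference is a spare additive $4\sigma^2\log 2$ in your constant from the two-sided tail, which is absorbed by the $O(\cdot)$ in the downstream regret bound.
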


\begin{proof}[Proof of \Cref{lemma:bounding-estimation-error-by-data-norm}]
    Since the parameter space $B^k \times \Theta^k$ has a total size of $\abs{B}^k \times \abs{\Theta}^k$, we take a union bound on all parameter combinations from the parameter space and let $\delta' = \delta / \idel{\abs{B}^k \times \abs{\Theta}^k}$, and it suffices to show that for every $\del{\bm \theta, \bm \beta} \in B^k \times \Theta^k$, the following equations hold with probability at least $1 - \delta'$,
    \begin{align}
    \sse_{M} (\bm \theta, \bm \beta) - \sse_{M} (\bm \theta^*, \bm \beta^*)
    \geq&
    \frac{1}{2} \opnorm{R(\cdot, \bm \theta, \bm \beta) - R(\cdot, \bm \theta^*, \bm \beta^*)}{S_{M}}^2- 4\sigma^2\log\del{\frac{1}{\delta'}}
        \label{eqn:intermediate-upper-bound-of-error-on-data-norm}
    \\
    \sse_{M} (\bm \theta, \bm \beta) - \sse_{M} (\bm \theta^*, \bm \beta^*)
    \leq&
    \frac{3}{2} \opnorm{R(\cdot, \bm \theta, \bm \beta) - R(\cdot, \bm \theta^*, \bm \beta^*)}{S_{M}}^2 + 4\sigma^2\log\del{\frac{1}{\delta'}}
        \label{eqn:intermediate-lower-bound-of-error-on-data-norm}
    \end{align}

    We first prove the \Cref{eqn:intermediate-upper-bound-of-error-on-data-norm}. 
    Starting from the difference of $\sse$:
    \begin{align*}
        & \sse_{M} (\bm \theta, \bm \beta) - \sse_{M} (\bm \theta^*, \bm \beta^*)
        \\
        =&
        \sum_{t=1}^{M} \del{R(\fbf_{a_t}, \bm \theta, \bm \beta) - r_t}^2
        -
        \sum_{t=1}^{M} \del{R(\fbf_{a_t}, \bm \theta^*, \bm \beta^*) - r_t}^2
        \\
        =&
        \sum_{t=1}^{M} \del{R(\fbf_{a_t}, \bm \theta, \bm \beta) - R(\fbf_{a_t}, \bm \theta^*, \bm \beta^*)}^2
        -
        2 \sum_{t=1}^{M} \del{R(\fbf_{a_t}, \bm \theta, \bm \beta) - R(\fbf_{a_t}, \bm \theta^*, \bm \beta^*)}\eta_t
                    \tag{$r_t \coloneq R(\fbf_{a_t}, \bm \theta^*, \bm \beta^*) + \eta_t$ where $\eta_t \sim \Ncal(0, \sigma^2), \forall t \in [M]$ and algebra
            }
        \\
        =&
        \opnorm{R(\cdot, \bm \theta, \bm \beta) - R(\cdot, \bm \theta^*, \bm \beta^*)}{S_{M}}^2 - \Ecal_{M}
            \tag{Let $\Ecal_{M} \coloneq
        2 \sum_{t=1}^{M} \del{R(\fbf_{a_t}, \bm \theta, \bm \beta) - R(\fbf_{a_t}, \bm \theta^*, \bm \beta^*)}\eta_t$}
    \end{align*}
    Notice that the data norm $\opnorm{R(\cdot, \bm \theta, \bm \beta) - R(\cdot, \bm \theta^*, \bm \beta^*)}{S_{M}}^2$ is fixed for any given $R$ and $S_M$. The second term, $\Ecal_M$, is the sum of zero-mean random variables $\cbr{\eta_t}_{t=1}^{M}$ with coefficients.
    Conditioned on $\cbr{a_t}_{t=1}^T$, (1) $\Ecal_M$ has expectation zero; (2) $\Ecal_M$ is sub-Gaussian with variance proxy $4 \opnorm{R(\cdot, \bm \theta, \bm \beta) - R(\cdot, \bm \theta^*, \bm \beta^*)}{S_{M}}^2 \sigma^2$.
    (1) is true since $\eta_t$ is a zero-mean random variable.
    (2) can be verified by analyzing $\EE\sbr{e^{ \del{\Ecal_M - \EE[\Ecal_M]} \lambda }}$ with any $ \lambda \in \RR$,
    
    \begin{align*}
        \EE\sbr{e^{ \del{\Ecal_M - \EE[\Ecal_M]} \lambda }}
        =&
        \EE\sbr{e^{ \Ecal_M \lambda }}
        =
        \prod_{t=1}^M \EE \sbr{ \expto{ 2 \del{R(\fbf_{a_t}, \bm \theta, \bm \beta) - R(\fbf_{a_t}, \bm \theta^*, \bm \beta^*)} \eta_t \lambda } }
            \tag{$\eta_t$ are independent of each other.}
        \\
        \leq&
        \prod_{t=1}^M \expto{\frac{4\del{ R(\fbf_{a_t}, \bm \theta, \bm \beta) - R(\fbf_{a_t}, \bm \theta^*, \bm \beta^*) }^2 \sigma^2 \lambda^2}{2}}
            \tag{$\eta_t$ is $\sigma^2$-sub-Gaussian.}
        \\
        =&
        \expto{ \frac{4\opnorm{R(\cdot, \bm \theta, \bm \beta) - R(\cdot, \bm \theta^*, \bm \beta^*)}{S_{M}}^2 \sigma^2 \lambda^2}{2} }
    \end{align*}
    Thus, we have verified (2).
    
    Let $\lambda = \tfrac{1}{4 \sigma^2}$, the above inequality becomes the following, 
    \begin{align*}
        & \EE \sbr{\expto{ \frac{\Ecal_{M}}{4\sigma^2} }} 
        \leq
        \expto{ \frac{\opnorm{R(\cdot, \bm \theta, \bm \beta) - R(\cdot, \bm \theta^*, \bm \beta^*)}{S_{M}}^2}{8\sigma^2} }
        \\
        \Rightarrow&
        \EE \sbr{\expto{ \frac{\Ecal_{M}}{4\sigma^2} - \frac{\opnorm{R(\cdot, \bm \theta, \bm \beta) - R(\cdot, \bm \theta^*, \bm \beta^*)}{S_{M}}^2}{8\sigma^2} } } 
        \leq 1
        \\
        \Rightarrow&
        \PP\del{\expto{ \frac{\Ecal_{M}}{4\sigma^2} - \frac{\opnorm{R(\cdot, \bm \theta, \bm \beta) - R(\cdot, \bm \theta^*, \bm \beta^*)}{S_{M}}^2}{8\sigma^2} } 
            \geq \frac{1}{\delta'}}
        \leq \delta'
            \tag{apply Markov inequality by taking $\expto{\ldots}$ as a random variable.}
        \\
        \Rightarrow&
        \PP\del{ \frac{\Ecal_{M}}{4\sigma^2} - \frac{\opnorm{R(\cdot, \bm \theta, \bm \beta) - R(\cdot, \bm \theta^*, \bm \beta^*)}{S_{M}}^2}{8\sigma^2} 
            \leq \log\del{\frac{1}{\delta'}} }
        \geq 1 - \delta'
        \\
        \Rightarrow&
        \PP\del{ \Ecal_{M} \leq \frac{1}{2}\opnorm{R(\cdot, \bm \theta, \bm \beta) - R(\cdot, \bm \theta^*, \bm \beta^*)}{S_{M}}^2 + 4 \sigma^2 \log\del{\frac{1}{\delta'}} }
        \geq 1 - \delta'
    \end{align*}

    In other words, we have that with probability at least $1 - \delta'$,
    \begin{align*}
        \sse_{M} (\bm \theta, \bm \beta) - \sse_{M} (\bm \theta^*, \bm \beta^*)
        =&
        \opnorm{R(\cdot, \bm \theta, \bm \beta) - R(\cdot, \bm \theta^*, \bm \beta^*)}{S_{M}}^2
        -\Ecal_{M}
        \\
        \geq&
        \opnorm{R(\cdot, \bm \theta, \bm \beta) - R(\cdot, \bm \theta^*, \bm \beta^*)}{S_{M}}^2
        - \frac{1}{2}\opnorm{R(\cdot, \bm \theta, \bm \beta) - R(\cdot, \bm \theta^*, \bm \beta^*)}{S_{M}}^2 
        - 4\sigma^2\log\del{\frac{1}{\delta'}}
        \\
        =&
        \frac{1}{2}\opnorm{R(\cdot, \bm \theta, \bm \beta) - R(\cdot, \bm \theta^*, \bm \beta^*)}{S_{M}}^2 - 4\sigma^2\log\del{\frac{1}{\delta'}}
            \tag{Let $\lambda = \fr14$.}
    \end{align*}

        \Cref{eqn:intermediate-lower-bound-of-error-on-data-norm} is derived by using a very similar way by letting $\Ecal'_{M} = -\Ecal_{M}$, and the downstream analysis will go through by replacing $\Ecal_{M}$ by $\Ecal'_{M}$ and adjusting the inequality accordingly.

\end{proof}

\Cref{lemma:bounding-estimation-error-by-data-norm} helps us control the in-sample prediction error in terms of squared data norm. Next, we would like to control the out-of-sample error, which is the expected error over the uniform distribution of the $K$ beams. To this end, we use Chernoff's bound (\Cref{lemma:chernoff-inequality}) that relates in-sample error to out-of-sample error. Specifically, we will use an inequality (\Cref{eqn:control-out-of-sample-mean}) implied by the standard Chernoff bound.

\begin{lemma}[Chernoff's inequality] \label{lemma:chernoff-inequality}
    For i.i.d. random variables $X_1, \dots, X_M$ with a bounded support set $X \in [0, A]$. Denote its mean $\mu = \EE[X]$ and empirical mean $\bar{X}_M \coloneq \tfrac{1}{M} \sum_{i=1}^M X_i$. Then for any $\varepsilon > 0$, we have
    \begin{align}
        \PP\del{ \bar{X}_M < \mu - \varepsilon }
        \leq&
        \expto{ - \frac{M\varepsilon^2}{2 A (\mu + \varepsilon)} }
            \label{eqn:tighter-chernoff-lower-tail-bound}
    \end{align}
    and consequently,
    \begin{align}
        \PP\del{ \mu < 2\bar{X}_M + \frac{8A}{M} \log\del{\frac{1}{\delta}} } \geq 1 - \delta
            \label{eqn:control-out-of-sample-mean}
    \end{align}
\end{lemma}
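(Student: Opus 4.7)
The plan is to prove the two displays in sequence. For the first (a multiplicative lower-tail Chernoff bound), I would use the classical exponential Markov argument: for $s > 0$, applying Markov's inequality to $e^{-sM\bar{X}_M}$, factoring the expectation by independence, and combining with the chord bound $e^{-sx} \leq 1 - x/A + (x/A)e^{-sA}$ (valid by convexity of $e^{-sx}$ on $[0,A]$) yields $\EE[e^{-sX_i}] \leq \exp((\mu/A)(e^{-sA}-1))$. Optimizing in $s$ gives $s^\ast = (1/A)\log(\mu/(\mu-\varepsilon))$ and produces the standard Kullback--Leibler bound
\[
\PP(\bar{X}_M < \mu - \varepsilon) \leq \exp\!\left(-\tfrac{M\mu}{A}\bigl[(1-u)\log(1-u) + u\bigr]\right), \qquad u \coloneqq \varepsilon/\mu.
\]

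To convert this into the stated form, I would verify the elementary inequality $(1-u)\log(1-u) + u \geq u^2/(2(1+u))$ on $u \in [0,1)$. Letting $h(u)$ denote the difference between the two sides, a direct calculation gives $h(0) = h'(0) = 0$ and $h''(u) = 1/(1-u) - 1/(1+u)^3 \geq 0$, so $h \geq 0$ by convexity. Substituting this lower bound into the exponent and simplifying via $1+u = (\mu+\varepsilon)/\mu$ collapses the right-hand side to $\exp(-M\varepsilon^2/(2A(\mu+\varepsilon)))$. The degenerate case $\varepsilon > \mu$ is trivial since $\bar{X}_M \geq 0$, so the first display holds.

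For the second display~\eqref{eqn:control-out-of-sample-mean}, the plan is to invert the first bound. Setting its right-hand side equal to $\delta$ and writing $L \coloneqq \log(1/\delta)$ produces the quadratic $M\varepsilon^2 - 2AL\varepsilon - 2AL\mu = 0$ in $\varepsilon$, whose positive root satisfies, by subadditivity of $\sqrt{\cdot}$ followed by AM--GM,
\[
\varepsilon \leq \tfrac{2AL}{M} + \sqrt{\tfrac{2AL\mu}{M}} \leq \tfrac{3AL}{M} + \tfrac{\mu}{2}.
\]
On the complementary event $\bar{X}_M \geq \mu - \varepsilon$, which occurs with probability at least $1-\delta$ by the first display, this rearranges to $\mu \leq 2\bar{X}_M + 6AL/M \leq 2\bar{X}_M + 8AL/M$, giving the claim.

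The only nonroutine step is verifying the auxiliary inequality $(1-u)\log(1-u) + u \geq u^2/(2(1+u))$; the short convexity argument above is where a little care is needed, and it is what lets the standard KL-form of the Chernoff bound be downgraded precisely to the looser $\mu+\varepsilon$ form in the statement. Everything else amounts to textbook Chernoff manipulation and elementary algebra.
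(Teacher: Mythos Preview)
Your proof is correct and follows the same overall two-stage strategy as the paper (establish the KL-type Chernoff exponent, then invert it with an AM--GM step), but the technical execution differs in both stages.

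For the first display, the paper quotes the standard Bernoulli-KL Chernoff bound $\PP(\bar X_M<\mu-\varepsilon)\le \exp\bigl(-M\,\mathrm{KL}((\mu-\varepsilon)/A\,\|\,\mu/A)\bigr)$ as a black box and then applies the generic inequality $\mathrm{KL}(a\|b)\ge (a-b)^2/(2\max\{a,b\})$ to reach \eqref{eqn:tighter-chernoff-lower-tail-bound}. You instead derive a Poisson-type exponent from scratch via the chord bound and the step $1+z\le e^z$, and then verify the tailored inequality $(1-u)\log(1-u)+u\ge u^2/(2(1+u))$ by a clean second-derivative argument. Your route is more self-contained and makes no appeal to an external KL inequality; the paper's is shorter by outsourcing the hard part.

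For the second display, the paper first loosens $\mu+\varepsilon\le 2(\mu\vee\varepsilon)$ to split the exponent into two regimes, and then chooses $\varepsilon=\mu/2+4AL/M$ via AM--GM to land exactly on the constant $8$. You instead solve the quadratic $M\varepsilon^2-2AL\varepsilon-2AL\mu=0$ directly, bound the root by $\sqrt{a+b}\le\sqrt a+\sqrt b$ and AM--GM, and obtain the slightly sharper $\varepsilon\le 3AL/M+\mu/2$, hence $\mu\le 2\bar X_M+6AL/M$, before relaxing to $8$. Both are valid; your constants are in fact a touch better along the way.
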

    
\begin{proof}
    We start from the standard Chernoff's bound on the random variable $\bar{X}_M$:
    \begin{align}
        \PP\del{ \bar{X}_M < \mu - \varepsilon} 
        \leq& 
        \expto{-M \KL{\frac{\mu-\varepsilon}{A}}{\frac{\mu}{A}}}
            \label{eqn:chernoff-lower-tail-bound}
    \end{align}
    where $\KL{\mu_1}{\mu_2}$ represents the KL divergence between two Bernoulli distributions with means $\mu_1$ and $\mu_2$, respectively.
    \Cref{eqn:chernoff-lower-tail-bound} is a standard Chernoff's bound for bounded random variables and we skip the proof.
    Then we are going to show \Cref{eqn:tighter-chernoff-lower-tail-bound} based on \Cref{eqn:chernoff-lower-tail-bound}.
    From \Cref{eqn:chernoff-lower-tail-bound}, it suffices to show that the following inequality holds
    \[
        \KL{\frac{\mu + \varepsilon}{A}}{\frac{\mu}{A}} \geq \frac{\varepsilon^2}{2A(\mu+\varepsilon)}.
    \]
    Since we know that $\KL{a}{b} \geq \frac{\del{a - b}^2}{2 \max\cbr{a, b}}$, we have
    \[
        \KL{\frac{\mu + \varepsilon}{A}}{\frac{\mu}{A}}
        \geq
        \frac{\del{\frac{\mu+\varepsilon}{A} - \frac{\mu}{A}}^2}{2 \max\cbr{\frac{\mu+\varepsilon}{A}, \frac{\mu}{A}}}
        =
        \frac{\frac{\varepsilon^2}{A^2}}{2 \frac{\mu+\varepsilon}{A}}
        =
        \frac{\varepsilon^2}{2A(\mu+\varepsilon)}
    \]
    Plug in the above inequality into \Cref{eqn:chernoff-lower-tail-bound}, we prove that \Cref{eqn:tighter-chernoff-lower-tail-bound} holds.

    Next, we are going to show \Cref{eqn:control-out-of-sample-mean} based on \Cref{eqn:tighter-chernoff-lower-tail-bound}.
    To finish the proof, we need to find an appropriate $\varepsilon$ such that $\expto{ - \frac{M\varepsilon^2}{2 A (\mu + \varepsilon)} } \leq \delta$. 
    Based on the basic inequality $a + b \leq 2 (a \vee b)$, where $a, b \geq 0$ and $\vee$ means maximum, \Cref{eqn:tighter-chernoff-lower-tail-bound} implies the following:
    \begin{align*}
        \PP\del{ \bar{X}_M < \mu - \varepsilon }
        \leq&
        \expto{- \frac{M \varepsilon^2}{2A \cdot 2(\mu \vee \varepsilon)} }
        =
        \expto{ - \frac{M\varepsilon^2}{4 \mu A} } \vee \expto{ -\frac{M \varepsilon}{ 4 A} }
    \end{align*}
    We let the RHS be at most $\delta$:
    \begin{align*}
        \varepsilon \geq& \sqrt{ \frac{4 \mu A \log\frac1\delta}{M} } \vee \frac{4 A \log\frac1\delta}{M}
        \\
        \Leftarrow \varepsilon
        \geq&
        \del{ \fr\mu2 + \frac{2 A \log\frac1\delta}{M} } \vee \frac{4 A \log\frac1\delta}{M}
            \tag{AM-GM inequality}
        \\
        \Leftarrow \varepsilon
        \geq&
        \fr\mu2 + \frac{4 A \log\frac1\delta}{M}
            \tag{Since $\mu \geq 0$}
    \end{align*}
    Therefore, by letting $\varepsilon = \fr\mu2 + \frac{4 A \log\frac1\delta}{M}$, $\expto{ - \frac{M\varepsilon^2}{2 A (\mu + \varepsilon)} } \leq \delta$.
    Finally, plugging this choice of $\varepsilon$ in \Cref{eqn:tighter-chernoff-lower-tail-bound} and we have that with probability at most $\delta$,
    \begin{align*}
        \bar{X}_M <& \mu - \varepsilon \\
        \Leftrightarrow
        \bar{X}_M <& \mu - \del{ \fr\mu2 + \frac{4 A \log\frac1\delta}{M} }
            \tag{Based on the definition of $\varepsilon$}
        \\
        \Leftrightarrow
        \mu >& 2\bar{X}_M + \frac{8 A \log\frac1\delta}{M}
    \end{align*}
    \Cref{eqn:control-out-of-sample-mean} follows.
\end{proof}

\medskip

\subsubsection{Bounding the regret for \pretc} We are ready to conclude the proof of \Cref{thm:reg-etc}. 

\begin{proof}[Proof of \Cref{thm:reg-etc}]

Denote the instantaneous regret at time step $t$ as 
\[
\regret_t := \max_{a \in [K]} R(\fbf_a, \bm \theta^*, \bm \beta^*) - R(\fbf_{a_t}, \bm \theta^*, \bm \beta^*).
\]

For the first $M$ steps, since the algorithm purely explores, we bound the regret of the first $M$ steps by $M$ times the maximum reward difference $2R_{\max}$. Therefore,
\begin{align*}
    \Regret_T =& \sum_{t=1}^{M} \regret_t + \sum_{t=M+1}^{T} \regret_t
    \\
    \leq&
    O\del{ M R_{\max} + T \sqrt{\frac{k \sigma^2 R_{\max}^2}{M}\log\del{\frac{B \abs{\Theta}}{\delta}}} }
    \\
    \leq&
    O\del{ R_{\max} T^{2/3} \del{k\sigma^2 \log\del{\frac{B\abs{\Theta}}{\delta}}}^{1/3} }
\end{align*}
where in the first inequality, we also apply \Cref{lemma:bounding-simple-regret} to bound each term in the second summation.
In the second inequality, we let $M = T^{2/3} \del{k\sigma^2 \log\del{\frac{B\abs{\Theta}}{\delta}}}^{1/3}$ to balance the two terms.
\end{proof}

\begin{lemma}[Instantaneous Regret of the Committed Beam] \label{lemma:bounding-simple-regret}
    Suppose Assumptions~\ref{assum:quadratic-lower-bound} and \ref{assum:lip} hold, with probability $1-\delta$, the instantaneous regret of \pretc in the steps of the exploitation phase is bounded by:
    \begin{align}
        \regret_{t}
        \leq \upbound{\sqrt{\frac{k \sigma^2 R_{\max}^2}{M} \log\del{\frac{\abs{B}\abs{\Theta}}{\delta}}}}, \forall M + 1 \leq t \leq T
    \end{align}
\end{lemma}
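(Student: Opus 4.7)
The plan is to translate the MLE's optimality into a parameter estimation error bound and then into a regret bound via the Lipschitz assumption, proceeding in four steps: (i) MLE optimality combined with \Cref{lemma:bounding-estimation-error-by-data-norm} bounds the in-sample squared prediction error of $R(\cdot,\hat{\bm\theta}_M,\hat{\bm\beta}_M)$ relative to $R(\cdot,\bm\theta^*,\bm\beta^*)$; (ii) \Cref{lemma:chernoff-inequality} converts this in-sample bound into an out-of-sample bound on $\EE_{a\sim U([K])}\bigl[(R(\fbf_a,\hat{\bm\theta}_M,\hat{\bm\beta}_M) - R(\fbf_a,\bm\theta^*,\bm\beta^*))^2\bigr]$; (iii) \Cref{assum:quadratic-lower-bound} translates this expected MSE bound into bounds on $\|\bm\theta^*-\hat{\bm\theta}_M\|$ and $\|\log\bm\beta^*-\log\hat{\bm\beta}_M\|$; and (iv) \Cref{assum:lip} together with the optimality of $a_t$ under the estimated reward delivers the bound on $\regret_t$.

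For step (i), since $(\hat{\bm\theta}_M,\hat{\bm\beta}_M)$ minimizes $\sse_M$, the left-hand side of \Cref{eqn:upper-bound-of-error-on-data-norm} is nonpositive, giving, with probability at least $1-\delta/2$,
\[
\opnorm{R(\cdot,\hat{\bm\theta}_M,\hat{\bm\beta}_M) - R(\cdot,\bm\theta^*,\bm\beta^*)}{S_M}^2 \leq 8 k \sigma^2 \log\frac{2|B||\Theta|}{\delta}.
\]
For step (ii), for each fixed $(\bm\theta,\bm\beta)\in B^k\times\Theta^k$ the random variables $X_t(\bm\theta,\bm\beta) := (R(\fbf_{a_t},\bm\theta,\bm\beta) - R(\fbf_{a_t},\bm\theta^*,\bm\beta^*))^2$ are i.i.d.\ (since the $a_t$ are drawn uniformly and independently in the exploration phase) and take values in $[0,4R_{\max}^2]$ by \Cref{assum:bounded}. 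Applying \Cref{eqn:control-out-of-sample-mean} and union-bounding over the finite grid (whose logarithmic cardinality is $k(\log|B|+\log|\Theta|)$) yields, with probability at least $1-\delta/2$, uniformly in $(\bm\theta,\bm\beta)$ and hence for the data-dependent MLE,
\[
\EE_{a\sim U([K])}\bigl[(R(\fbf_a,\bm\theta,\bm\beta) - R(\fbf_a,\bm\theta^*,\bm\beta^*))^2\bigr] \leq \frac{2}{M}\opnorm{R(\cdot,\bm\theta,\bm\beta) - R(\cdot,\bm\theta^*,\bm\beta^*)}{S_M}^2 + O\!\left(\frac{k R_{\max}^2 \log(|B||\Theta|/\delta)}{M}\right).
\]
Chaining steps (i) and (ii) bounds the expected MSE by $O(k\sigma^2 R_{\max}^2 \log(|B||\Theta|/\delta)/M)$ up to the precise dependence on $\sigma$ versus $R_{\max}$.

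Steps (iii) and (iv) are then routine. \Cref{assum:quadratic-lower-bound} converts the expected MSE bound into $\|\bm\theta^*-\hat{\bm\theta}_M\| \leq O(\sqrt{k\sigma^2 R_{\max}^2 \log(|B||\Theta|/\delta)/M})$ and the analogous bound for $\log\bm\beta$, after dividing by $C_1$ and $C_2$. Writing $a^* = \arg\max_a R(\fbf_a,\bm\theta^*,\bm\beta^*)$ and recalling $a_t = \arg\max_a R(\fbf_a,\hat{\bm\theta}_M,\hat{\bm\beta}_M)$, the standard argmax-mismatch decomposition
\[
\regret_t \leq \bigl[R(\fbf_{a^*},\bm\theta^*,\bm\beta^*) - R(\fbf_{a^*},\hat{\bm\theta}_M,\hat{\bm\beta}_M)\bigr] + \bigl[R(\fbf_{a_t},\hat{\bm\theta}_M,\hat{\bm\beta}_M) - R(\fbf_{a_t},\bm\theta^*,\bm\beta^*)\bigr]
\]
(where the cancelled middle term $R(\fbf_{a^*},\hat{\bm\theta}_M,\hat{\bm\beta}_M) - R(\fbf_{a_t},\hat{\bm\theta}_M,\hat{\bm\beta}_M)$ is nonpositive by optimality of $a_t$), combined with \Cref{assum:lip} applied to each bracket, delivers the claimed rate.

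The main obstacle I anticipate is the joint-event bookkeeping across the two concentration steps: both \Cref{lemma:bounding-estimation-error-by-data-norm} and \Cref{lemma:chernoff-inequality} must be applied uniformly over the finite grid $B^k\times\Theta^k$ so as to cover the data-dependent MLE, and the $\delta$-budget must be split and union-bound factors tracked carefully so that only a single $\log(|B||\Theta|/\delta)$ factor appears in the final bound. A secondary technicality is that \Cref{assum:quadratic-lower-bound} is stated for sorted path angles; this is addressed by the permutation-invariance of $R(\fbf_a,\cdot,\cdot)$ in the $(\theta_i,\beta_i)$ pairs, which lets us align the MLE paths with the ground-truth ones before invoking the assumption.
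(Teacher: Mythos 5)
Your proposal is correct and follows essentially the same route as the paper's proof: the same argmax-mismatch decomposition with the middle term dropped by optimality of $a_t$, the same use of \Cref{lemma:bounding-estimation-error-by-data-norm} plus MLE optimality for the in-sample bound, \Cref{lemma:chernoff-inequality} for the out-of-sample transfer, and \Cref{assum:quadratic-lower-bound} together with \Cref{assum:lip} to convert the expected squared error into an instantaneous regret bound. Your explicit union bound over the grid in the Chernoff step is in fact slightly more careful than the paper, which applies \Cref{lemma:chernoff-inequality} directly to the data-dependent MLE; this affects only constants and logarithmic factors, not the final rate.
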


\begin{proof}
    Since \pretc does not update the estimation of parameters and the committed beam does not change in the exploitation phase, it suffices to prove that $\regret_{M+1} \leq \iupbound{ \sqrt{ \tfrac{k \sigma^2 R_{\max}^2}{M}\log\idel{\frac{\abs{B} \abs{\Theta}}{\delta}} } }$.
    For simplicity, we abbreviate $a_{M+1}$ as $\hat{a}$, $\hat{\bm \theta}_{M+1}$ as $\hat{\bm \theta}$ and $\hat{\bm \beta}_{M+1}$ as $\hat{\bm \beta}$.
    We also denote $a^* \coloneq \argmax_{a\in[K]} R(\fbf_a, \bm \theta^*, \bm \beta^*)$.

    First, we decompose the instantaneous regret as follows,
\begin{align*}
    & \regret_{M+1}
    \\
    =& 
        R(\fbf_{a^*}, \bm \theta^*, \bm \beta^*) - R(\fbf_{\hat{a}}, \bm \theta^*, \bm \beta^*) \\
    =& 
        \del{ R(\fbf_{a^*}, \bm \theta^*, \bm \beta^*) - R(\fbf_{a^*}, \hat{\bm \theta}, \hat{\bm \beta})
        }
        + 
        \del{ R(\fbf_{a^*}, \hat{\bm \theta}, \hat{\bm \beta}) - R(\fbf_{\hat{a}}, \hat{\bm \theta}, \hat{\bm \beta})
        }
        + 
        \del{ 
        R(\fbf_{\hat{a}}, \hat{\bm \theta}, \hat{\bm \beta}) - R(\fbf_{\hat{a}}, \bm \theta^*, \bm \beta^*)
        }
    \\
    \leq&
        \del{ R(\fbf_{a^*}, \bm \theta^*, \bm \beta^*) - R(\fbf_{a^*}, \hat{\bm \theta}, \hat{\bm \beta})
        }
        + 
        \del{ R(\fbf_{\hat{a}}, \hat{\bm \theta}, \hat{\bm \beta}) - R(\fbf_{\hat{a}}, \bm \theta^*, \bm \beta^*)
        }
\end{align*}
The last inequality is due to the fact that $R(\fbf_{\hat{a}}, \bm \hat{\theta}, \bm \hat{\beta}) \geq R(\fbf_{a}, \hat{\bm \theta}, \hat{\bm \beta}), \forall a \in \Acal$ and we upper bound the second difference $R(\fbf_{a^*}, \hat{\bm \theta}, \hat{\bm \beta}) - R(\fbf_{\hat{a}}, \hat{\bm \theta}, \hat{\bm \beta})$ by $0$.
Then we need to control the first and the last differences, which are the estimation error of the expected reward function on the optimal beam $a^*$ and the committed beam $\hat{a}$.
Let $\hat{R}$ denote a shorthand of $R(\cdot, \hat{\bm \theta}, \hat{\bm \beta})$.
Next, we define some favorable events in which the estimation error is well-controlled and show that these events hold with high probability.

\begin{align*}
    E_1 \coloneq& \cbr{ 
        \opnorm{R(\cdot, \hat{\bm \theta}, \hat{\bm \beta}) - R(\cdot, \bm \theta^*, \bm \beta^*)}{S_{M}}^2 
        \leq
        8k\sigma^2\log\del{\frac{ 2\abs{B}\abs{\Theta} }{\delta}}
     }
    \\
    E_2 \coloneq& \cbr{ \EE_{a \sim \text{Unif}(\Acal)} \sbr{ \del{R(\fbf_{a}, \hat{\bm \theta}, \hat{\bm \beta}) - R(\fbf_{a}, \bm \theta^*, \bm \beta^*)}^2 } 
        \leq 
        \frac{2}{M}\opnorm{R(\cdot, \hat{\bm \theta}, \hat{\bm \beta}) - R(\cdot, \bm \theta^*, \bm \beta^*)}{S_{M}}^2 
        + \frac{32 R_{\max}^2}{M} \log\del{\frac{2}{\delta}} }
\end{align*}

\paragraph{$E_1$}
According to \Cref{lemma:bounding-estimation-error-by-data-norm}, $E_1$ holds with probability at least $1 - \delta/2$.
When $E_1$ holds, we have
\begin{align}
    \frac{1}{2} \opnorm{R(\cdot, \hat{\bm \theta}, \hat{\bm \beta}) - R(\cdot, \bm \theta^*, \bm \beta^*)}{S_{M}}^2 - 4k\sigma^2\log\del{\frac{ 2\abs{B}\abs{\Theta} }{\delta}}
    \leq&
    \sse_{M}(\hat{\bm \theta}, \hat{\bm \beta}) - \sse_{M}(\bm \theta^*, \bm \beta^*)
        \nonumber
    \\
    \Rightarrow
    \frac{1}{2} \opnorm{R(\cdot, \hat{\bm \theta}, \hat{\bm \beta}) - R(\cdot, \bm \theta^*, \bm \beta^*)}{S_{M}}^2 - 4k\sigma^2\log\del{\frac{ 2\abs{B}\abs{\Theta} }{\delta}}
    \leq& 0
        \tag{$(\hat{\bm  \theta}, \hat{\bm  \beta})$ is the MSE estimator.}
        \nonumber
    \\
    \Rightarrow
    \opnorm{R(\cdot, \hat{\bm \theta}, \hat{\bm \beta}) - R(\cdot, \bm \theta^*, \bm \beta^*)}{S_{M}}^2
    \leq& 8k\sigma^2\log\del{\frac{ 2\abs{B}\abs{\Theta} }{\delta}}
        \label{eqn:intermediate-event-1}
\end{align}

\paragraph{$E_2$}
We will apply \Cref{lemma:chernoff-inequality} to show $E_2$ holds with probability at least $1 - \delta/2$. To this end, we let $\idel{R(\fbf_{a_i}, \hat{\bm \theta}, \hat{\bm \beta}) - R(\fbf_{a_i}, \bm \theta^*, \bm \beta^*)}^2$ be $X_i$ for every $i \in \cbr{0, 1, \dots, M}$, then the expectation $\mu$ and the bound of variable $A$ satisfy the following:
\begin{align*}
    \mu \coloneq& \EE_{a \sim \text{Unif}(\Acal)} \sbr{\del{R(\fbf_{a}, \hat{\bm \theta}, \hat{\bm \beta}) - R(\fbf_{a}, \bm \theta^*, \bm \beta^*)}^2}
    \\
    A 
    \coloneq& 4 R_{\max}^2 \geq \max_{a \in [K]} \del{ R(\fbf_a, \hat{\bm \theta}, \hat{\bm \beta}) - R(\fbf_a, \bm \theta^*, \bm \beta^*) }^2,
\end{align*}
where $R_{\max}$ is the maximum absolute value of $R$.
Then, according to \Cref{eqn:control-out-of-sample-mean} in \Cref{lemma:chernoff-inequality}, we have that with probability at least $1 - \delta/2$, $E_2$ holds.

For the rest of the proof, we condition on $E_1 \cap E_2$ happening, which occurs with probability $1 - \delta$ by union bound, we have
\begin{align}
    \mu \leq&
    \frac{2}{M}\opnorm{R(\cdot, \hat{\bm \theta}, \hat{\bm \beta}) - R(\cdot, \bm \theta^*, \bm \beta^*)}{S_{M}}^2 
        + \frac{32 R_{\max}^2}{M} \log\del{\frac{2}{\delta}}
            \tag{Event $E_2$}
    \\
    \leq&
    \frac{16k\sigma^2}{M} \log\del{\frac{ 2\abs{B}\abs{\Theta} }{\delta}} + \frac{32 R_{\max}^2}{M} \log\del{\frac{2}{\delta}}
            \tag{According to \Cref{eqn:intermediate-event-1}}
    \\
    =& \upbound{\frac{ k \sigma^2 R_{\max}^2}{M} \log\del{\frac{\abs{B}\abs{\Theta}}{\delta}}}
     \label{eqn:bound-expected-squared-error-gap}
\end{align}

Next, we bound the estimation error on beam $a^*$:
\begin{align*}
    & R(\fbf_{a^*}, \bm \theta^*, \bm \beta^*) - R(\fbf_{a^*}, \hat{\bm \theta}, \hat{\bm \beta})
    \\
    \leq&
    C_3 \abs{\bm \theta^* - \hat{\bm \theta}} + C_4 \abs{\log \bm \beta^* -\log \hat{\bm \beta}}
        \tag{Applying \Cref{assum:lip}}
    \\
    \leq&
    \frac{2\max\cbr{C_3, C_4}}{\min\cbr{\sqrt{C_1}, \sqrt{C_2}}} \cdot \frac{ \sqrt{C_1}\abs{\bm \theta^* - \hat{\bm \theta}} + \sqrt{C_2}\abs{\log \bm \beta^* - \log \hat{\bm \beta}}}{2}
    \\
    \leq&
    \frac{2\max\cbr{C_3, C_4}}{\min\cbr{\sqrt{C_1}, \sqrt{C_2}}} \cdot
    \sqrt{\frac{ C_1\abs{\bm \theta^* - \hat{\bm \theta}}^2 + C_2\abs{\log \bm \beta^* - \log \hat{\bm \beta}}^2 }{2}}
        \tag{AM-QM}
    \\
    \leq&
    \frac{2\max\cbr{C_3, C_4}}{\min\cbr{\sqrt{C_1}, \sqrt{C_2}}} \cdot \sqrt{\frac{\EE_{a \sim \text{Unif}(\Acal)} \del{R(\fbf_a, \bm \theta^*, \bm \beta^*) - R(\fbf_a, \hat{\bm \theta}, \hat{\bm \beta})}^2}{2}}
        \tag{By \Cref{assum:quadratic-lower-bound}}
\end{align*}

In the above derivation, we do not care about the constant term (such as $C_3$ and $C_4$, etc.). Then according to the \Cref{eqn:bound-expected-squared-error-gap}, we have
\[
    R(\fbf_{a^*}, \bm \theta^*, \bm \beta^*) - R(\fbf_{a^*}, \hat{\bm \theta}, \hat{\bm \beta})
    \leq
    \upbound{\sqrt{\frac{k \sigma^2 R_{\max}^2}{M} \log\del{\frac{\abs{B}\abs{\Theta}}{\delta}}}}
\]

Using a similar derivation, we also bound the estimation error on the committed beam $\hat{a}$ by
\begin{align*}
    R(\fbf_{\hat{a}}, \bm \theta^*, \bm \beta^*) - R(\fbf_{\hat{a}}, \hat{\bm \theta}, \hat{\bm \beta})
    \leq
    \upbound{\sqrt{\frac{k \sigma^2 R_{\max}^2}{M} \log\del{\frac{\abs{B}\abs{\Theta}}{\delta}}}}
\end{align*}

Finally, we conclude that
\begin{align*}
    & \regret_{M+1}
    \\
    \leq&
        \del{ R(\fbf_{a^*}, \bm \theta^*, \bm \beta^*) - R(\fbf_{a^*}, \hat{\bm \theta}, \hat{\bm \beta})
        }
        + 
        \del{ R(\fbf_{\hat{a}}, \hat{\bm \theta}, \hat{\bm \beta}) - R(\fbf_{\hat{a}}, \bm \theta^*, \bm \beta^*)
        }
    \\
    =&
    \upbound{\sqrt{\frac{k \sigma^2 R_{\max}^2}{M} \log\del{\frac{\abs{B}\abs{\Theta}}{\delta}}}}
\end{align*}

\end{proof}

\subsection{Proof of \Cref{thm:reg-pr-greedy}} \label{app:proof-reg-pr-greedy}

We introduce the following event $E_3$ to control the estimation error of the MSE estimator at time $T$:
\[
    E_3 \coloneq \cbr{
        \forall t \leq T,
        \opnorm{R(\cdot, \hat{\bm \theta}_t, \hat{\bm \beta}_t) - R(\cdot, \bm \theta^*, \bm \beta^*)}{S_{t-1}}^2 <  k \sigma^2 \log\del{\frac{T\abs{B} \abs{\Theta}}{\delta}}
    }.
\]

Event $E_3$ holds with probability greater than $1 - \delta$, which is a direct consequence of the concentration property of the MLE estimator. This is a consequence of applying \Cref{lemma:bounding-estimation-error-by-data-norm} with $\bm\theta$ and $\bm \beta$ being the MLE estimator at time $t$, $\hat{\bm \theta}_t, \hat{\bm \beta}_t$ for every $t \leq T$, and then applying a union bound over all time steps $t \leq T$.
Recall that we define $N_{t, a}$ to be the number of times that beam $a$ is chosen up to time $t$ (inclusively).

For the rest of the proof, we condition on the event $E_3$ happening.
We prove the theorem by contradiction: if at any time step $t \leq T$ a suboptimal beam $a$ is chosen and $N_{t-1, a} > \frac{k \sigma^2 \log(T\abs{B}\abs{\Theta})}{\gamma^2}$, then $\opnorm{R(\cdot, \hat{\bm \theta}_t, \hat{\bm \beta}_t) - R(\cdot, \bm \theta^*, \bm \beta^*)}{S_{t-1}}^2$ becomes larger than $k \sigma^2 \log(T\abs{B}\abs{\Theta})$, which contradicts with that $E_3$ happens:
    \begin{align*}
        &\quad \opnorm{R(\cdot, \hat{\bm \theta}_t, \hat{\bm \beta}_t) - R(\cdot, \bm \theta^*, \bm \beta^*)}{S_{t-1}}^2
        =
        \sum_{a \in \Acal}
        N_{t-1, a} \cdot \del{R(\fbf_{a}, \hat{\bm \theta}_{t}, \hat{\bm \beta}_{t}) 
        - R(\fbf_{a}, \bm \theta^*, \bm \beta^*)}^2
        \\
        &\geq
        N_{t-1, a_t} \cdot \del{R(\fbf_{a_t}, \hat{\bm \theta}_{t}, \hat{\bm \beta}_{t}) 
        - R(\fbf_{a_t}, \bm \theta^*, \bm \beta^*)}^2
        \geq
        N_{t-1, a_t} \cdot \gamma^2
        >
        k \sigma^2 \log(\abs{B}\abs{\Theta})
    \end{align*}
    The first inequality is because we only keep the terms when beam $a_t$ is chosen.
    The second inequality is due to \Cref{assum:self-identifiability}, by letting the suboptimal beam $a = a_t$, $\bm \theta = \hat{\bm \theta}_t$, and $\bm \beta = \hat{\bm \beta}_t$.
    The last inequality is due to that $N_{t-1, a_t} > \frac{k \sigma^2 \log(T\abs{B}\abs{\Theta})}{\gamma^2}$.
    This suggests that for any time step $t$ when a suboptimal beam $a_t$ is chosen more than $\frac{k \sigma^2 \log(T\abs{B}\abs{\Theta})}{\gamma^2}$ times, the estimation error of the MSE estimator at time $t$ becomes larger than $k \sigma^2 \log(T\abs{B}\abs{\Theta})$, which contradicts with the fact that $E_3$ holds.

    This contradiction implies that for any time step $t \leq T$, for any suboptimal beam $a$, $N_{t, a} \leq \frac{k \sigma^2 \log(T\abs{B}\abs{\Theta})}{\gamma^2}$, which further implies that for any suboptimal beam $a$, $N_{T, a} \leq \frac{k \sigma^2 \log(T\abs{B}\abs{\Theta})}{\gamma^2}$.

\newpage
\subsection{Regret heatmap for DeepMIMO} \label{sec:regret-maps}
We include the spatial heat maps of N-Regret$_T$ of other algorithms over the DeepMIMO dataset in \Cref{fig:cumulative-regret-map-matrix}. From the figures, it is clear that \prgreedy achieves the best overall performance, followed by \pretc and LSE.
\begin{figure}[H]
    \centering

    \begin{minipage}{0.12\textwidth}~\end{minipage}
    \begin{minipage}{0.28\textwidth}
        \centering
        \textbf{Base Station 1}
    \end{minipage}
    \begin{minipage}{0.28\textwidth}
        \centering
        \textbf{Base Station 2}
    \end{minipage}
    \begin{minipage}{0.28\textwidth}
        \centering
        \textbf{Base Station 3}
    \end{minipage}

    \vspace{0.5em}

    \begin{minipage}{0.12\textwidth}
        \raggedright
        \textbf{PR-ETC}
    \end{minipage}
    \begin{minipage}{0.28\textwidth}
        \includegraphics[width=\linewidth]{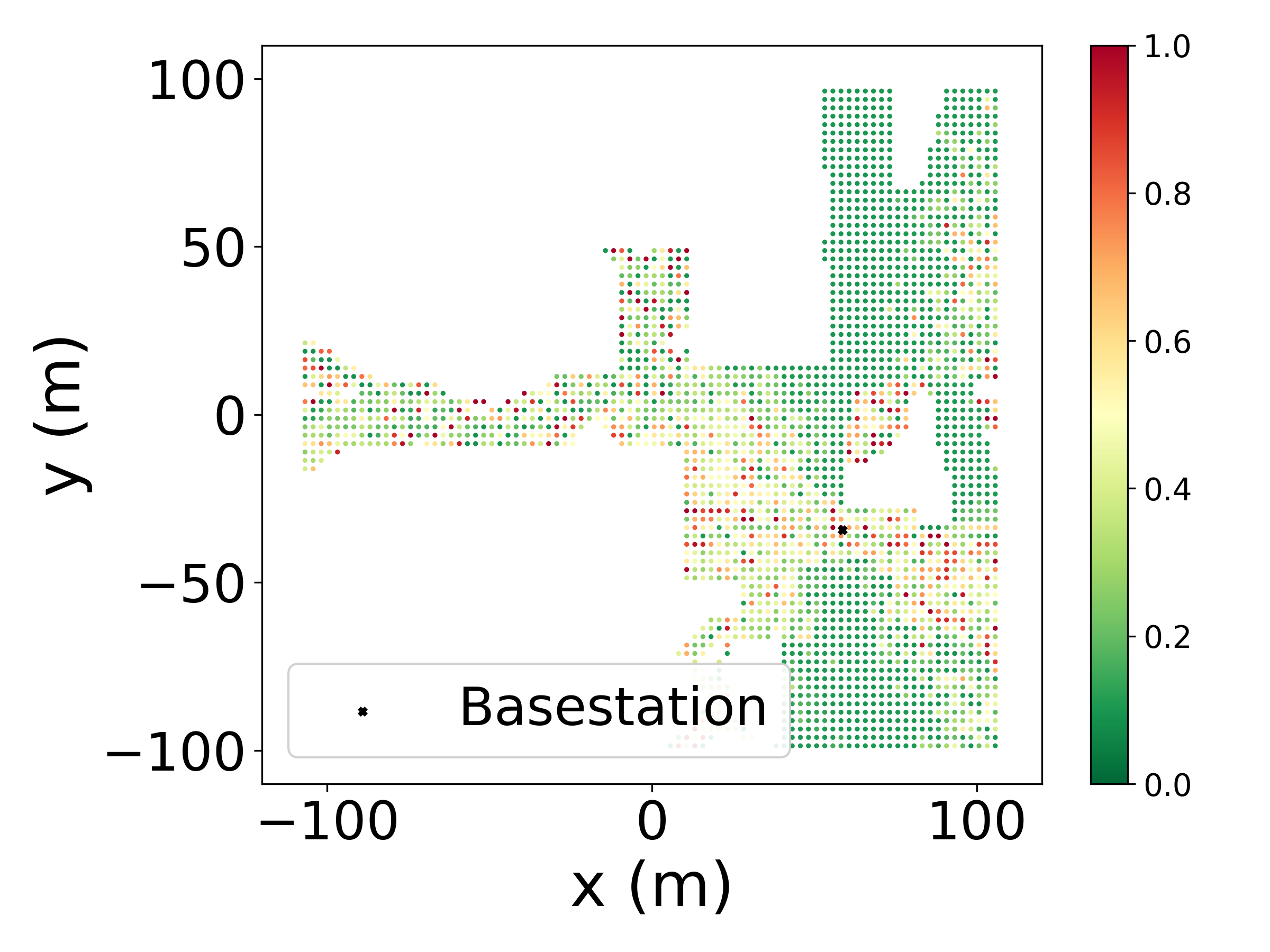}
    \end{minipage}
    \begin{minipage}{0.28\textwidth}
        \includegraphics[width=\linewidth]{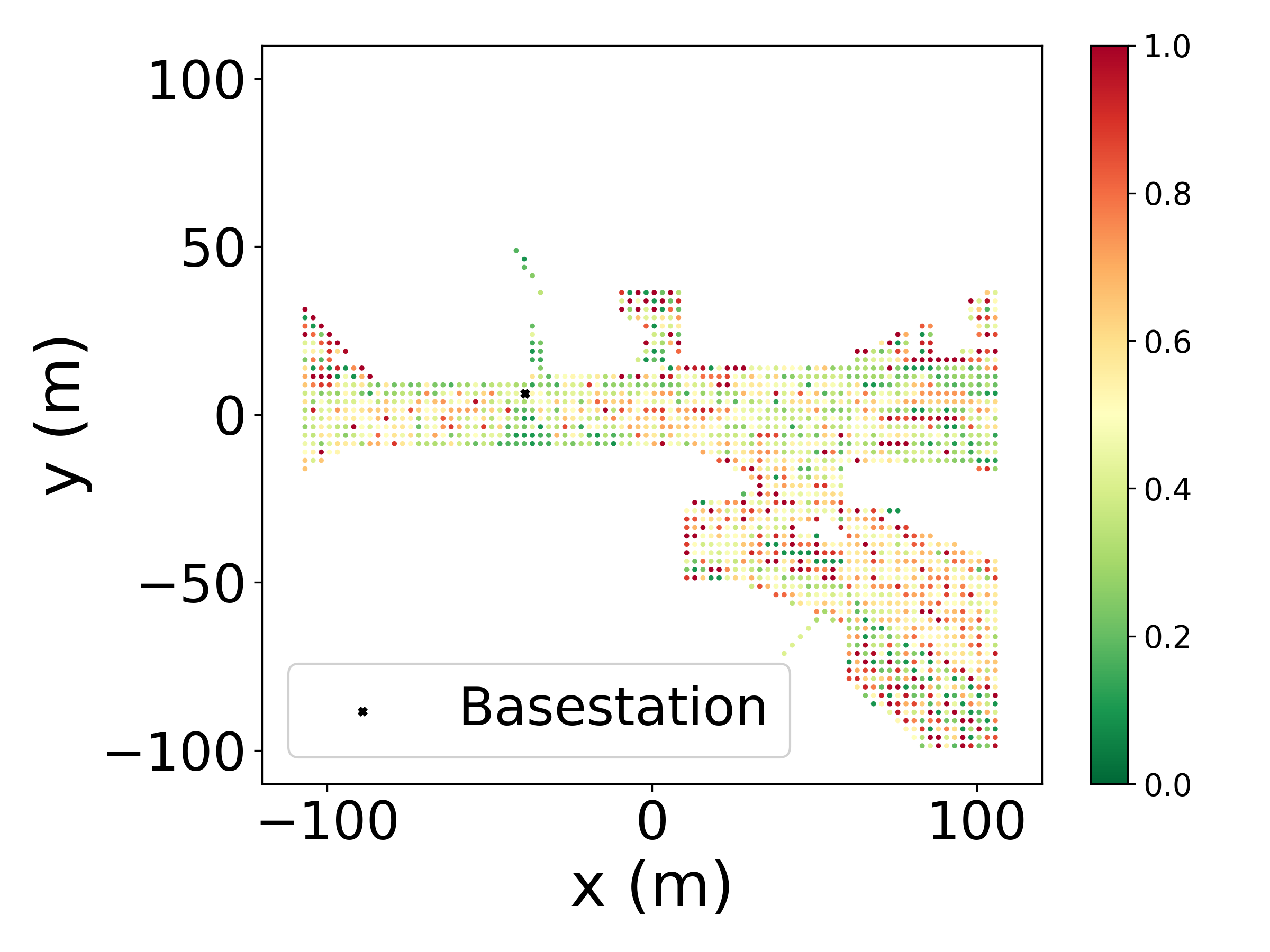}
    \end{minipage}
    \begin{minipage}{0.28\textwidth}
        \includegraphics[width=\linewidth]{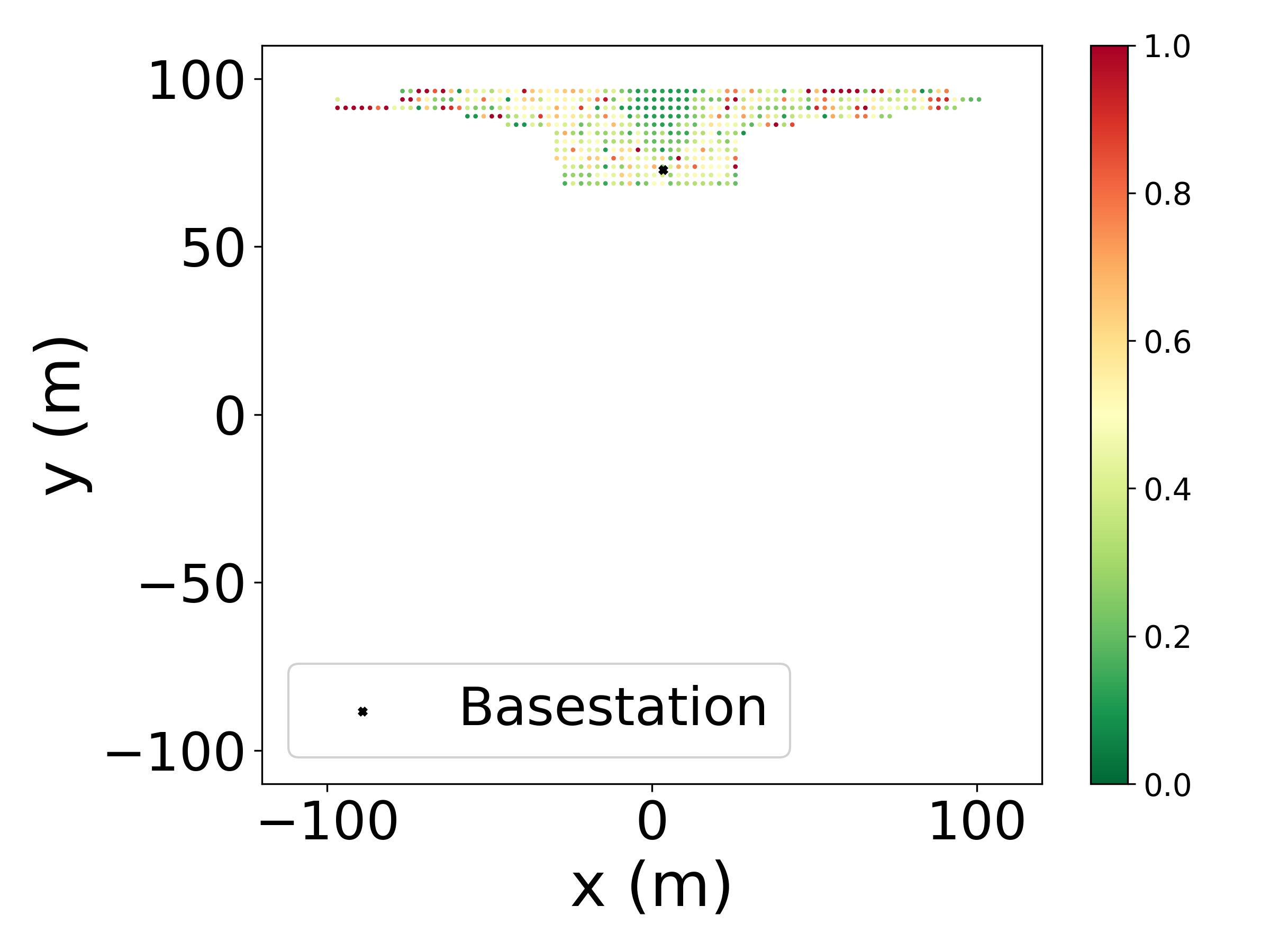}
    \end{minipage}

    \begin{minipage}{0.12\textwidth}
        \raggedright
        \textbf{LSE}
    \end{minipage}
    \begin{minipage}{0.28\textwidth}
        \includegraphics[width=\linewidth]{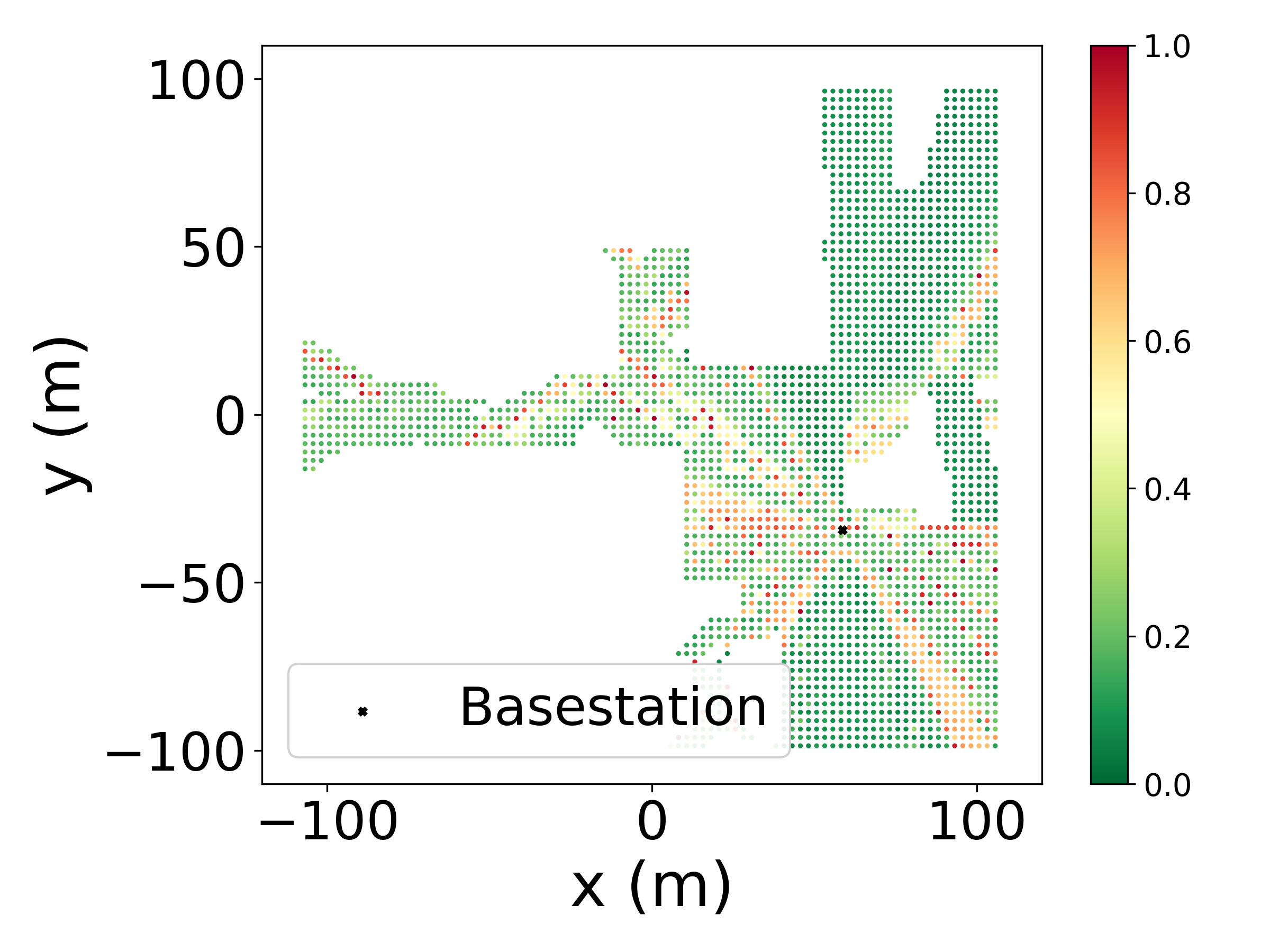}
    \end{minipage}
    \begin{minipage}{0.28\textwidth}
        \includegraphics[width=\linewidth]{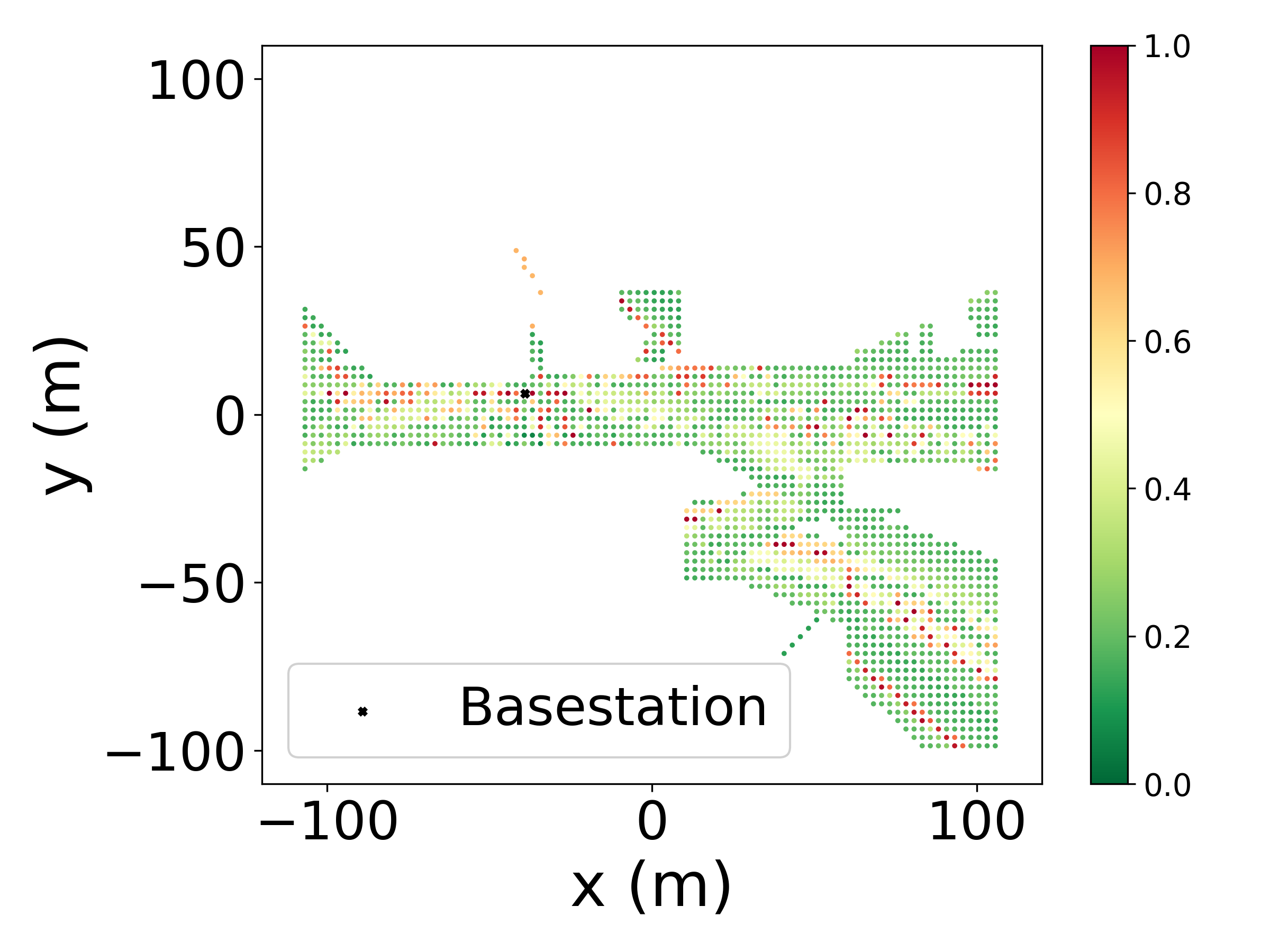}
    \end{minipage}
    \begin{minipage}{0.28\textwidth}
        \includegraphics[width=\linewidth]{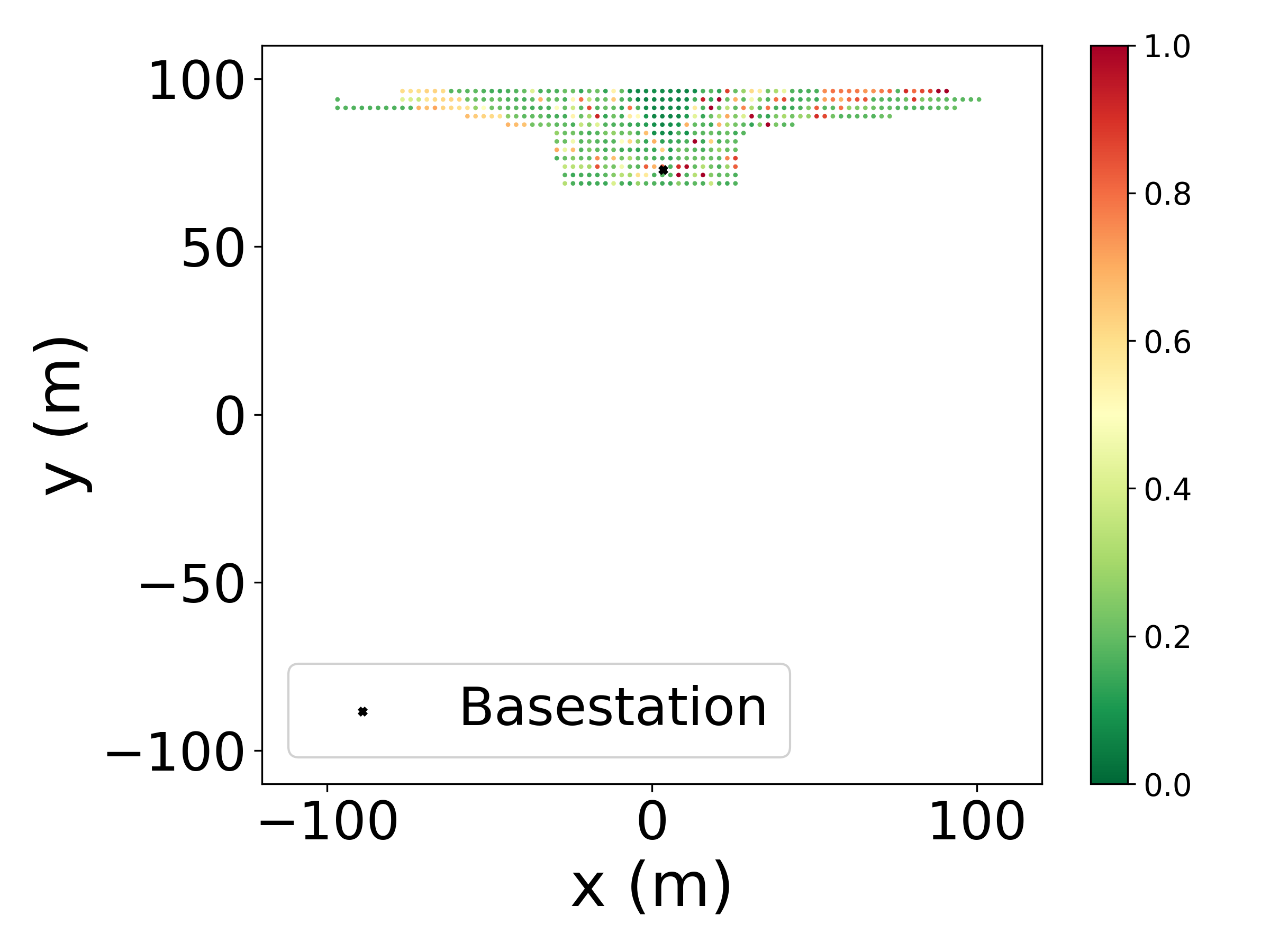}
    \end{minipage}

    \begin{minipage}{0.12\textwidth}
        \raggedright
        \textbf{BISECTION}
    \end{minipage}
    \begin{minipage}{0.28\textwidth}
        \includegraphics[width=\linewidth]{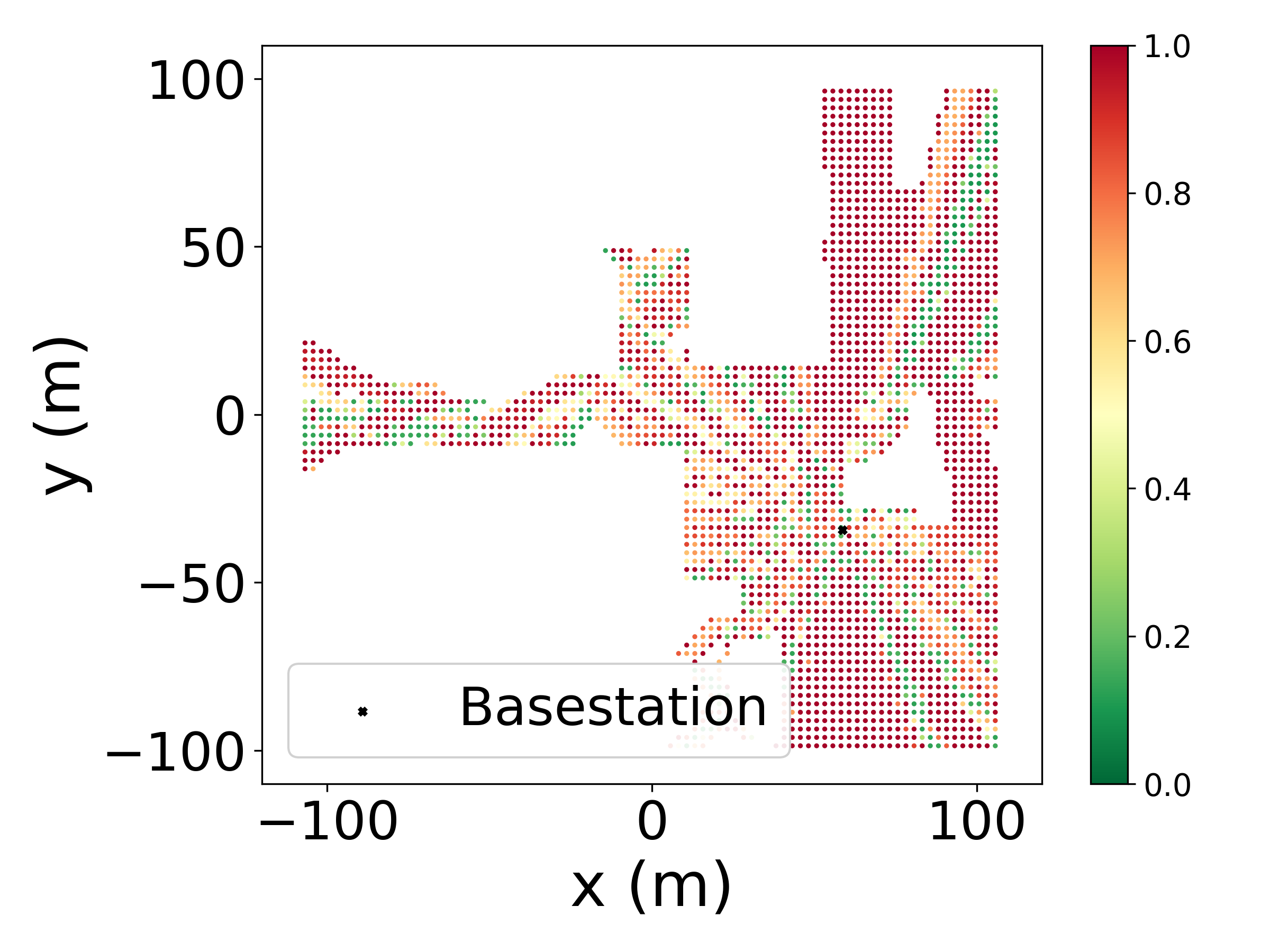}
    \end{minipage}
    \begin{minipage}{0.28\textwidth}
        \includegraphics[width=\linewidth]{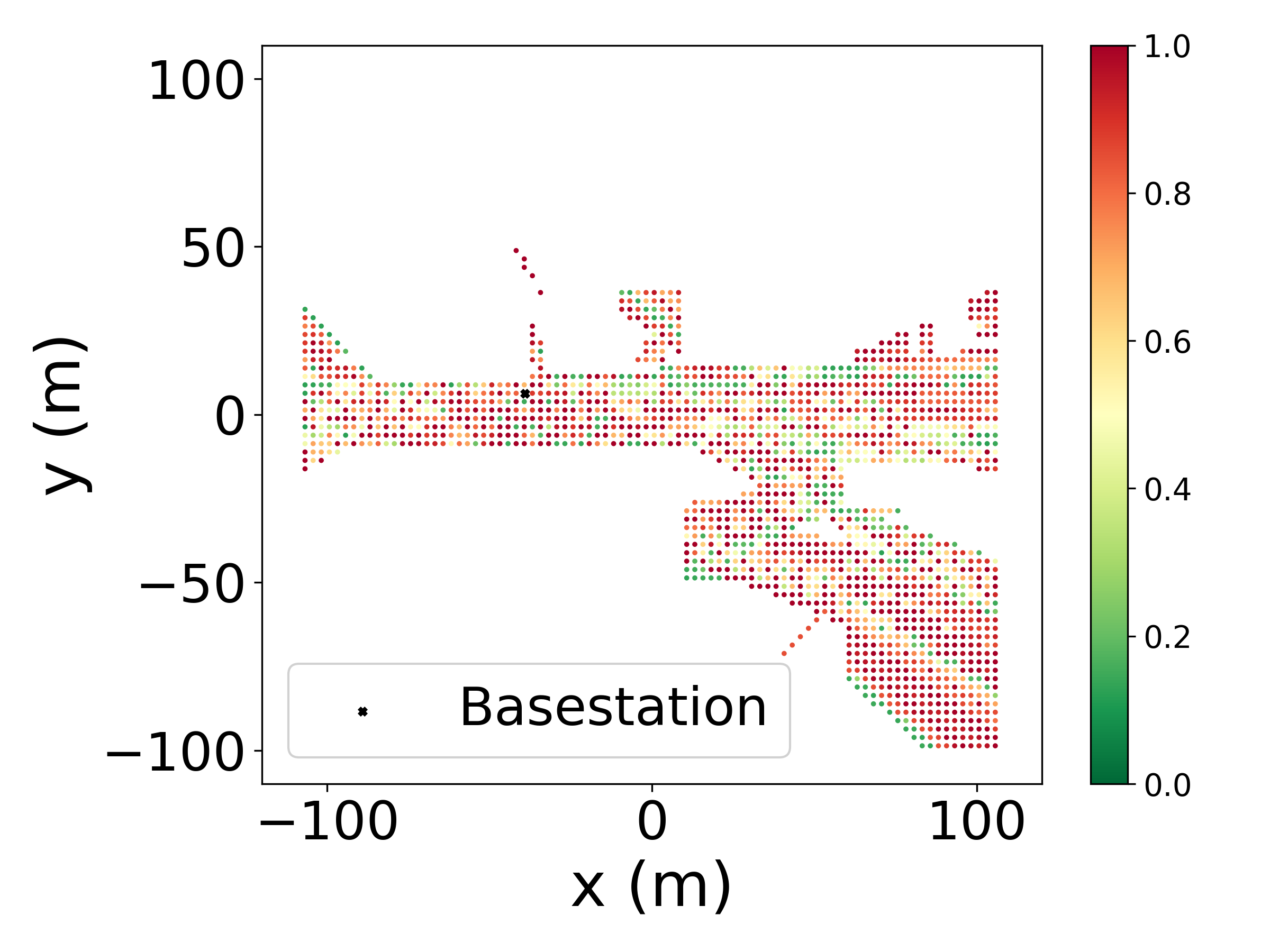}
    \end{minipage}
    \begin{minipage}{0.28\textwidth}
        \includegraphics[width=\linewidth]{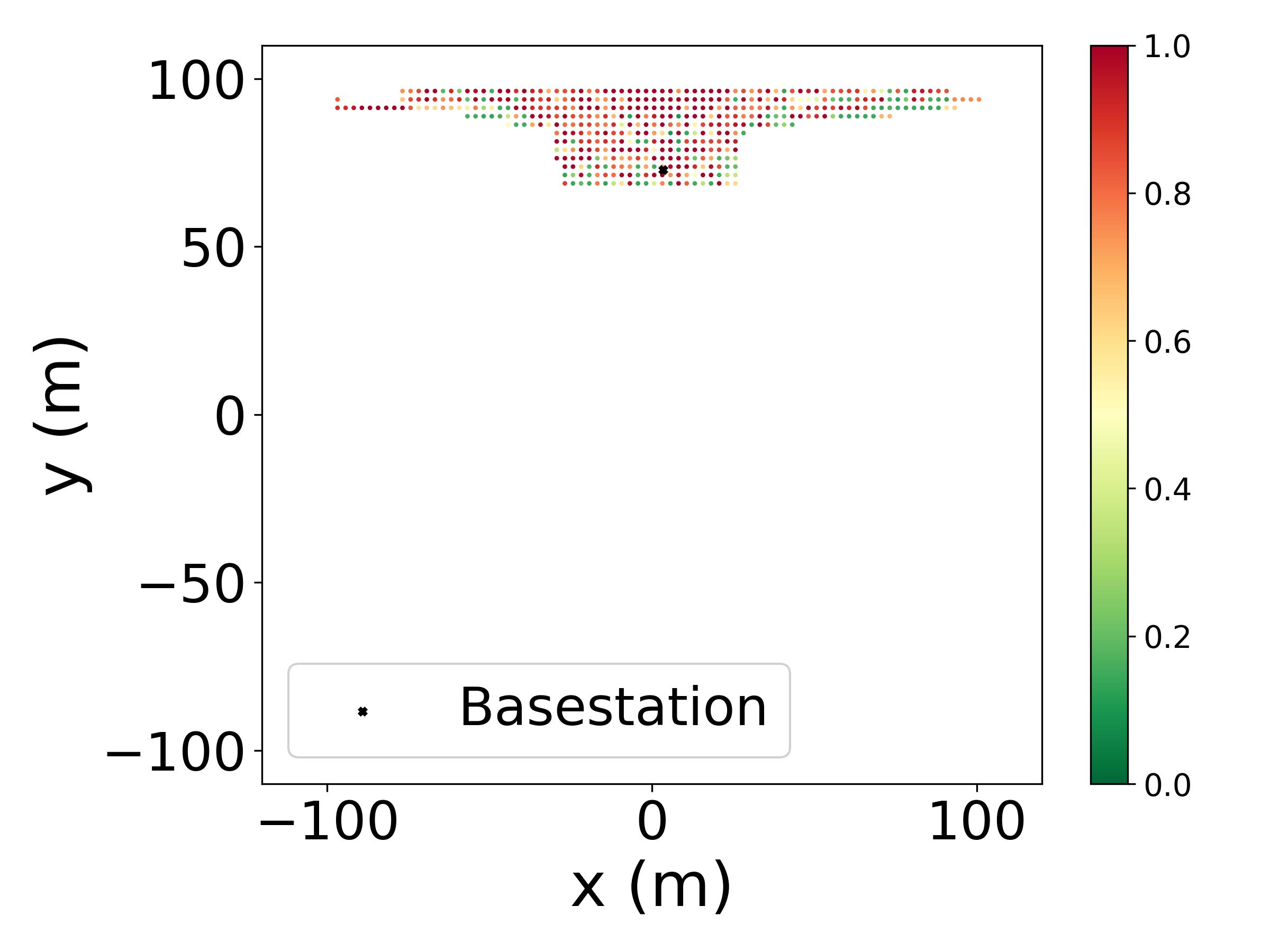}
    \end{minipage}

    \begin{minipage}{0.12\textwidth}
        \raggedright
        \textbf{IMED-MB}
    \end{minipage}
    \begin{minipage}{0.28\textwidth}
        \includegraphics[width=\linewidth]{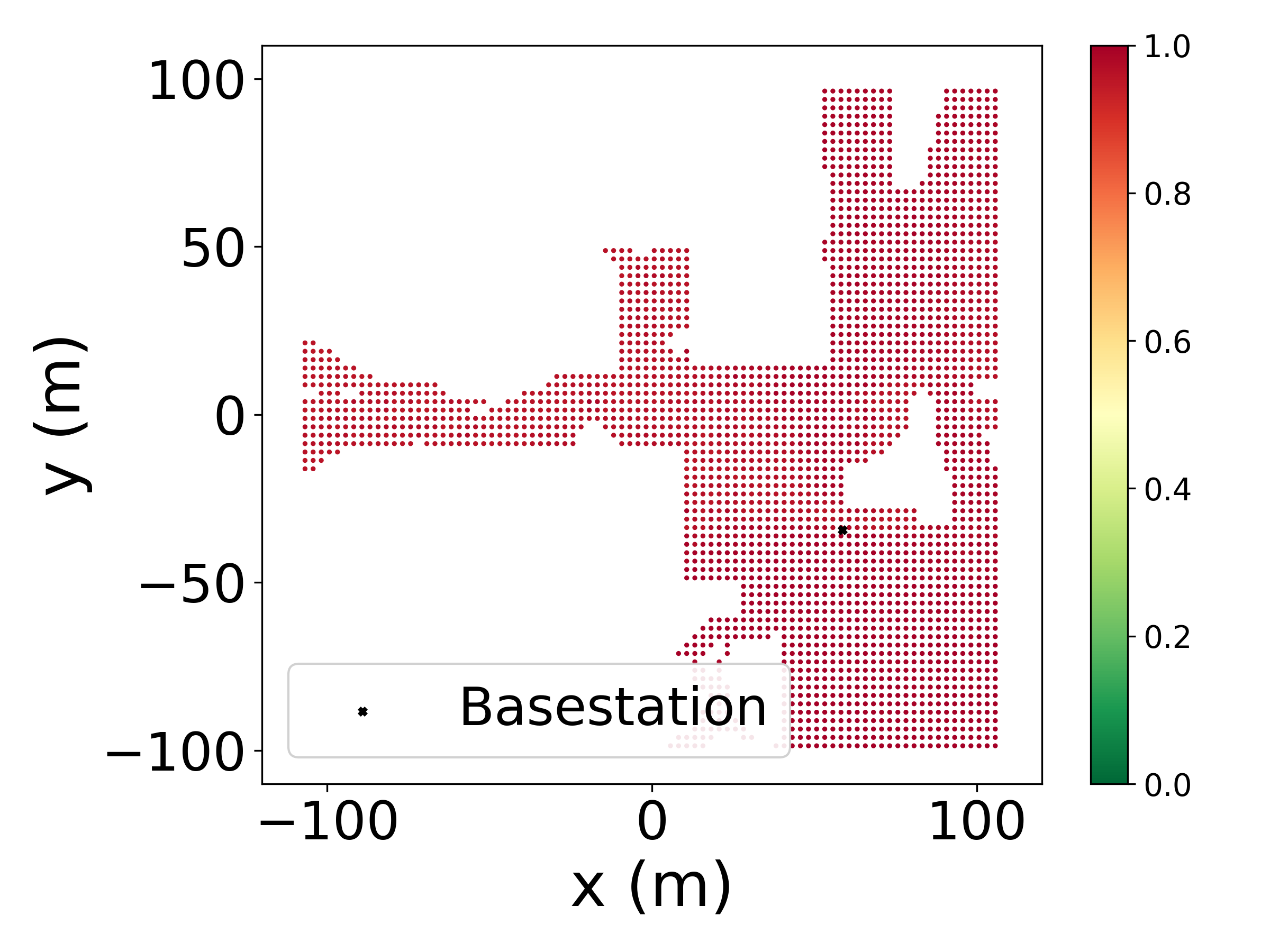}
    \end{minipage}
    \begin{minipage}{0.28\textwidth}
        \includegraphics[width=\linewidth]{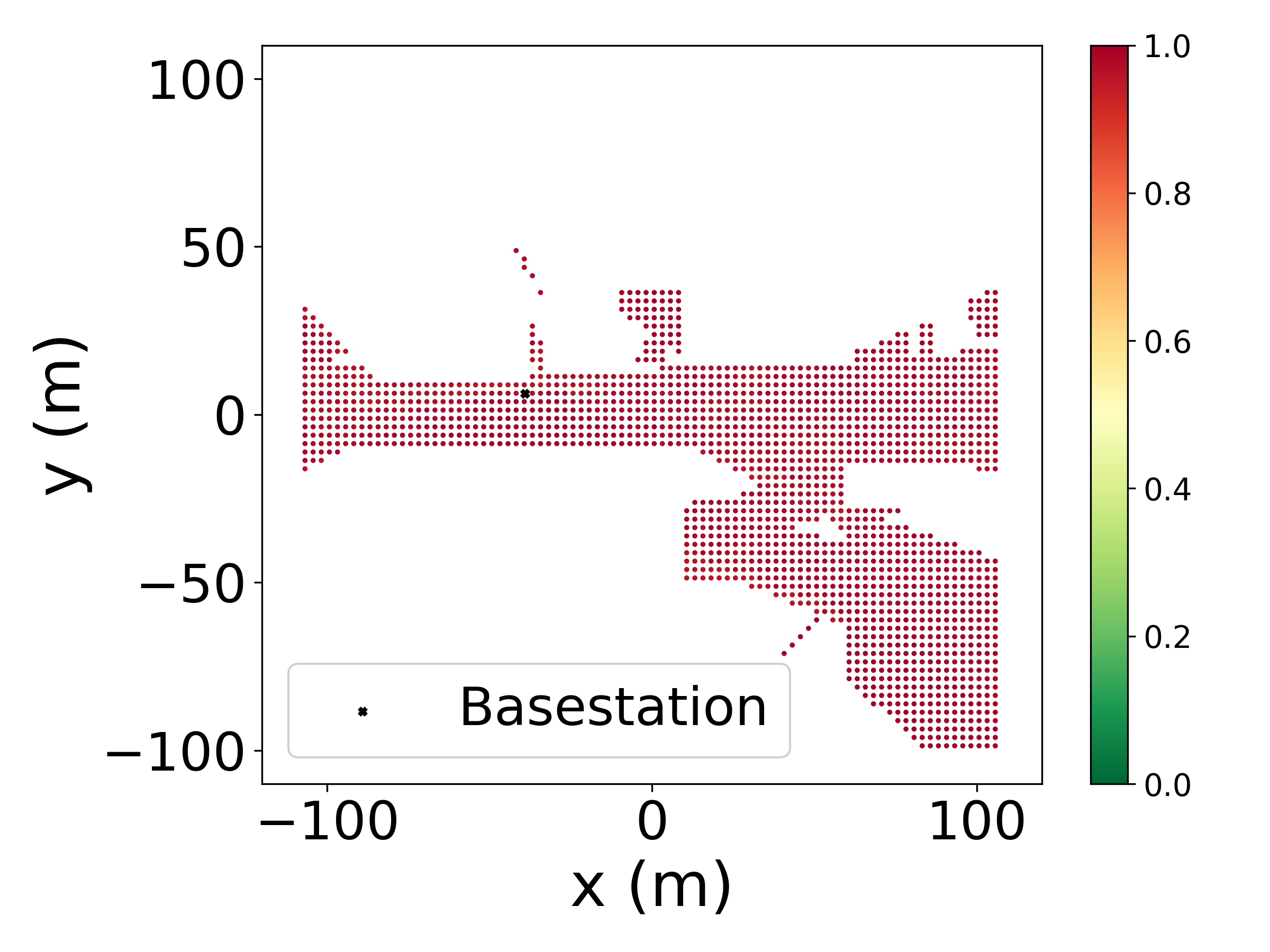}
    \end{minipage}
    \begin{minipage}{0.28\textwidth}
        \includegraphics[width=\linewidth]{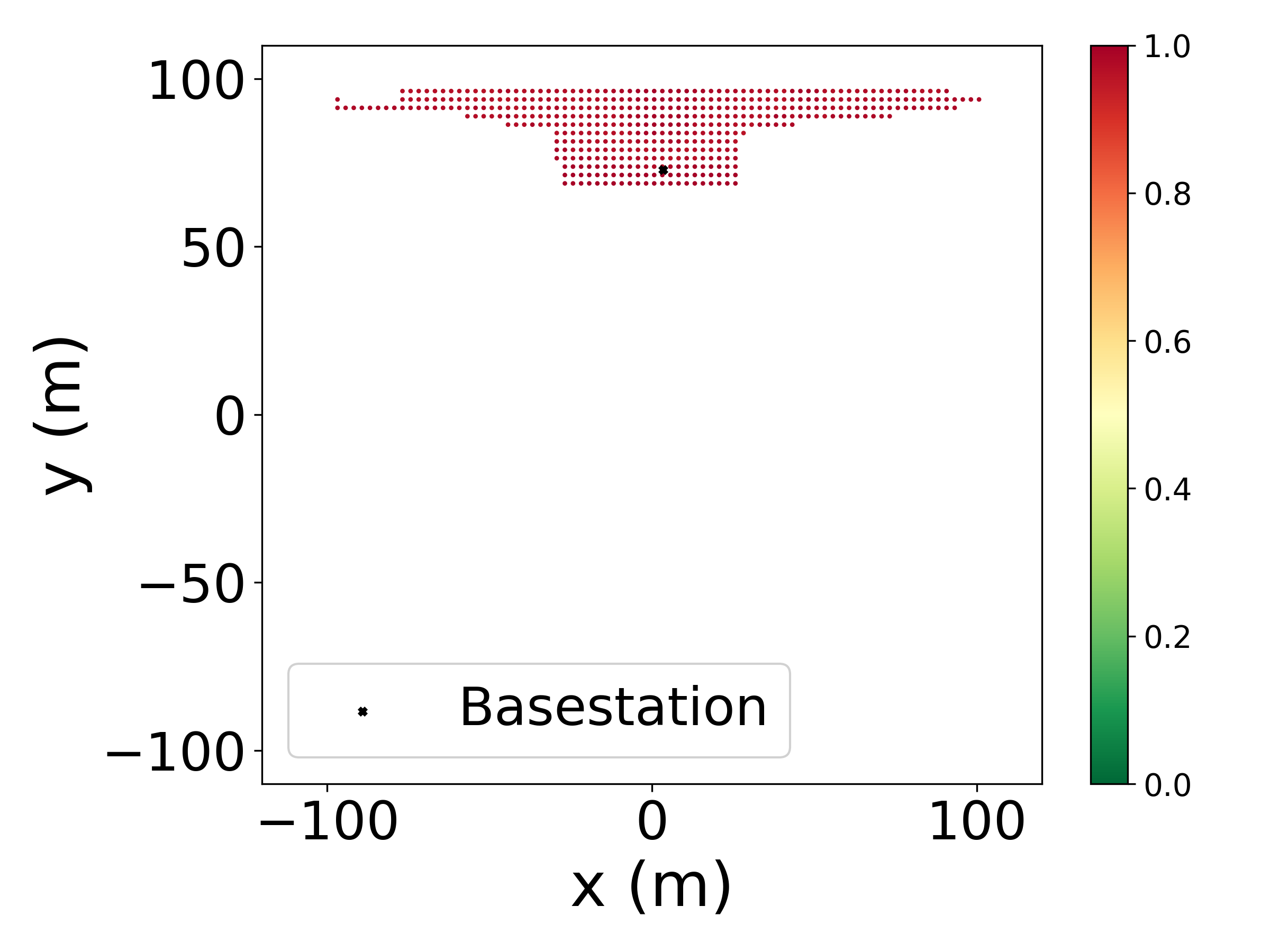}
    \end{minipage}

    \begin{minipage}{0.12\textwidth}
        \raggedright
        \textbf{UCB}
    \end{minipage}
    \begin{minipage}{0.28\textwidth}
        \includegraphics[width=\linewidth]{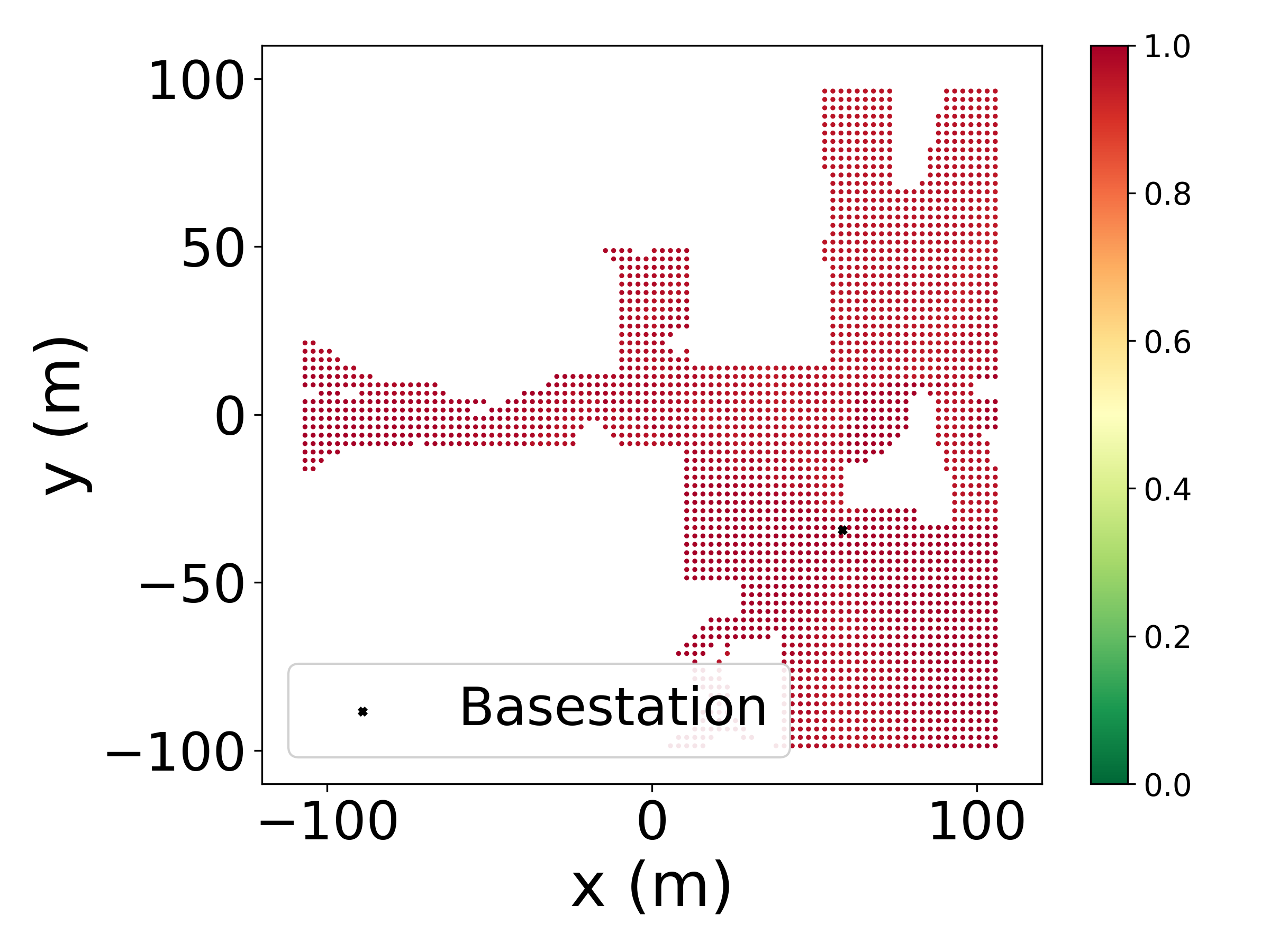}
    \end{minipage}
    \begin{minipage}{0.28\textwidth}
        \includegraphics[width=\linewidth]{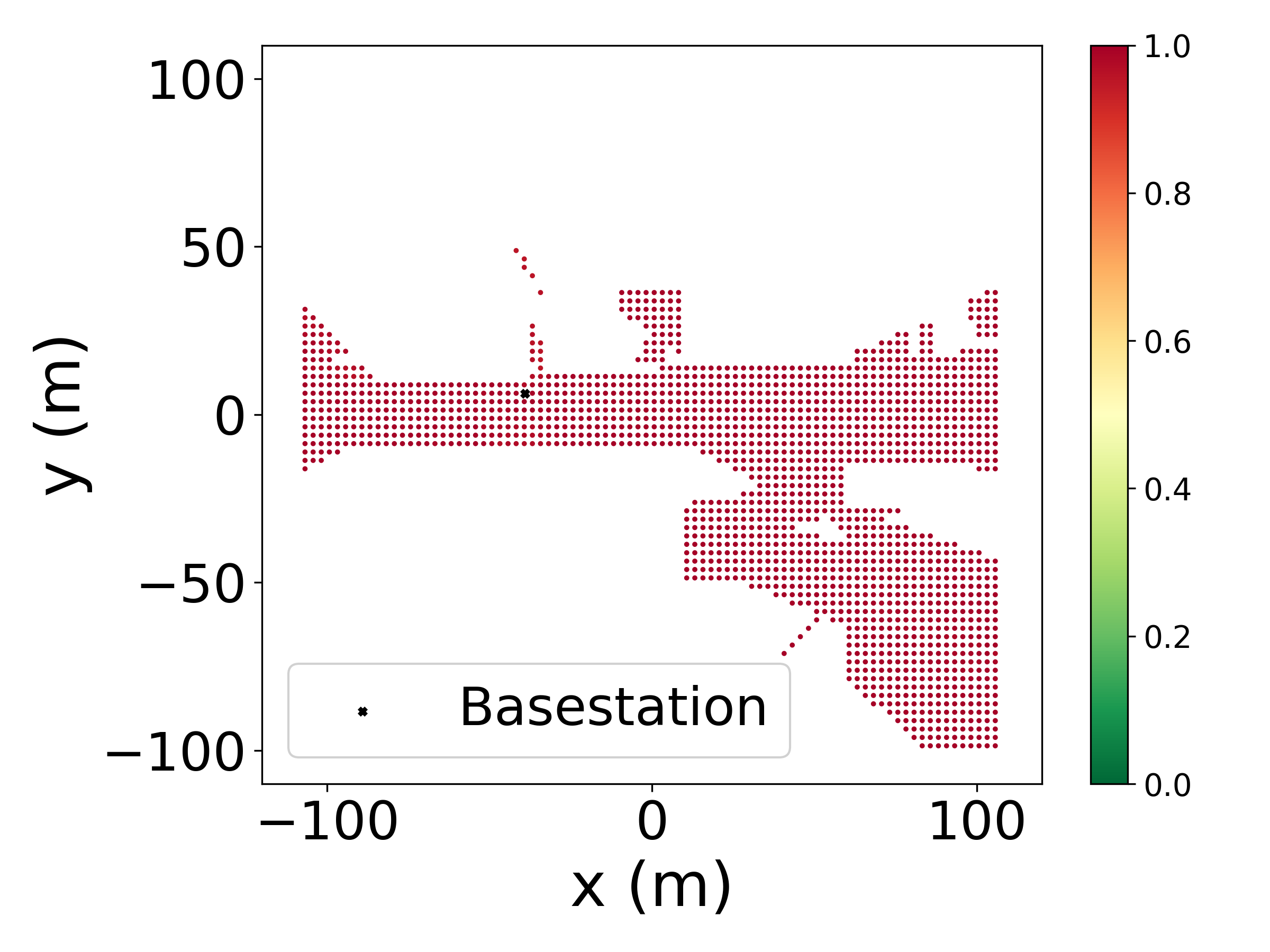}
    \end{minipage}
    \begin{minipage}{0.28\textwidth}
        \includegraphics[width=\linewidth]{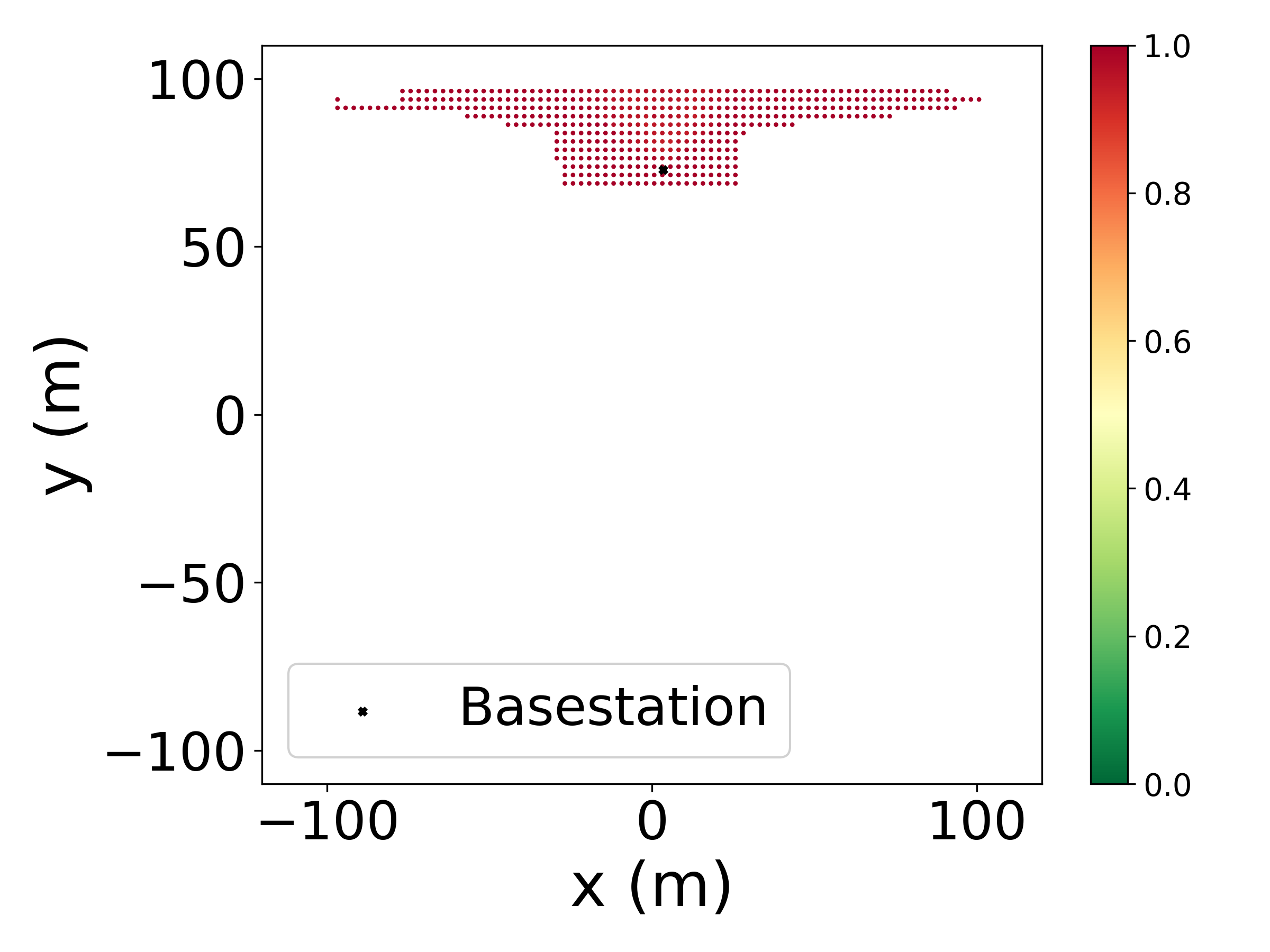}
    \end{minipage}

    \caption{Spatial heat maps of N-Regret$_T$ of different algorithms (rows) across Base Stations 1–3 (columns).
    }
    \label{fig:cumulative-regret-map-matrix}
\end{figure}

\subsection{DeepSense6G Static Setting}

We evaluate the performance of different algorithms on 12 static scenarios from the DeepSense 6G dataset. 
For each scenario, we run each algorithm with $10$ different length-200 intervals sampled randomly from the union of available beam RSS measurement sequences in each scenario, using different random seeds. 

The plots of normalized regret as a function of time step for all 12 scenarios averaged from 10 repetitions are shown in Figure~\ref{fig:static-ave-cum-regret-matrix}.
Recall from Eq.~\eqref{eqn:normalized-static-regret} that, a policy that chooses beams uniformly at random has a normalized regret of 1, while the policy that always chooses the optimal beam has a normalized regret of 0.
An optimal policy will have a normalized regret of 0.
From Figure~\ref{fig:static-ave-cum-regret-matrix}, we observe that \prgreedy with $k=2$ achieves the lowest normalized regret in most scenarios, closely followed by \pretc with $k=2$. \prgreedy with $k=1$ and \pretc with $k=1$ perform slightly worse than their $k=2$ counterparts, but still significantly better than the baseline algorithms, including UCB, BISECTION and IMED-MD. LSE does perform well in some scenarios but is not as good as \prgreedy and \pretc with $k=2$.
This demonstrates that our proposed algorithms consistently achieve lower average normalized regret compared to the baseline algorithms across all scenarios, highlighting their effectiveness in the real-world static settings.

\begin{figure}
    \centering
    \begin{minipage}{0.32\textwidth}
        \includegraphics[width=\linewidth]{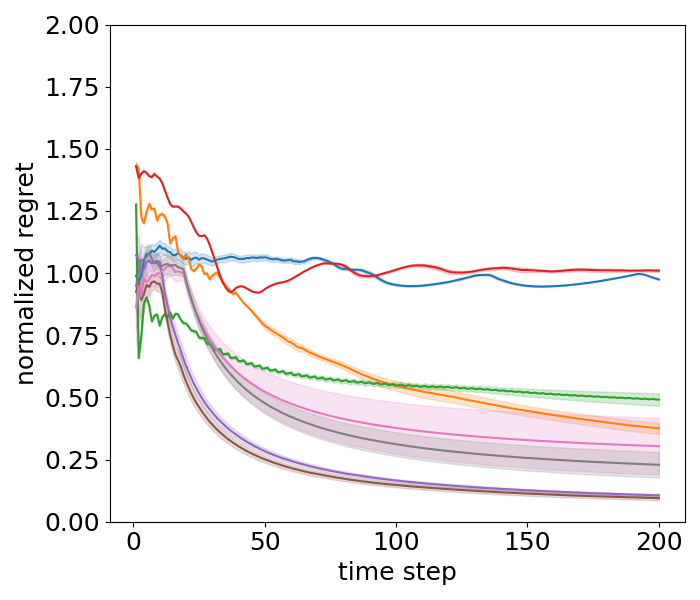}
        \caption*{Scenario17}
    \end{minipage}
    \begin{minipage}{0.32\textwidth}
        \includegraphics[width=\linewidth]{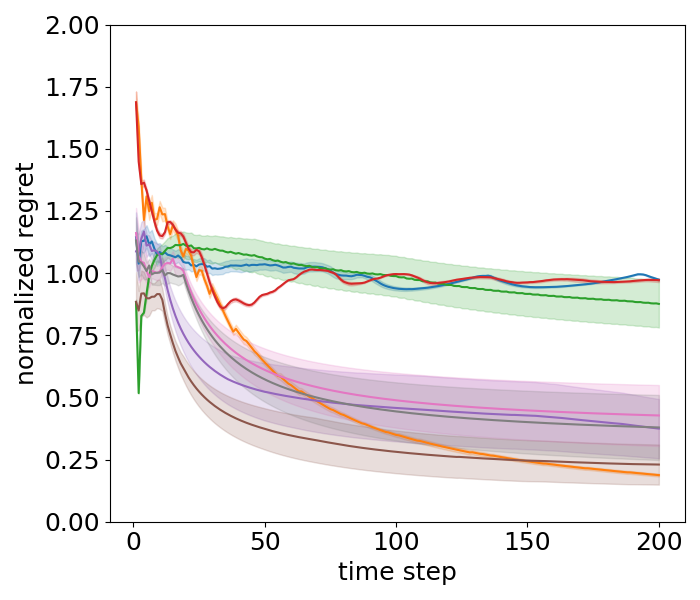}
        \caption*{Scenario18}
    \end{minipage}
    \begin{minipage}{0.32\textwidth}
        \includegraphics[width=\linewidth]{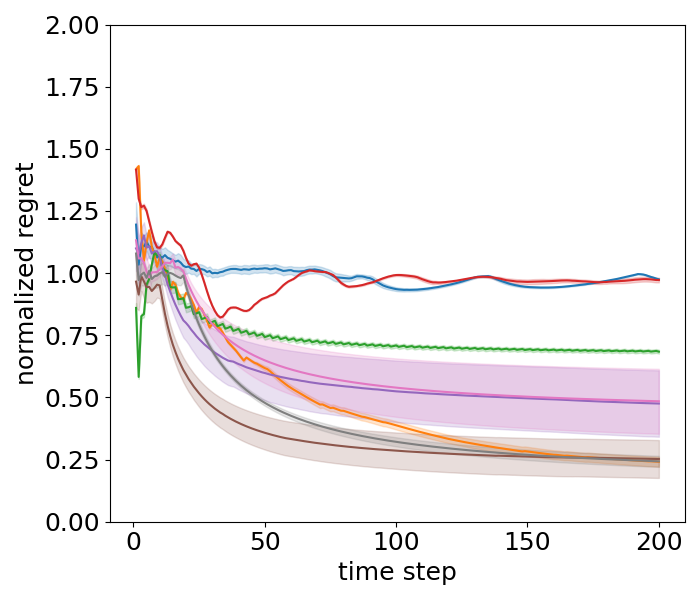}
        \caption*{Scenario19}
    \end{minipage}
    
    \begin{minipage}{0.32\textwidth}
        \includegraphics[width=\linewidth]{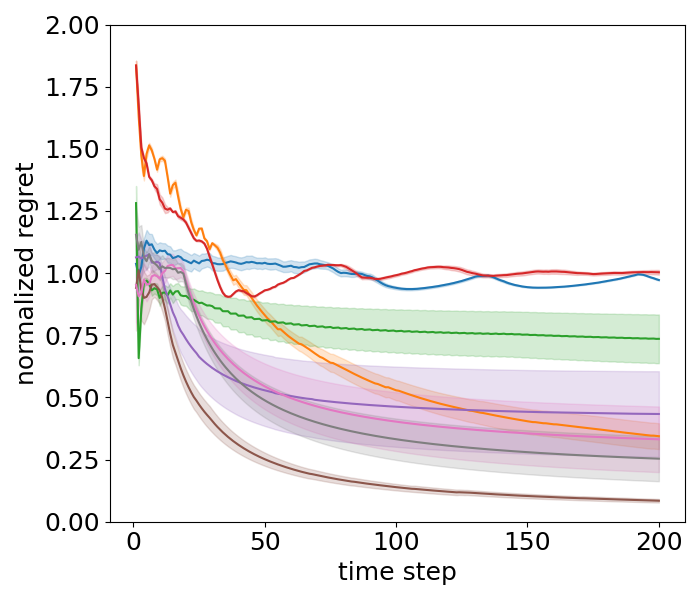}
        \caption*{Scenario20}
    \end{minipage}
    \begin{minipage}{0.32\textwidth}
        \includegraphics[width=\linewidth]{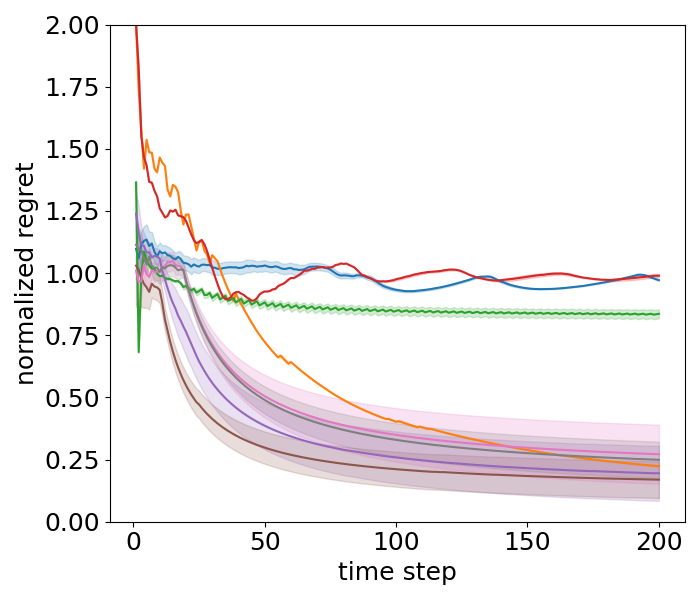}
        \caption*{Scenario21}
    \end{minipage}
    \begin{minipage}{0.32\textwidth}
        \includegraphics[width=\linewidth]{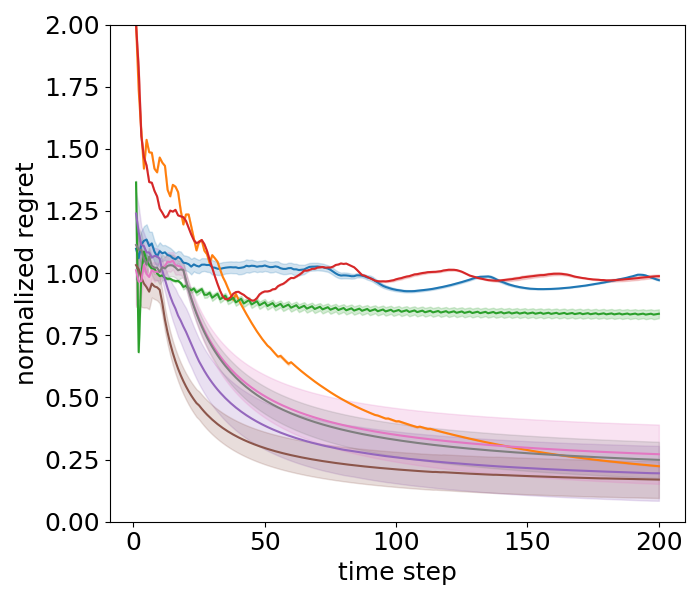}
        \caption*{Scenario22}
    \end{minipage}

    \begin{minipage}{0.32\textwidth}
        \includegraphics[width=\linewidth]{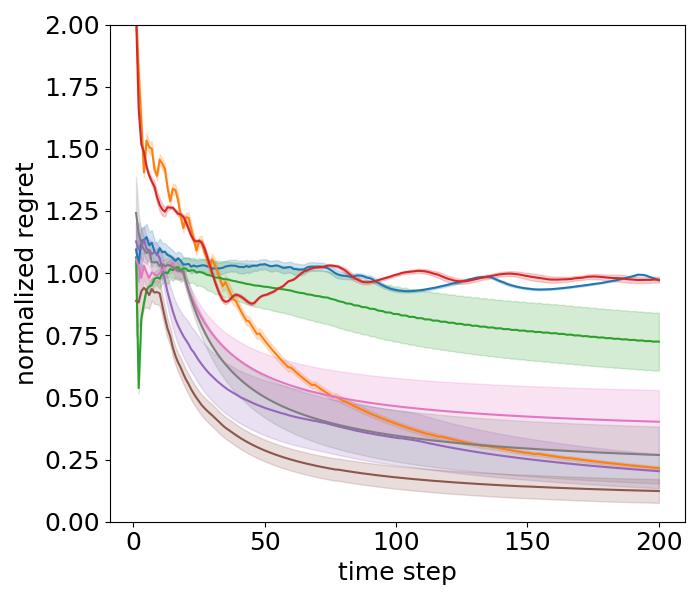}
        \caption*{Scenario24}
    \end{minipage}
    \begin{minipage}{0.32\textwidth}
        \includegraphics[width=\linewidth]{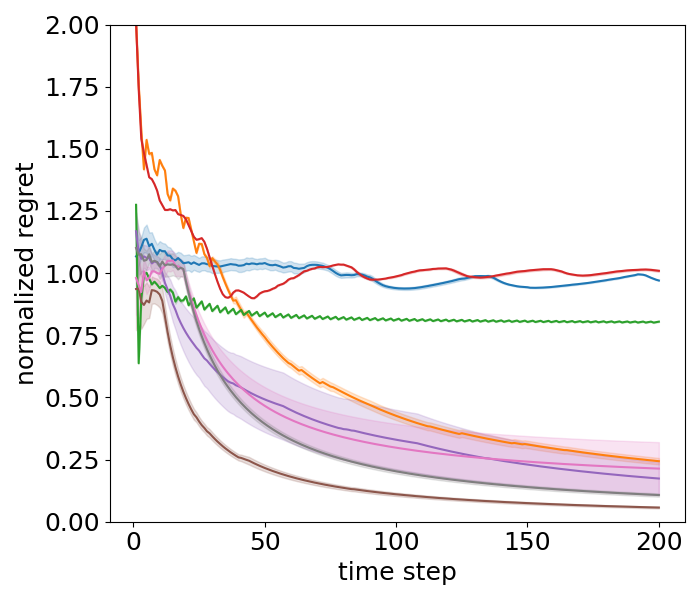}
        \caption*{Scenario25}
    \end{minipage}
    \begin{minipage}{0.32\textwidth}
        \includegraphics[width=\linewidth]{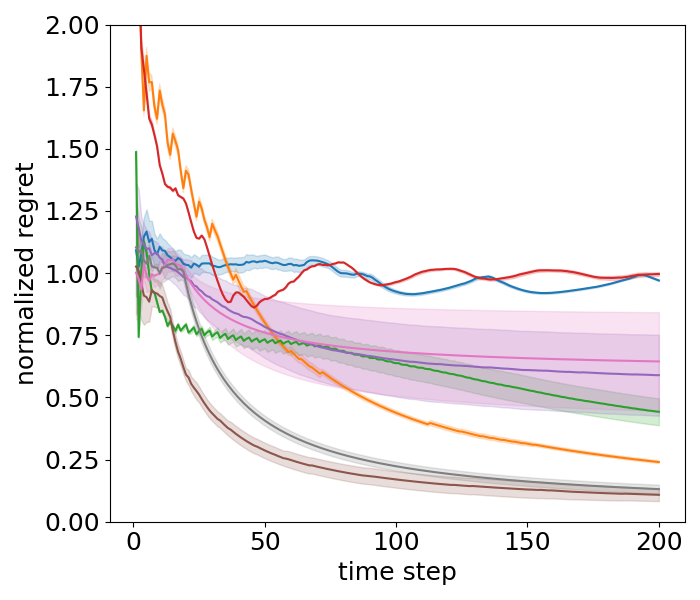}
        \caption*{Scenario26}
    \end{minipage}

    \begin{minipage}{0.32\textwidth}
        \includegraphics[width=\linewidth]{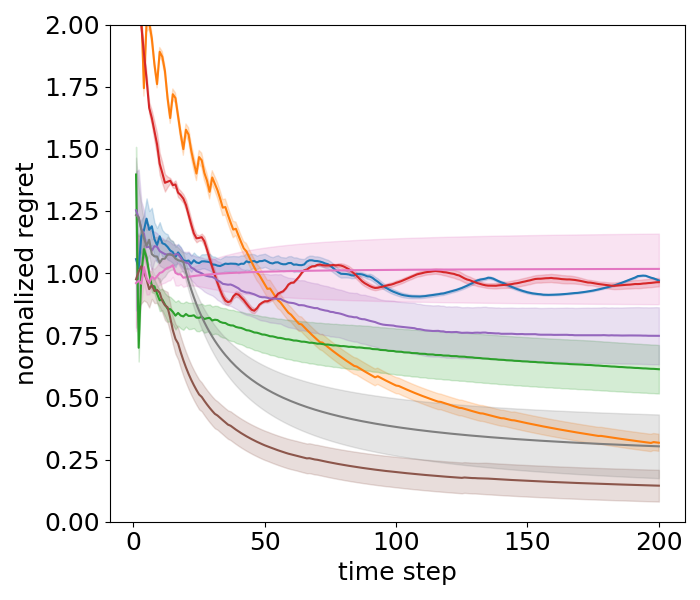}
        \caption*{Scenario27}
    \end{minipage}
    \begin{minipage}{0.32\textwidth}
        \includegraphics[width=\linewidth]{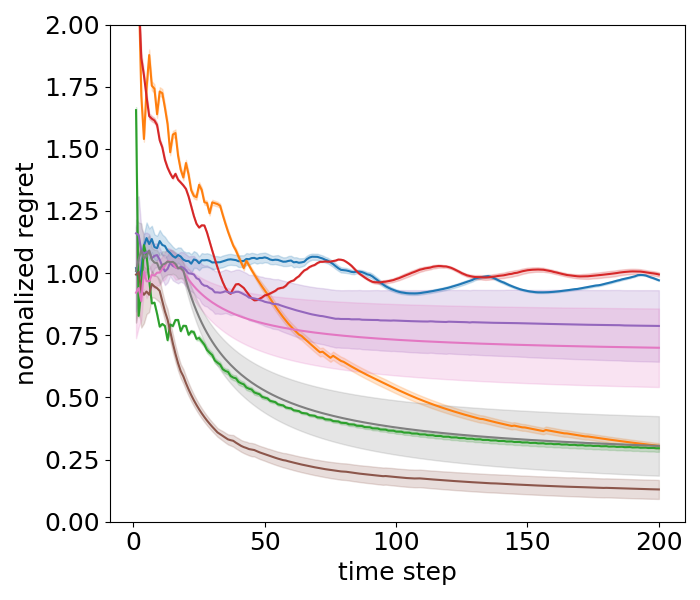}
        \caption*{Scenario28}
    \end{minipage}
    \begin{minipage}{0.32\textwidth}
        \includegraphics[width=\linewidth]{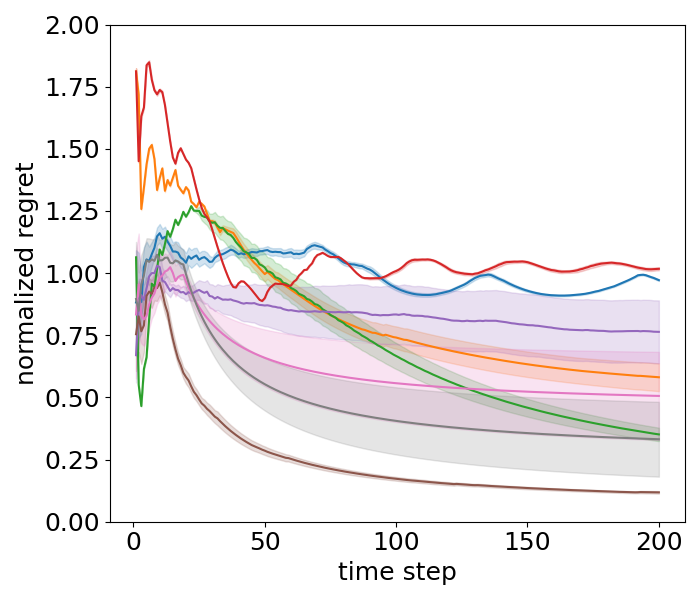}
        \caption*{Scenario29}
    \end{minipage}

    \begin{minipage}{0.98\textwidth}
        \centering
        \includegraphics[width=0.8\linewidth]{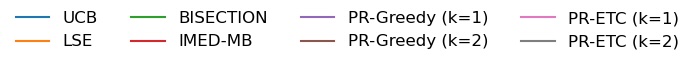}
    \end{minipage}

    \caption{Per step normalized regret plots on 12 static scenarios from the DeepSense 6G dataset. Each plot shows the performance of different algorithms over 200 time steps, averaged over 10 independent trials. The shaded area represents one standard deviation.
    }
    \label{fig:static-ave-cum-regret-matrix}
\end{figure}

\subsection{DeepSense6G Mobile Seting}

We evaluate our proposed algorithms in the dynamic scenario 9 of DeepSense6G~\citep{DeepSense}, which is an environment where the user is moving and the channel state is changing over time.
The original dataset contains 136 different sequences of data, each representing a sequentially recorded RSS information for all 64 beams and each sequence lasts for about 3-5 seconds.
In each sequence, the RSS of each beam is recorded every 0.1 second.
We select 5 different sequences of data to run our experiments, which are indexed 48, 62, 97, 115, and 124 in the dataset.
For each sequence, we interpolate the data to create a semi-synthetic dynamic environment by using a linear interpolation of the RSS's between two consecutive recorded time steps and expanding the number of steps of each sequence by $50$ times.
Therefore, each step in the semi-synthetic data corresponds to $0.1/50 = 0.002$ seconds in the original data.
We run our Periodic-PR-Greedy and Periodic-PR-ETC algorithms by setting the restart step $\tau$ to 50 with $k=1$ and $k=2$, and other baseline algorithms with the same restarting cycle, including UCB, LSE, BISECTION, and IMED-MB.
We repeat each experiment 10 times to evaluate the normalized dynamic regret defined in Eq.~\eqref{eqn:dynamic-normalized-regret} and plot the learning curves in \Cref{fig:normalized-regret-mobile}.

\ifCLASSOPTIONcaptionsoff
  \newpage
\fi

\begin{IEEEbiography}{Hao Qin}
Biography text here.
\end{IEEEbiography}

\begin{IEEEbiographynophoto}{Thang Duong}
Biography text here.
\end{IEEEbiographynophoto}

\begin{IEEEbiographynophoto}{Ming Li}
Biography text here.
\end{IEEEbiographynophoto}

\begin{IEEEbiographynophoto}{Chicheng Zhang}
Biography text here.
\end{IEEEbiographynophoto}

\end{document}